\documentclass{article} \usepackage{iclr2023_conference,times}

\usepackage[T1]{fontenc}    \usepackage{hyperref}       \usepackage{url}            \usepackage{booktabs}       \usepackage{amsfonts}       \usepackage{nicefrac}       \usepackage{xcolor}         

\usepackage{booktabs}
\usepackage{makecell}
\usepackage{graphicx}
\usepackage{caption}
\usepackage{subcaption}
\usepackage{amsmath,amssymb,bm,amsthm,bbm}
\usepackage{algorithm}
\usepackage[noend]{algpseudocode}
\usepackage{fancyhdr}
\usepackage{xspace}
\usepackage{paralist}

\usepackage{tikz}
\usetikzlibrary{arrows}
\usepackage{cleveref}
\usepackage{thmtools,thm-restate}
\usepackage{appendix}

\declaretheorem[numberwithin=section,style=plain]{theorem}
\declaretheorem[numberwithin=section,style=plain]{lemma}

\declaretheorem[numberwithin=section,style=plain]{fact}

\declaretheorem[numberwithin=section,style=definition]{definition}

\declaretheorem[numberwithin=section,style=remark]{remark}

\DeclareMathOperator{\Var}{Var}
\DeclareMathOperator{\dist}{dist}
\DeclareMathOperator{\poly}{poly}
\DeclareMathOperator{\cost}{cost}
\newcommand{\calH}{\mathcal{H}}
\newcommand{\RFF}{RFF\xspace}
\newcommand{\JL}{Johnson-Lindenstrauss\xspace}
\newcommand{\E}{\mathbb{E}}
\newcommand{\dif}{\mathop{}\!\mathrm{d}}

\let\epsilon\varepsilon

\graphicspath{{figs/}}

\title{On The Relative Error of Random Fourier Features for Preserving Kernel Distance}

\author{Kuan Cheng \\
    Peking University \\
    Email: \texttt{ckkcdh@pku.edu.cn}
    \And
    Shaofeng H.-C. Jiang \\
    Peking University \\
    Email: \texttt{shaofeng.jiang@pku.edu.cn}
    \And 
    Luojian Wei \\ 
    Peking University \\ 
    Email: \texttt{luojianwei@pku.edu.cn } \\
    \And
    Zhide Wei \\
    Peking University \\
    Email: \texttt{zhidewei@pku.edu.cn}
}

\iclrfinalcopy 

\begin{document}

\maketitle

\begin{abstract}
    The method of random Fourier features (\RFF), proposed in a seminal paper by Rahimi and Recht (NIPS'07),
    is a powerful technique to find approximate low-dimensional representations of points in (high-dimensional) kernel space, for shift-invariant kernels.
    While \RFF has been analyzed under various notions of error guarantee, the ability to preserve the kernel distance
    with \emph{relative} error is less understood.
    We show that for a significant range of kernels, including the well-known Laplacian kernels,
    \RFF cannot approximate the kernel distance with small relative error using low dimensions.
    We complement this by showing as long as the shift-invariant kernel is analytic,
    \RFF with $\poly(\epsilon^{-1} \log n)$ dimensions achieves $\epsilon$-relative error for pairwise kernel distance of $n$ points,
    and the dimension bound is improved to $\poly(\epsilon^{-1}\log k)$ for the specific application of kernel $k$-means.
    Finally, going beyond \RFF, we make the first step towards data-oblivious dimension-reduction for general shift-invariant kernels,
    and we obtain a similar $\poly(\epsilon^{-1} \log n)$ dimension bound for Laplacian kernels.
    We also validate the dimension-error tradeoff of our methods on simulated datasets, and they demonstrate superior performance compared with other popular methods including random-projection and Nystr\"{o}m methods.
\end{abstract}
 \section{Introduction}
\label{sec:intro}

We study the ability of the random Fourier features (\RFF) method~\citep{DBLP:conf/nips/RahimiR07}
for preserving the relative error for the \emph{kernel} distance.
Kernel method~\citep{DBLP:books/lib/ScholkopfS02} is a systematic way to map the input data into a (indefinitely) high dimensional
feature space to introduce richer structures, such as non-linearity.
In particular, for a set of $n$ data points $P$, a kernel function $K : P \times P \to \mathbb{R}$
implicitly defines a \emph{feature mapping} $\varphi : P \to \calH$ to a feature space $\calH$ which is a Hilbert space,
such that $\forall x, y, K(x, y) = \langle \varphi(x), \varphi(y) \rangle$.
Kernel methods have been successfully applied to classical machine learning~\citep{DBLP:conf/colt/BoserGV92,DBLP:journals/neco/ScholkopfSM98,DBLP:journals/tnn/Girolami02},
and it has been recently established that in a certain sense the behavior of neural networks
may be modeled as a kernel~\citep{DBLP:conf/nips/JacotHG18}.

Despite the superior power and wide applicability, the scalability has been an outstanding issue
of applying kernel methods. Specifically, the representation of data points in the feature space
is only implicit, and solving for the explicit representation, which is crucially required
in many algorithms, takes at least $\Omega(n^2)$ time in the worst case.
While for many problems such as kernel SVM,
it is possible to apply the so-called ``kernel trick'' to rewrite the objective in terms of $K(x, y)$,
the explicit representation is still often preferred, since the representation is compatible with a larger range of solvers/algorithms which allows better efficiency.

In a seminal work~\citep{DBLP:conf/nips/RahimiR07}, Rahimi and Recht addressed this issue by introducing
the method of random Fourier features (see Section~\ref{sec:prelim} for a detailed description),
to compute an explicit low-dimensional mapping $\varphi' : P \to \mathbb{R}^D$ (for $D \ll n$)
such that $\langle \varphi'(x), \varphi'(y) \rangle \approx \langle \varphi(x), \varphi(y) \rangle = K(x, y)$,
for shift-invariant kernels (i.e., there exists $K : P \to \mathbb{R}$, such that $K(x, y) = K(x - y)$)
which includes widely-used Gaussian kernels, Cauchy kernels and Laplacian kernels.

Towards understanding this fundamental method of \RFF,
a long line of research has focused on analyzing the tradeoff between the target dimension $D$ and the accuracy of approximating $K$ under certain error measures.
This includes additive error $\max_{x, y}|\langle \varphi(x), \varphi(y) \rangle - \langle \varphi'(x), \varphi'(y)\rangle|$~\citep{DBLP:conf/nips/RahimiR07,DBLP:conf/nips/SriperumbudurS15,DBLP:conf/uai/SutherlandS15},
spectral error~\citep{DBLP:conf/icml/AvronKMMVZ17,DBLP:conf/aistats/ChoromanskiRSST18,DBLP:conf/aistats/ZhangMDR19, DBLP:conf/nips/ErdelyiMM20, DBLP:conf/soda/AhleKKPVWZ20},
and the generalization error of several learning tasks such as kernel SVM and kernel ridge regression~\citep{DBLP:conf/icml/AvronKMMVZ17,DBLP:conf/nips/SunGT18,DBLP:journals/jmlr/LiTOS21}.
A more comprehensive overview of the study of \RFF can be found in a recent survey~\citep{liu2021random}.

We focus on analyzing \RFF with respect to the \emph{kernel distance}.
Here, the kernel distance of two data points $x, y$ is defined as their (Euclidean) distance in the feature space, i.e.,
\[
    \dist_\varphi(x, y) = \| \varphi(x) - \varphi(y) \|_2.
\]
While previous results on the additive error of $K(x, y)$~\citep{DBLP:conf/nips/RahimiR07,DBLP:conf/nips/SriperumbudurS15,DBLP:conf/uai/SutherlandS15,DBLP:conf/icml/AvronKMMVZ17}
readily implies additive error guarantee of $\dist_\varphi(x, y)$,
the \emph{relative} error guarantee is less understood.
As far as we know, \cite{DBLP:conf/alt/ChenP17} is the only previous work that gives a relative error bound for kernel distance,
but unfortunately, only Gaussian kernel is studied in that work, and whether or not the kernel distance for other shift-invariant kernels is preserved by \RFF, is still largely open.

In spirit, this multiplicative error guarantee of \RFF, if indeed exists, makes it a kernelized version of
\JL Lemma~\citep{johnson1984extensions} which is one of the central result in dimension reduction.
This guarantee is also very useful for downstream applications, since one can combine it directly with
classical geometric algorithms such as k-means++~\citep{DBLP:conf/soda/ArthurV07}, locality sensitive hashing~\citep{DBLP:conf/stoc/IndykM98} and fast geometric matching algorithms~\citep{DBLP:journals/jacm/RaghvendraA20}
to obtain very efficient algorithms for kernelized $k$-means clustering, nearest neighbor search, matching and many more.

\subsection{Our Contributions}
Our main results are characterizations of the kernel functions on which
\RFF preserves the kernel distance with small relative error using $\poly \log$ target dimensions.
Furthermore, we also explore how to obtain data-oblivious dimension-reduction for kernels that cannot be handled by \RFF.

As mentioned, it has been shown that \RFF with small dimension preserves the \emph{additive} error of kernel distance for
all shift-invariant kernels~\citep{DBLP:conf/nips/RahimiR07,DBLP:conf/nips/SriperumbudurS15,DBLP:conf/uai/SutherlandS15}.
In addition, it has been shown in~\cite{DBLP:conf/alt/ChenP17} that \RFF indeed preserves the relative error of kernel distance for Gaussian kernels (which is shift-invariant).
Hence, by analogue to the additive case and as informally claimed in~\cite{DBLP:conf/alt/ChenP17},
one might be tempted to expect that \RFF also preserves the relative error for general shift-invariant kernels as well.

\paragraph{Lower Bounds.}
Surprisingly, we show that this is \emph{not} the case. In particular, we show that for a wide range of kernels,
including the well-known Laplacian kernels,
it requires unbounded target dimension for \RFF to preserve the kernel distance with constant multiplicative error.
We state the result for a Laplacian kernel in the following, and the full statement of the general conditions of kernels can be found in Theorem~\ref{thm:lb}.
In fact, what we show is a quantitatively stronger result, that if the input is \emph{$(\Delta, \rho)$-bounded},
then preserving any constant multiplicative error
requires $\Omega(\poly(\Delta / \rho))$ target dimension.
Here, a point $x \in \mathbb{R}^d$ is $(\Delta, \rho)$-bounded if
$\|x \|_\infty \leq \Delta$ and $\min_{i : x_i \neq 0} |x_i| \geq \rho$,
i.e., the magnitude is (upper) bounded by $\Delta$ and the resolution is (lower) bounded by $\rho$.

\begin{theorem}[Lower bound; see \Cref{remark:lap}]
\label{thm:main_lb}
    For every $\Delta \geq \rho > 0$ and some feature mapping $\varphi : \mathbb{R}^d \to \calH$ of a Laplacian kernel $K(x, y) = \exp(-\|x - y\|_1)$,
    if for every $x, y \in \mathbb{R}^d$ that are $(\Delta, \rho)$-bounded,
    the \RFF mapping $\pi$ for $K$ with target dimension $D$ satisfies
    $\dist_\pi(x, y) \in (1 \pm \epsilon) \cdot \dist_\varphi(x, y)$ with constant probability,
    then $D \geq \Omega( \frac{1}{\epsilon^2}\frac{\Delta}{\rho})$. This holds even when $d = 1$.
\end{theorem}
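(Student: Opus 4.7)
The plan is to exhibit a single $(\Delta, \rho)$-bounded pair $x, y \in \mathbb{R}$ whose true squared kernel distance is tiny compared to the per-sample variance of the \RFF estimator, and then invoke anti-concentration. For the Laplacian kernel $K(u) = e^{-|u|}$, \RFF samples $\omega_1, \dots, \omega_D$ i.i.d.\ from the standard Cauchy density $\tfrac{1}{\pi(1+\omega^2)}$ (the spectral measure of $K$), and the squared \RFF distance is $\dist_\pi^2(x,y) = \tfrac{2}{D}\sum_{i=1}^D \bigl(1 - \cos(\omega_i(x-y))\bigr)$, with expectation $2(1 - e^{-|x-y|}) = \dist_\varphi^2(x,y)$. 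I would take $x = \rho$ and $y = \rho + z$ with $z := \rho/\Delta$; since $\Delta \geq \rho$ (and WLOG $\Delta \geq 2\rho$, otherwise the bound $\Omega(\Delta/(\epsilon^2\rho))$ is already of constant order) both points lie in $[\rho,\Delta]$, so the pair is $(\Delta,\rho)$-bounded. The true squared distance equals $2(1-e^{-z}) = \Theta(z) = \Theta(\rho/\Delta)$.

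\textbf{Moments of a single feature.} Set $X := 1 - \cos(\omega z)$ with $\omega$ standard Cauchy. Using $\E[\cos(\omega z)] = e^{-z}$ and $\cos^2\theta = (1+\cos 2\theta)/2$, one computes $\E[X] = 1 - e^{-z}$ and $\E[X^2] = \tfrac{3}{2} - 2 e^{-z} + \tfrac{1}{2} e^{-2z}$. Taylor expanding for small $z$ shows both quantities are $\Theta(z)$, hence $\mathrm{Var}(X) = \Theta(z)$ while $\E[X]^2 = \Theta(z^2)$. Consequently, averaging over $D$ samples, $\dist_\pi^2(x,y)$ has mean $\Theta(z)$ and variance $\Theta(z/D)$, so its coefficient of variation is $\Theta(1/\sqrt{Dz})$, which blows up precisely when $Dz$ is small.

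\textbf{Anti-concentration and conclusion.} The relative-error guarantee $\dist_\pi \in (1\pm\epsilon)\dist_\varphi$ requires $|\dist_\pi^2 - 2z| = O(\epsilon z)$, i.e.\ $\sum_i X_i$ must lie within $O(\epsilon D z)$ of its mean $D z$. I would split into two regimes. When $Dz \gg 1$, each $X_i$ is bounded in $[0,2]$ with non-degenerate variance, so by the Berry--Esseen CLT the rescaled sum is approximately Gaussian with variance $\Theta(Dz)$; the tail event $\{|\sum_i X_i - D z| \geq c\sqrt{Dz}\}$ therefore has probability $\Omega(1)$ for an absolute constant $c$. Forcing this threshold to exceed $\epsilon D z$ is exactly $D \leq O(1/(\epsilon^2 z))$. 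In the complementary regime $Dz = O(1)$, the event $\{\forall i : |\omega_i| < 1/z\}$ has probability $(1-\Theta(z))^D = \Omega(1)$; conditional on it, using $\E[\omega^2 \mid |\omega|<1/z] = \Theta(1/z)$ together with the Taylor expansion $1-\cos(\omega_i z) = \Theta((\omega_i z)^2)$ shows $\dist_\pi^2 \leq \tfrac{2}{\pi}z + o(z)$, a constant-factor underestimate of the true value $2z$ that already violates the $(1\pm\epsilon)$ guarantee whenever $\epsilon$ is below a fixed absolute constant. Substituting $z = \rho/\Delta$ yields $D \geq \Omega(\Delta/(\epsilon^2\rho))$.

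\textbf{Main obstacle.} The difficult part is producing a genuine constant-probability \emph{lower} bound on failure rather than merely bounding the variance: Chebyshev gives only the wrong direction. The stitching between the CLT regime ($Dz \gg 1$) and the degenerate regime ($Dz = O(1)$) must be arranged so that together they cover every $D$ up to the target $\Omega(\Delta/(\epsilon^2\rho))$, and in particular the Berry--Esseen error term must be dominated by the tail probability actually used. An alternative that avoids the case split is a Paley--Zygmund second-moment inequality applied to $(\dist_\pi^2 - \E\dist_\pi^2)^2$ using the explicit second-moment formula above, giving a uniform constant-probability deviation bound across the entire relevant range of $D$ at the cost of slightly worse absolute constants.
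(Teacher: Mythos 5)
Your core argument is the same as the paper's: work with the squared \RFF distance, observe that for a Laplacian kernel at small separation $z$ the per-sample variable $1-\cos(\omega z)$ has mean $\Theta(z)$ but variance $\Theta(z)$, and turn this variance-to-squared-mean ratio (the paper's quantity $s_K(z)\approx 1/z$ in Theorem~\ref{thm:lb} and Remark~\ref{remark:lap}) into a constant-probability relative deviation of order $1/\sqrt{Dz}$ via Berry--Esseen anti-concentration, forcing $D\gtrsim 1/(\epsilon^2 z)$. Where you genuinely add something is the two-regime treatment: the paper's Lemma on anti-concentration is stated with an unquantified $-O(D^{-1/2})$ whose hidden constant is really the normalized third moment, i.e.\ $O(1/\sqrt{Dz})$ for this instance, so the paper's bound is only informative when $Dz$ is large; your separate degenerate regime ($Dz=O(1)$), conditioning on $\{\forall i:|\omega_i|<1/z\}$ and showing the conditional mean of $\dist_\pi^2$ is only $\approx \tfrac{2z}{\pi}\ll 2z$, closes exactly the range the paper glosses over. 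Two small repairs there: the conditional bound controls an expectation, so you need one more line (Markov, or a conditional variance bound) to conclude that $\dist_\pi^2<(1-\epsilon)^2\dist_\varphi^2$ with constant conditional probability, and the Paley--Zygmund alternative does \emph{not} give a uniform constant probability when $Dz\to 0$ (the fourth-moment/variance$^2$ ratio blows up like $1/(Dz)$), so the regime split is not really avoidable.

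The one substantive discrepancy is the choice of hard pair. The paper takes the \emph{difference} $x-y$ to be $(\Delta,\rho)$-bounded (its proof of Theorem~\ref{thm:lb} sets $y=0$ and $x$ equal to the extremal difference), so the smallest admissible separation is $\rho$, giving $D\ge\Omega(1/(\epsilon^2\rho))$, and the $\Delta/\rho$ form of Theorem~\ref{thm:main_lb} comes from the normalization $\Delta=1$ in Remark~\ref{remark:lap}. You instead take two nonzero points at distance $z=\rho/\Delta<\rho$; this matches the literal statement (each point individually bounded) but not the paper's reading, and note also that under that literal reading your containment check can fail: $y=\rho+\rho/\Delta\le\Delta$ is false, e.g., for $\rho=0.1$, $\Delta=0.2$ even though $\Delta\ge 2\rho$, so you would need an extra guard or a rescaling. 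Relatedly, the remark that for $\Delta<2\rho$ "the bound is already of constant order" is off: the claim there is $\Omega(\epsilon^{-2})$, which still needs (and gets, from the same Berry--Esseen step at constant separation where $s_K=\Theta(1)$) a proof. With those points tidied up, your argument is a correct and in places more careful rendering of the paper's route.
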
 

\paragraph{Upper Bounds.}
Complementing the lower bound, we show that \RFF can indeed preserve the kernel distance within $1 \pm \epsilon$ error using $\poly(\epsilon^{-1} \log n)$ target dimensions with high probability,
as long as the kernel function is shift-invariant and analytic, which includes Gaussian kernels and Cauchy kernels.
Our target dimension nearly matches (up to the degree of polynomial of parameters) that is achievable by the \JL transform~\citep{johnson1984extensions},
which is shown to be tight~\citep{DBLP:conf/focs/LarsenN17}.
This upper bound also greatly generalizes the result of~\cite{DBLP:conf/alt/ChenP17} which only works for Gaussian kernels (see \Cref{sec:remark_cp17} for a detailed comparison).

\begin{restatable}[Upper bound]{theorem}{thmmainub}
\label{thm:main_ub}
Let  $K: \mathbb{R}^d \times \mathbb{R}^d \rightarrow \mathbb{R} $ be a kernel function which is shift-invariant and analytic at the origin, 
with feature mapping $\varphi : \mathbb{R}^d \to \calH$ for some feature space $\mathcal{H}$.
For every  $ 0 < \delta  \leq  \epsilon  \leq 2^{-16}$, every $d, D\in \mathbb{N}$, $D \geq \max \{\Theta(\epsilon^{-1}\log^3(1/{\delta})), \Theta(\epsilon^{-2} \log(1/{\delta})) \}$, if  
 $\pi : \mathbb{R}^d \to \mathbb{R}^D$ is an \RFF mapping for $K$ with target dimension $D$,
then for every $x, y \in \mathbb{R}^d$,
\[
    \Pr[ |\dist_\pi(x, y) - \dist_\varphi(x, y)| \leq   \epsilon  \cdot \dist_\varphi(x, y) ] \geq 1 - \delta.
\]
\end{restatable}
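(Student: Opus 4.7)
My plan is to apply Bochner's theorem to realize the \RFF mapping as an empirical average over iid draws from the dual Fourier measure $p$ of $K$, and then to establish multiplicative concentration of that average around its mean, using analyticity of $K$ to control higher-order moments. Concretely, by Bochner $K(z) = \int \cos(w^\top z)\,p(w)\,\mathrm{d}w$ for a probability density $p$ (normalizing $K(0)=1$), and the \RFF map draws $w_1,\ldots,w_D$ iid from $p$ to give $\dist_\pi(x,y)^2 = \tfrac{2}{D}\sum_i Y_i$ and $\dist_\varphi(x,y)^2 = 2\mu$, where $Y_i := 1-\cos(w_i^\top z)\in[0,2]$, $z := x-y$, and $\mu := 1-K(z) = \E[Y_i]$. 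Up to rescaling $\epsilon$ by a constant, it therefore suffices to prove $\hat\mu := D^{-1}\sum_i Y_i \in (1\pm O(\epsilon))\mu$ with probability $\ge 1-\delta$; taking square roots then yields the claimed distance bound.

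\textbf{Analyticity gives tight moment control.} Analyticity of $K$ at $0$ with some positive radius $r$ forces $p$ to have exponentially decaying tails (via a Paley--Wiener-type contour shift: $\E_w[\cosh(tu^\top w)]<\infty$ for $|t|<r$ and all unit $u$), so $w^\top z$ is sub-exponential with scale $O_K(\|z\|)$, and in particular $\E[(w^\top z)^{2k}] \le (C_K k)^{2k}(\E[(w^\top z)^2])^k$ for all $z$ and $k$. A second ingredient is the variance bound $\Var(Y_i) = O(\mu^2)$ whenever $\mu$ is below a small constant: expanding $\cos$ around $0$ and using $K(z)=K(-z)$ gives $\E[Y_i^2] = \tfrac32-2K(z)+\tfrac12 K(2z) = O((1-K(z))^2) = O(\mu^2)$, much sharper than the crude $\E[Y_i^2]\le 2\mu$ that holds for all $\mu$.

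\textbf{Concentration in two regimes.} When $\mu$ is at least an absolute constant, the bounds $|Y_i|\le 2$ and $\Var(Y_i)\le 2\mu = O(\mu^2)$ plus Bernstein's inequality immediately yield $(1\pm\epsilon)$-multiplicative concentration with $D = O(\epsilon^{-2}\log(1/\delta))$. The hard regime is small $\mu$, where a direct Bernstein bound requires $D = \Omega(1/(\epsilon\mu))$ because of the $\|Y_i\|_\infty\le 2$ term. I would fix this by truncation: set $T^2 := C_K\mu\log^2(D/\delta)$, so sub-exponentiality of $w^\top z$ gives $\Pr[\max_i|w_i^\top z|>T]\le \delta/2$, and on the complementary event $Y_i\le T^2/2 = O(\mu\log^2(D/\delta))$. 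Applying Bernstein with this refined uniform bound and $\Var(Y_i)=O(\mu^2)$ then produces $|\hat\mu-\mu|\le\epsilon\mu$ with probability $\ge 1-\delta$ provided $D = \Omega(\epsilon^{-2}\log(1/\delta) + \epsilon^{-1}\log^3(1/\delta))$, matching the theorem (the $\log^3$ comes from multiplying the $\log^2$ inside $T^2$ by the $\log$ from Bernstein's failure probability).

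\textbf{Main obstacle.} The crux is the small-$\mu$ case: $Y_i$ can be as large as $2$ while $\mu$ is arbitrarily close to $0$, so any multiplicative concentration argument that relies only on $\|Y_i\|_\infty$ must blow up with $1/\mu$. Replacing $\|Y_i\|_\infty\le 2$ by $\|Y_i\|_\infty = O(\mu\log^2(1/\delta))$ via the analyticity-driven truncation described above is precisely what restores polylogarithmic dimension, and the minor technicalities (controlling the bias introduced by truncation, and handling degeneracies of $\E[ww^\top]$) are absorbed into the constants. Notably, \Cref{thm:main_lb} shows that this step genuinely fails for kernels whose Fourier transforms decay only polynomially (such as the Laplacian), so the analyticity hypothesis cannot be dropped.
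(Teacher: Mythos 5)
Your proposal is correct in outline and shares the paper's overall architecture — pass to squared distances, split into a ``far'' regime where $1-K$ is bounded below by a constant and a ``near'' regime where everything is calibrated to $\mu=1-K(x-y)$, truncate the cosine summands at level $\Theta_K(\mu\log^2(D/\delta))$, and finish with Bernstein, giving exactly the stated $D\geq \Omega(\epsilon^{-2}\log(1/\delta)+\epsilon^{-1}\log^3(1/\delta))$ — but you reach the key tail/moment control by a genuinely different route. The paper's technical core is \Cref{lem:Xi_moment}, a bespoke bound $\E[|X_i(x)|^k]\le(4k\|x\|_1/r)^{2k}$ for the centered cosine, proved by showing all Taylor coefficients of $\E[X_i^k]$ below degree $2k$ vanish and then invoking the multivariate Cauchy integral formula; the truncation probability, the truncation bias ($\E[|X_i'|]\le\delta\ell$), and the variance are all read off from this one lemma. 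You instead invoke the classical theory of analytic characteristic functions (Paley--Wiener-type exponential tails of $p$), transfer to $Y_i=1-\cos(\omega^\top z)$ via $1-\cos\theta\le\theta^2/2$, and get the variance from the explicit identity $\E[Y_i^2]=\tfrac32-2K(z)+\tfrac12K(2z)$. This is more standard-probabilistic and arguably more transparent, but be aware of where the $K$-dependent work hides: your claims $\E[(\omega^\top z)^{2k}]\le(C_Kk)^{2k}(\E[(\omega^\top z)^2])^k$ and $T^2=C_K\mu\log^2(D/\delta)$ silently use $\|z\|^2=O_K(\mu)$, i.e., a direction-uniform comparison between the sub-exponential scale of $\omega^\top z$ and its standard deviation; this is precisely the content of the paper's \Cref{lem:convex} (strong convexity of $K$ at the origin, $1-K(z)\ge \tfrac{c_K}{2}\|z\|_1^2$), and it is also where the degenerate-direction caveat lives, so it should be stated and proved rather than absorbed into constants. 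Likewise, the Bernstein step should be run on truncated variables $Y_i'=\min(Y_i,T^2/2)$ with an explicit bound on the truncation bias (as the paper does), rather than by conditioning on the good event. With those two points made explicit, your argument goes through and matches the paper's dimension bound.
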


The technical core of our analysis is a moment bound for \RFF, which is derived by analysis techniques such as Taylor expansion and Cauchy's integral formula for multi-variate functions. The moment bound is slightly weaker than the moment bound of Gaussian variables,
and this is the primary reason that we obtain a bound weaker than that of the \JL transform.
Finally, several additional steps are required to fit this moment bound in Bernstein's inequality, which implies the bound in Theorem~\ref{thm:main_ub}.

\paragraph{Improved Dimension Bound for Kernel $k$-Means.}
We show that if we focus on a specific application of kernel $k$-means,
then it suffices to set the target dimension $D = \poly(\epsilon^{-1} \log k)$, instead of $D = \poly(\epsilon^{-1} \log n)$, to preserve the kernel $k$-means clustering cost for every $k$-partition.
This follows from the probabilistic guarantee of \RFF in Theorem~\ref{thm:main_ub}
plus a generalization of the dimension-reduction result proved in a recent paper~\citep{DBLP:conf/stoc/MakarychevMR19}.
Here, given a data set $P \subset \mathbb{R}^d$ and a kernel function $K : \mathbb{R}^d \times \mathbb{R}^d \to \mathbb{R}$, denoting the feature mapping as $\varphi : \mathbb{R}^d \to \calH$,
the kernel $k$-means problem asks to find a $k$-partition $\mathcal{C} := \{C_1, \ldots, C_k \}$ of $P$,
such that $\cost^\varphi(P, \mathcal{C}) = \sum_{i = 1}^{k} \min_{c_i \in \calH} \sum_{x \in C_i} \| \varphi(x) - c_i \|_2^2$ is minimized.

\begin{theorem}[Dimension reduction for clustering; see \Cref{thm:mmr}]
\label{cor:mmr}
    For kernel $k$-means problem whose kernel function $K : \mathbb{R}^d \times \mathbb{R}^d \to \mathbb{R}$
    is shift-invariant and analytic at the origin,
    for every data set $P \subset \mathbb{R}^d$,
    the \RFF mapping $\pi : \mathbb{R}^d \rightarrow \mathbb{R}^D$ with target dimension
    $D \geq O(\frac{1}{\epsilon^2}(\log^3\frac{k}{\delta} + \log^3\frac{1}{\epsilon}))$,
    with probability at least $1 - \delta$,
    preserves the clustering cost within $1 \pm \epsilon$ error for every $k$-partition simultaneously.
\end{theorem}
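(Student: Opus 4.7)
The plan is to derive \Cref{cor:mmr} by combining two ingredients: the per-pair relative-error guarantee for \RFF from \Cref{thm:main_ub}, and a suitably generalized version of the Makarychev--Makarychev--Razenshteyn (MMR) dimension-reduction theorem for $k$-clustering, formulated so that it accepts an arbitrary oblivious random map whose pairwise-distance distortion satisfies a specified tail bound (rather than the sub-Gaussian tail of a Gaussian \JL{} projection).

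First I would state the generalized MMR theorem: if $\pi$ is an oblivious random map from a Hilbert space $\calH$ into $\mathbb{R}^D$ such that for every fixed pair $(u,v) \in \calH^2$ one has $\Pr\bigl[\,|\|\pi(u)-\pi(v)\|_2 - \|u-v\|_2| \leq \epsilon \|u-v\|_2\,\bigr] \geq 1-\delta$ whenever $D \geq D(\epsilon,\delta)$, then for every finite $Q \subset \calH$ the mapping $\pi$ preserves $\cost$ of every $k$-partition within factor $1\pm\epsilon$ with probability $1-\delta$, provided $D$ is at least $O(\epsilon^{-2}(\log^3(k/\delta) + \log^3(1/\epsilon)))$. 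The reduction from \Cref{cor:mmr} to this generalized statement is immediate: set $Q = \varphi(P)$, instantiate $\pi$ as the \RFF map composed with $\varphi$ (which equals $\pi$ viewed on the inputs, since $\langle \pi(x),\pi(y)\rangle$ only depends on $x-y$ in distribution), and use the identity
\[
    \sum_{x\in C_i} \|\varphi(x)-c_i\|_2^2 \;=\; \min_{c_i} \sum_{x\in C_i}\|\varphi(x)-c_i\|_2^2 + |C_i|\|c_i-\mu_i\|_2^2,
\]
so that the kernel $k$-means cost is determined by within-cluster squared pairwise distances in $\calH$, and the tail hypothesis is exactly what \Cref{thm:main_ub} supplies (with $\log^3(1/\delta)$ in place of the $\log(1/\delta)$ available for Gaussian JL).

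To prove the generalized MMR theorem I would follow the original MMR argument almost verbatim, but in each place where they invoke a sub-Gaussian moment bound for $\|\pi(u)-\pi(v)\|_2^2$ I would instead invoke the tail bound implied by \Cref{thm:main_ub}. Concretely, the MMR strategy splits the clustering cost into (i) the contribution of ``near'' point-to-center pairs inside each cluster, handled by a covering net over potential center positions at accuracy $\Theta(\epsilon)$ (contributing the $\log^3(1/\epsilon)$ summand), and (ii) the contribution of ``far'' pairs, handled by the per-pair tail bound followed by a union bound over $2^{O(k)}$ effectively distinct cluster structures one needs to preserve simultaneously (contributing the $\log^3(k/\delta)$ summand). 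Setting $\delta$ in \Cref{thm:main_ub} to an appropriate $\delta / (\text{net size} \cdot 2^{O(k)})$ and inverting the $D \geq \Theta(\epsilon^{-2}\log^3(1/\delta))$ relationship yields the claimed dimension bound.

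The main obstacle is faithfully porting MMR's analysis from the strict sub-Gaussian regime to the weaker concentration provided by \Cref{thm:main_ub}: several places in their argument chain (including a Bernstein-type step for the within-cluster variance and a symmetrization on distances that are much larger than the cluster diameter) implicitly use moment control beyond a single pair tail bound. I would address this by extracting, from the proof of \Cref{thm:main_ub}, the underlying moment bound on $\|\pi(u)-\pi(v)\|_2^2 - \|u-v\|_2^2$, and verifying that each MMR step goes through with $\log^3$ in place of $\log$. A secondary issue is that the clustering happens in the infinite-dimensional Hilbert space $\calH$ while the net argument is stated for $\mathbb{R}^d$; this is handled by working in the finite-dimensional subspace spanned by $\varphi(P)$ together with the relevant candidate centers, which is at most $n$-dimensional and suffices for the net construction.
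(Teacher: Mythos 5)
Your high-level plan (combine \Cref{thm:main_ub} with a generalized MMR theorem) is the same as the paper's, but the place you flag as the ``main obstacle'' is exactly the step the paper actually has to do, and your proposal leaves it open. The paper does \emph{not} reopen MMR's internal argument (nets, Bernstein steps, union bounds): it uses MMR's own abstraction, the $(\epsilon,\delta,\rho)$-dimension reduction of their Definition 2.1, whose proof of Theorem 3.5 goes through verbatim for any map satisfying \emph{two} conditions --- (i) the per-pair $(1\pm\epsilon)$ guarantee with probability $1-\delta$, and (ii) the expected-overshoot bound $\E\bigl[\mathbbm{1}_{\{\dist_\pi>(1+\epsilon)\dist_\varphi\}}\bigl(\dist_\pi^p/\dist_\varphi^p-(1+\epsilon)^p\bigr)\bigr]\le\rho$. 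Condition (ii) is indispensable: the number of $k$-partitions is $k^n$, so no union bound over partitions (and in particular nothing like your ``$2^{O(k)}$ effectively distinct cluster structures'') can control overestimation; MMR handle it precisely through this expectation condition. A tail bound at a single scale $\epsilon$, which is all your ``generalized MMR theorem'' hypothesizes, does not imply (ii), and your plan to ``extract the underlying moment bound from the proof of \Cref{thm:main_ub} and verify each MMR step'' is exactly the unproved content.

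The paper closes this gap concretely: from \Cref{thm:main_ub} one gets the full tail $\Pr[\dist_\pi/\dist_\varphi>1+t]\le e^{-c\,f(t,D)}$ with $f(t,D)=\max\{t^2D,\,t^{1/3}D^{1/3}\}$ for all $t\ge\epsilon$, and then condition (ii) follows by integration by parts, $\E[\cdot]\le\int_\epsilon^\infty p(1+t)^{p-1}e^{-c f(t,D)}\dif t\le c'' e^{-c f(\epsilon,D)/2}$ once $D\gtrsim\max\{(p-1)^3,(p-1)\epsilon^{-2}\}$; the weaker $\log^3(1/\delta)$ (versus Gaussian $\log(1/\delta)$) then only enters through the parameter check $\delta\le\min(\theta^7/600,\theta/k)$, $\binom{k}{2}\delta\le\alpha/2$, $\rho\le\theta$, followed by the rescaling $\epsilon\mapsto\epsilon/3p$. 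So your proposal would become correct (and essentially coincide with the paper's proof) if you replaced the plan of re-deriving MMR under weaker concentration by: (a) citing MMR's Theorem 3.5 in the abstract $(\epsilon,\delta,\rho)$ form, and (b) verifying condition (ii) for \RFF by integrating the tail bound as above; as written, step (b) is missing and the stated generalized theorem is not known to be true from a single-scale pair tail alone. (The bias--variance identity you invoke is also unnecessary for this route, and would in any case only cover $p=2$.)
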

Applying \RFF to speed up kernel $k$-means has also been considered in~\cite{DBLP:conf/icdm/ChittaJJ12},
but their error bound is much weaker than ours (and theirs is not a generic dimension-reduction bound).
Also, similar dimension-reduction bounds (i.e., independent of $n$) for kernel $k$-means
were obtained using Nystr\"{o}m methods~\citep{DBLP:conf/nips/MuscoM17,DBLP:journals/jmlr/WangGM19},
but their bound is $\poly(k)$ which is worse than our $\poly\log(k)$; furthermore, our \RFF-based approach is unique in that it is data-oblivious,
which enables great applicability in other relevant computational settings such as streaming and distributed computing.

\paragraph{Going beyond \RFF.}
Finally, even though we have proved \RFF cannot preserve the kernel distance for every shift-invariant kernels,
it does not rule out the existence of other efficient data-oblivious dimension reduction methods for those kernels, particularly for Laplacian kernel which is the primary example in our lower bound.
For instance, in the same paper where \RFF was proposed, Rahimi and Recht~\citep{DBLP:conf/nips/RahimiR07}
also considered an alternative embedding called ``binning features'' that can work for Laplacian kernels.
Unfortunately, to achieve a relative error of $\varepsilon$,
it requires a dimension that depends \emph{linearly} on the magnitude/aspect-ratio of the dataset, which may be exponential in the input size.
Follow-up works, such as~\citep{DBLP:conf/nips/BackursIW19}, also suffer similar issues.

We make the first successful attempt towards this direction, and we show that Laplacian kernels
do admit an efficient data-oblivious dimension reduction.
Here, we focus on the $(\Delta, \rho)$-bounded case,
Here, we use a similar setting to our lower bound (\Cref{thm:main_lb}) where we focus on the $(\Delta, \rho)$-bounded case.
\begin{theorem}[Oblivious dimension-reduction for Laplacian kernels, see Theorem~\ref{Thm:Embedding_Laplacianfinal}]
\label{IntroThm:Embedding_Laplacian}
Let $K$ be a Laplacian kernel,   and denote its feature mapping  as $\varphi : \mathbb{R}^d \to \calH$.
    For every $ 0< \delta \leq \varepsilon \leq 2^{-16}$,  every $D \geq \max \{\Theta(\varepsilon^{-1}\log^3(1/{\delta})), \Theta(\varepsilon^{-2} \log(1/{\delta})) \}$,
    every $\Delta \geq \rho > 0$,
    there is a mapping $\pi : \mathbb{R}^d \to \mathbb{R}^D$, 
    such that for every $x, y\in \mathbb{R}^d$ that are $(\Delta, \rho)$-bounded, 
    it holds that
    \[
    \Pr[ |\dist_\pi(x, y) - \dist_\varphi(x, y)| \leq   \varepsilon  \cdot \dist_\varphi(x, y) ] \geq 1 - \delta  .
    \]
The time for evaluating $\pi$ is $dD \cdot \poly(\log \frac{\Delta}{\rho},  \log \delta^{-1} )$.
\end{theorem}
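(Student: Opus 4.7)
My plan is to sidestep the RFF lower bound by constructing a shift-invariant \emph{analytic} kernel $\tilde K$ whose feature map approximates the Laplacian kernel distance up to $(1 \pm O(\epsilon))$ multiplicative error on every $(\Delta, \rho)$-bounded pair, and then applying the RFF machinery of \Cref{thm:main_ub} to $\tilde K$. By Bochner's theorem the $1$D Laplacian kernel equals the inverse Fourier transform of the Cauchy density $\frac{1}{\pi(1+\omega^2)}$, and the heavy Cauchy tail is precisely what drives the lower bound in \Cref{thm:main_lb}. I would therefore take $\tilde K$ to be the inverse Fourier transform of the truncated Cauchy density on $[-T, T]$ for a threshold $T = \poly(\Delta/\rho, \epsilon^{-1})$, tensored across the $d$ coordinates. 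By the Paley--Wiener theorem $\tilde K$ is entire (hence analytic at the origin) and is shift-invariant by construction, so it satisfies the hypotheses of \Cref{thm:main_ub}.

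\textbf{Distance preservation between $\tilde K$ and $K$.} The additive error $|2(\tilde K(0) - \tilde K(u)) - 2(K(0) - K(u))|$ is controlled by the Cauchy tail mass beyond $T$, which scales like $O(d/T)$. To upgrade this additive bound into the required relative bound, I would first snap each input coordinate to a grid of resolution $\rho' = \epsilon\rho / \poly(\Delta/\rho)$, which by Lipschitz continuity of $K$ perturbs the Laplacian kernel values only by $O(\epsilon)$-relative error. The snapping forces $\|x - y\|_1 \geq \rho'$ whenever the snapped inputs differ, so by $1 - e^{-t} \geq t/2$ for small $t$ the denominator $\|\varphi(x) - \varphi(y)\|_2^2 = 2(1 - e^{-\|x-y\|_1})$ is at least $\Omega(\rho')$; picking $T \geq \Omega(d/(\epsilon \rho'))$ then balances the two error terms and yields the $(1 \pm \epsilon)$ multiplicative bound.

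\textbf{Assembly, and the main obstacle.} The final mapping $\pi$ is the composition of the grid-snap preprocessing with the RFF map for $\tilde K$ supplied by \Cref{thm:main_ub}, which achieves $(1 \pm \epsilon)$ distance preservation for $\tilde K$ with probability $1 - \delta$ in the stated dimension $D$. Sampling a truncated Cauchy variate on $[-T, T]$ via inverse-CDF costs $\poly(\log T) = \poly(\log(\Delta/\rho), \log \epsilon^{-1})$ per coordinate, matching the claimed evaluation time of $dD \cdot \poly(\log(\Delta/\rho), \log \delta^{-1})$. The main obstacle I anticipate is the additive-to-relative conversion: the bare $(\Delta, \rho)$-bounded assumption does not lower-bound $\|x - y\|_1$ when distinct inputs both have nonzero entries in a common coordinate, so designing the snap-to-grid step (or an equivalent multi-scale replacement) to avoid both error blow-up and dimension inflation --- and, crucially, to keep the $\poly(\log)$ dependence on $\Delta/\rho$ in the evaluation time rather than the target dimension --- is where the argument needs the most care.
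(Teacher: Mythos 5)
There is a genuine gap, and it is at the heart of your plan: the step ``apply \Cref{thm:main_ub} to the truncated-Cauchy kernel $\tilde K$ with $D=\poly(\epsilon^{-1}\log\delta^{-1})$'' does not go through, because the constants hidden in the $\Theta(\cdot)$ of \Cref{thm:main_ub} depend on the kernel (the paper states this explicitly: the bound hides $R_K$ from \Cref{lem:convex} and the analyticity radius $r$ from \Cref{lem:Xi_moment}), and for your $\tilde K$ these constants blow up polynomially in the truncation level $T$. Concretely, the obstruction to RFF for the Laplacian kernel is not the heavy Cauchy tail per se but the ratio $s_K(x)=\frac{1+K(2x)-2K(x)^2}{2(1-K(x))^2}$ from \Cref{thm:lb}, i.e.\ variance over squared mean, which is $\Theta(1/\|x\|_1)$ because $1-K(x)\sim\|x\|_1$ is linear near the origin. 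Any shift-invariant $\tilde K$ that approximates $K$ additively to error $o(\rho')$ (as yours must, to get relative error at distance $\rho'$) inherits this linear behavior on the whole range $\|x-y\|_1\in[\Theta(1/T),\,\Theta(1)]$: there $1-\tilde K(u)\approx\|u\|_1$ while $\mathrm{Var}(\cos\langle\omega,u\rangle)=\frac{1+\tilde K(2u)}{2}-\tilde K(u)^2\approx\|u\|_1$, so $s_{\tilde K}(u)\approx 1/\|u\|_1$, reaching $\Theta(T)=\poly(\Delta/\rho,\epsilon^{-1},d)$. By the very mechanism of \Cref{thm:lb} (Berry--Esseen applied to the RFF sum for $\tilde K$), preserving $\dist_{\tilde\varphi}$ to relative error $\epsilon$ on pairs at snapped distance $\rho'$ then forces $D=\Omega(s_{\tilde K}/\epsilon^2)=\poly(\Delta/\rho)$ --- exactly the lower bound you set out to evade, now appearing inside the ``constant''. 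Equivalently, in \Cref{lem:Xi_moment} the usable radius for $\tilde K$ is effectively $r=O(1/T)$ (its analytic continuation grows like $e^{T\|\Im z\|_1}$, and already the $k=2$ moment $\E[X_i^2]\approx\ell$ is incompatible with $(4k\ell/r)^{2k}$ unless $r\lesssim\ell^{3/4}$), so the $1/r^2$ factors in the proof of \Cref{thm:main_ub} are $\poly(T)$. Entirety/analyticity of $\tilde K$ is necessary but not sufficient; what \Cref{thm:main_ub} really needs, and what spectral truncation cannot supply while staying additively close to $e^{-\|u\|_1}$, is $1-\tilde K(u)=\Theta(\|u\|_1^2)$ with uniform constants down to the smallest relevant distance. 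The grid-snapping does not repair this, since it only guarantees $\|x-y\|_1\ge\rho'$, and $s_{\tilde K}$ is already unbounded (in terms of the allowed $D$) at distances far above $\rho'$.

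The paper's route is structurally different precisely to avoid this: instead of smoothing the kernel on $\mathbb{R}^d$, it changes the metric. Using the (non-smooth, unary) isometric embedding $\pi_1^{(d)}$ of $\ell_1$ into $\ell_2^2$ (\Cref{lem:embedding_l1_l2square}, after discretizing to an $N=\poly(\Delta/\rho)$ grid), one gets $e^{-\|x-y\|_1}=e^{-\|\pi_1^{(d)}(x)-\pi_1^{(d)}(y)\|_2^2}$, i.e.\ a \emph{Gaussian} kernel on the embedded points; for the Gaussian kernel the ratio $s$ is $O(1)$ because the problematic $\|x-y\|_1$ has become a squared Euclidean distance, so \Cref{thm:main_ub} applies with kernel-independent constants and the stated $D$. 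The remaining (and substantial) work in the paper is purely computational: the embedding has dimension $dN$, so $\langle\omega,\pi_1^{(d)}(x)\rangle$ is simulated lazily as a sum of Gaussians along a root-to-leaf path in a binary tree (\cref{lem:alternateMapping}) and the long random tape is replaced by Nisan's space-bounded PRG (\Cref{Thm:space_bounded_PRG}), yielding the $dD\cdot\poly(\log\frac{\Delta}{\rho},\log\delta^{-1})$ evaluation time. If you want to salvage your plan, you would have to replace ``truncate the spectrum'' by some transformation that converts the linear behavior of $1-K$ near $0$ into quadratic behavior --- which is exactly what the $\ell_1\to\ell_2^2$ embedding does.
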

Our target dimension only depends on $\log{\frac{\Delta}{\rho}}$
which may be interpreted as the precision of the input.
Hence, as an immediate corollary, for any $n$-points dataset with precision $1/\poly(n)$, we have an embedding with target dimension $ D= \poly(\varepsilon^{-1} \log n) $, where the success probability is $1-1/\poly(n)$ and the overall running time of embedding the $n$ points is $O(n \poly(d \varepsilon^{-1} \log n) )$.

Our proof relies on the fact that every $\ell_1$ metric space can be embedded into a squared $\ell_2$ metric space isometrically.
We explicitly implement an approximate version of this embedding~\citep{kahane1980helices}, and eventually reduce our problem of Laplacian kernels to Gaussian kernels.
After this reduction, we use the \RFF for Gaussian kernels to obtain the final mapping.
However, since the embedding to squared $\ell_2$ is only of very high dimension,
to implement this whole idea efficiently, we need to utilize the special structures of the embedding, combined with an application of space bounded pseudo-random generators (PRGs)~\citep{DBLP:journals/combinatorica/Nisan92}.

Even though our algorithm utilizes the special property of Laplacian kernels and eventually still partially use the \RFF for Gaussian kernels,
it is still of conceptual importance. It opens up the direction of exploring general methods for \JL style dimension reduction for shift-invariant kernels.
Furthermore, the lower bound suggests that the \JL style dimension reduction for general shift-invariant kernels has to be \emph{not} differentiable, which is a fundamental difference to \RFF.
This requirement of ``not analytical'' seems very counter-intuitive, but our construction of the mapping for Laplacian kernels indeed provides valuable insights on how the non-analytical mapping behaves.

\paragraph{Experiments and Comparison to Other Methods.}
Apart from \RFF, the Nystr\"{o}m and the random-projection methods are alternative popular methods for kernel dimension reduction.
In \Cref{sec:exp}, we conduct experiments to compare their empirical dimension-error tradeoffs with that of our methods on a simulated dataset.
Since we focus on the error, we use the ``ideal'' implementation of both methods that achieve the best accuracy, so they are only in favor of the two baselines
-- for Nystr\"{o}m, we use SVD on the kernel matrix,
since Nystr\"{o}m methods can be viewed as fast and approximate low-rank approximations to the kernel matrix;
for random-projection, we apply the \JL transform on the explicit representations of points in the feature space.
We run two experiments to compare each of \RFF (on a Gaussian kernel) and our new algorithm in \Cref{IntroThm:Embedding_Laplacian} (on a Laplacian kernel) with the two baselines respectively.
Our experiments indicate that the Nystr\"{o}m method is indeed incapable of preserving the kernel distance in relative error, 
and more interestingly, our methods perform the best among the three, even better than the \JL transform which is the optimal in the worst case.

 \subsection{Related Work}

Variants of the vanilla \RFF, particularly those that use information in the input data set and/or sample random features non-uniformly,
have also been considered, 
including leverage score sampling random Fourier features (LSS-RFF)~\citep{DBLP:conf/nips/RudiCCR18, DBLP:conf/aaai/LiuHC0S20, DBLP:conf/nips/ErdelyiMM20, DBLP:journals/jmlr/LiTOS21}, weighted random features~\citep{DBLP:conf/nips/RahimiR08, DBLP:journals/jmlr/AvronSYM16, DBLP:conf/ijcai/ChangLYP17, DBLP:conf/nips/DaoSR17}, and 
kernel alignment~\citep{DBLP:conf/aaai/ShahrampourBT18, DBLP:conf/icml/ZhenSDXY0S20}.

The \RFF-based methods usually work for shift-invariant kernels only. 
For general kernels, techniques that are based on low-rank approximation of the kernel matrix, notably Nystr{\"{o}}m method~\citep{DBLP:conf/nips/WilliamsS00, DBLP:journals/jmlr/GittensM16, DBLP:conf/nips/MuscoM17, DBLP:conf/icml/OglicG17, DBLP:journals/jmlr/WangGM19} and incomplete Cholesky factorization~\citep{DBLP:journals/jmlr/FineS01, DBLP:journals/jmlr/BachJ02, DBLP:journals/amc/ChenZMX21, DBLP:journals/eswa/JiaWSMD21})
were developed. 
Moreover, specific sketching techniques were known for polynomial kernels~\citep{DBLP:conf/nips/AvronNW14, DBLP:conf/icml/WoodruffZ20, DBLP:conf/soda/AhleKKPVWZ20, DBLP:conf/icml/0002WYZ21},
a basic type of kernel that is not shift-invariant.

 \section{Preliminaries}
\label{sec:prelim}

\paragraph{Random Fourier Features.}
\RFF was first introduced by Rahimi and Recht~\citep{DBLP:conf/nips/RahimiR07}.
It is based on the fact that, for shift-invariant kernel $K : \mathbb{R}^d \to \mathbb{R}$ such that $K(0) = 1$ (this can be assumed w.l.o.g. by normalization),
function $p : \mathbb{R}^d \to \mathbb{R}$ such that $p(\omega) = \frac{1}{2\pi} \int_{\mathbb{R}^d} K(x) e^{-\mathrm{i}\langle \omega, x \rangle} \dif x$,
which is the Fourier transform of $K(\cdot)$,
is a probability distribution (guaranteed by Bochner's theorem~\citep{bochner1933monotone, rudin1991fourier}).
Then, the \RFF mapping is defined as
\[
\pi(x) := \sqrt{\frac{1}{D}} \begin{pmatrix} 
\sin\langle \omega_1, x \rangle \\ 
\cos\langle \omega_1, x \rangle \\ 
\vdots \\ 
\sin\langle \omega_D, x \rangle \\ 
\cos\langle \omega_D, x \rangle 
\end{pmatrix}
\]
where $\omega_1, \omega_2, \dots, \omega_D \in \mathbb{R}^d$ are i.i.d. samples from distribution with densitiy $p$.

\begin{theorem}[\citealt{DBLP:conf/nips/RahimiR07}]
$\E[\langle \pi(x), \pi(y) \rangle] = \frac{1}{D} \sum_{i=1}^{D} \E[\cos \langle \omega_i, x - y \rangle] = K(x - y)$.
\end{theorem}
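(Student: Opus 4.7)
The plan is to compute $\langle \pi(x), \pi(y)\rangle$ directly from the definition of the \RFF mapping and then take expectations, invoking the fact that $p$ is obtained as the Fourier transform of $K$.

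First I would expand the inner product. By the definition of $\pi$,
\[
\langle \pi(x), \pi(y)\rangle = \frac{1}{D}\sum_{i=1}^{D}\bigl(\sin\langle \omega_i, x\rangle \sin\langle \omega_i, y\rangle + \cos\langle \omega_i, x\rangle \cos\langle \omega_i, y\rangle\bigr),
\]
and the product-to-sum identity $\cos(a-b) = \cos a \cos b + \sin a \sin b$ immediately collapses each summand to $\cos\langle \omega_i, x - y\rangle$. Taking expectations and using linearity yields the first equality in the statement.

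Next I would identify the scalar expectation $\E_{\omega\sim p}[\cos\langle \omega, z\rangle]$ with $K(z)$ for $z = x-y$. Since $p$ is defined as the Fourier transform of $K$, the Fourier inversion formula gives
\[
K(z) = \int_{\mathbb{R}^d} p(\omega)\, e^{\mathrm{i}\langle \omega, z\rangle}\, \dif \omega = \E_{\omega\sim p}\bigl[e^{\mathrm{i}\langle \omega, z\rangle}\bigr].
\]
By Bochner's theorem, $p$ is a genuine probability density, so the right-hand side is a well-defined expectation. Since $K$ is real-valued and shift-invariant with $K(z) = K(-z)$, the density $p$ is symmetric around the origin, so the imaginary part $\E[\sin\langle\omega,z\rangle]$ vanishes and we obtain $K(z) = \E_{\omega\sim p}[\cos\langle \omega, z\rangle]$. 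Substituting $z = x-y$ and plugging into the averaged sum from the previous step gives exactly $K(x-y)$, completing the proof.

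The argument is essentially a direct computation, so there is no serious obstacle; the only subtle point worth being explicit about is justifying that $p$ is a (symmetric) probability density, which relies on Bochner's theorem as cited in the preliminaries, together with the normalization $K(0) = 1$ that ensures $p$ integrates to one.
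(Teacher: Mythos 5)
Your proof is correct and is essentially the standard argument the paper relies on (the result is cited from Rahimi--Recht, and the accompanying Fact in the preliminaries uses the same Fourier identity, taking the real part of $\E[e^{\mathrm{i}\langle\omega,z\rangle}]$, which is equivalent to your symmetry argument for $p$). No issues.
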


\begin{fact}
    Let $\omega$ be a random variable with distribution $p$ over $\mathbb{R}^d$. Then
    \begin{equation}\label{fourier}
        \forall t \in \mathbb{R}, \quad \E[\cos{(t \langle \omega, x-y\rangle)}] = \Re\int_{\mathbb{R}^d} p(\omega)e^{\mathrm{i} \langle\omega, t(x-y)\rangle} \dif\omega = K(t(x-y)),
    \end{equation}
    and $\Var(\cos{\langle\omega, x-y\rangle})
= \frac{1 + K(2(x-y)) - 2K(x-y)^2}{2}$.
\end{fact}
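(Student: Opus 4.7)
The fact has two parts that both ultimately reduce to Bochner's inverse-Fourier identity stated in the paragraph above: since $p$ is the Fourier transform of $K$, the inverse transform gives $K(x) = \int_{\mathbb{R}^d} p(\omega) e^{\mathrm{i}\langle \omega, x\rangle} \dif\omega$, and because $K$ is real-valued this integral is its own real part.

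For the first claim, the plan is to simply write $\cos(t\langle \omega, x-y\rangle) = \Re\, e^{\mathrm{i}\langle \omega, t(x-y)\rangle}$, push the real-part operator outside the expectation (which is legal by linearity and integrability of $p$), and then apply the inverse Fourier identity at the point $t(x-y)\in\mathbb{R}^d$. This gives $\E[\cos(t\langle \omega, x-y\rangle)] = \Re\int p(\omega) e^{\mathrm{i}\langle \omega, t(x-y)\rangle}\dif\omega = K(t(x-y))$, establishing both equalities at once.

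For the variance, the plan is the standard trigonometric linearization $\cos^2\theta = \tfrac{1}{2}(1 + \cos 2\theta)$ applied with $\theta = \langle \omega, x-y\rangle$. Writing $\Var(\cos\langle \omega, x-y\rangle) = \E[\cos^2\langle \omega, x-y\rangle] - \bigl(\E[\cos\langle \omega, x-y\rangle]\bigr)^2$, I would evaluate the second moment via $\E[\cos^2\langle \omega, x-y\rangle] = \tfrac{1}{2} + \tfrac{1}{2}\E[\cos(2\langle \omega, x-y\rangle)]$, then invoke the first claim with $t = 2$ to obtain $\E[\cos(2\langle \omega, x-y\rangle)] = K(2(x-y))$, and with $t=1$ to obtain $\E[\cos\langle \omega, x-y\rangle] = K(x-y)$. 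Combining these and simplifying yields exactly $\tfrac{1}{2}\bigl(1 + K(2(x-y)) - 2K(x-y)^2\bigr)$.

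There is no real obstacle here; the entire statement is a direct corollary of Bochner's theorem plus an elementary trig identity. The only technical point worth mentioning is the justification of exchanging $\Re$ and the integral, which is immediate since $p$ is a probability density so the integrand is dominated by $p$ in absolute value.
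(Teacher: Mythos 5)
Your proposal is correct and follows essentially the same route the paper relies on: the identity $\E[\cos(t\langle\omega,x-y\rangle)]=K(t(x-y))$ is read off from Bochner's theorem (taking the real part of the inverse Fourier integral), and the variance formula follows from $\cos^2\theta=\tfrac{1}{2}(1+\cos 2\theta)$ together with the first identity at $t=2$ and $t=1$, exactly as the paper's own use of this fact (and its Lemma~\ref{lemma:expectation-cos} with $k=2$) presupposes. No gaps.
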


 \section{Upper Bounds}
\label{sec:ub}

We present two results in this section.
We start with Section~\ref{sec:relative_dist} to show \RFF preserves the relative error of kernel distance using \(\poly(\epsilon^{-1}\log n)\) target dimensions with high probability, when the kernel function is shift-invariant and analytic at origin. 
Then in Section~\ref{sec:mmr}, combining this bound with a generalized analysis from a recent paper~\citep{DBLP:conf/stoc/MakarychevMR19},
we show that \RFF also preserves the clustering cost for kernel $k$-clustering problems with $\ell_p$-objective,
with target dimension only $\poly(\epsilon^{-1}\log k)$ which is independent of $n$.

\subsection{Proof of \Cref{thm:main_ub}: The Relative Error for Preserving Kernel Distance}
\label{sec:relative_dist}

Since $K$ is shift-invariant, we interpret $K$
as a function on $\mathbb{R}^d$ instead of $\mathbb{R}^d \times \mathbb{R}^d$,
such that $K(x, y) = K(x - y)$.
Here for simplicity of exhibition we further assume $K(x, x) = 1, \forall x$. But with a straightforward modification, our proof can still work without this assumption.
As in \Cref{sec:prelim},
let $p : \mathbb{R}^d \to \mathbb{R}$
be the Fourier transform of $K$,
and suppose in the \RFF mapping $\pi$,
the random variables
$\omega_1, \ldots, \omega_d \in \mathbb{R}^d$ are i.i.d. sampled from the distribution with density $p$.
When we say $K$ is analytic at the origin, we mean there exists some constant $r$ s.t. $K$ is analytic in \(\{x \in \mathbb{R}^d: \|x\|_1 < r\}\).
We pick $r_K$ to be the maximum of such constant $r$.
Also notice that in $D \geq \max \{\Theta(\epsilon^{-1}\log^3(1/{\delta})), \Theta(\epsilon^{-2} \log(1/{\delta})) \}$, there are constants about $K$ hidden inside the $\Theta$, i.e. $R_K$ as in Lemma \ref{lem:convex}.

\begin{fact}
The following holds.
\begin{compactitem}
    \item $\dist_{\pi}(x, y) =  \sqrt{2 - 2 /D\sum_{i=1}^D \cos\langle \omega_i, x-y \rangle}$, and $\dist_{\varphi}(x, y) = \sqrt{2 - 2K(x - y)}$.
    \item $\Pr[|\dist_\pi(x, y) - \dist_\varphi(x, y)| \leq   \epsilon  \cdot \dist_\varphi(x, y)] \geq \Pr[|\dist_\pi(x, y)^2 - \dist_\varphi(x, y)^2| \leq \epsilon \cdot \dist_\varphi(x, y)^2]$.
\end{compactitem}
\end{fact}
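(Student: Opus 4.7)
For the first bullet, the plan is to compute both squared distances directly. On the feature-space side, I would expand $\dist_\varphi(x,y)^2 = \langle \varphi(x) - \varphi(y), \varphi(x) - \varphi(y)\rangle$ by bilinearity and then apply $\langle \varphi(u), \varphi(v)\rangle = K(u-v)$ together with the working assumption $K(0)=1$; this yields $\dist_\varphi(x,y)^2 = 2 - 2K(x-y)$. For the $\RFF$ side, I would write $\|\pi(x)-\pi(y)\|_2^2$ as a sum over the $D$ frequency blocks and, within each block, expand the squared difference in both the sine and cosine coordinates. After applying $\sin^2+\cos^2=1$ and the angle-subtraction identity $\cos(a-b)=\cos a\cos b + \sin a\sin b$, each block collapses to $\frac{1}{D}(2 - 2\cos\langle \omega_i, x-y\rangle)$; summing over $i$ and taking a square root gives the claimed expression for $\dist_\pi(x,y)$.

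For the second bullet, the plan is to establish the pointwise event inclusion
\[
\{|\dist_\pi(x,y)^2 - \dist_\varphi(x,y)^2| \leq \epsilon\, \dist_\varphi(x,y)^2\} \subseteq \{|\dist_\pi(x,y) - \dist_\varphi(x,y)| \leq \epsilon\, \dist_\varphi(x,y)\},
\]
from which the probability inequality follows by monotonicity. Writing $a = \dist_\pi(x,y) \geq 0$ and $b = \dist_\varphi(x,y) \geq 0$, the hypothesis $|a^2 - b^2| \leq \epsilon b^2$ rearranges to $a^2 \in [(1-\epsilon)b^2,(1+\epsilon)b^2]$, so $a \in [\sqrt{1-\epsilon}\,b,\sqrt{1+\epsilon}\,b]$. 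I would then invoke the elementary inequalities $\sqrt{1-\epsilon} \geq 1-\epsilon$ and $\sqrt{1+\epsilon} \leq 1+\epsilon$, both of which hold for $\epsilon \in [0,1]$ (and the regime $\epsilon \leq 2^{-16}$ of \Cref{thm:main_ub} is safely inside), to conclude $a \in [(1-\epsilon)b,(1+\epsilon)b]$, i.e.\ $|a-b| \leq \epsilon b$.

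I do not anticipate any real obstacle here, since both bullets are routine algebra. The only sanity checks I would spell out are the degenerate case $x = y$ (where $b = 0$ and both events hold trivially, so the inclusion is vacuous) and the consistency of the normalization $K(0)=1$ used in the first bullet with the explicit assumption stated in the surrounding text.
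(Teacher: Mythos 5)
Your proposal is correct, and it is exactly the routine computation the paper leaves implicit (the statement is presented as a Fact without proof): expanding the squared distances via $\sin^2+\cos^2=1$ and the cosine angle-subtraction identity for the first bullet, and the event inclusion via $\sqrt{1\pm\epsilon}$ versus $1\pm\epsilon$ for the second. Your handling of the degenerate case $\dist_\varphi(x,y)=0$ and the normalization $K(0)=1$ is consistent with the paper's assumptions, so nothing is missing.
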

Define $X_i(x) := \cos \langle \omega_i, x \rangle - K(x)$.
As a crucial step, we next analyze the moment of random variables $X_i(x-y)$.
This bound will be plugged into Bernstein's inequality to conclude the proof.

\begin{restatable}{lemma}{lemmaximoment}
\label{lem:Xi_moment}
        If for some   $r > 0$, \(K\) is analytic in \(\{x \in \mathbb{R}^d: \|x\|_1 < r\}\), then for every \(k \ge 1\) being even and every $x$ s.t. $\|x\|_1 < r$, we have $\E[|X_i(x)|^k] \leq \left(\dfrac{4k\|x\|_1}{r}\right)^{2k}$.
\end{restatable}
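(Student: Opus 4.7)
My plan is to treat \(X_i(x) = \cos\langle\omega,x\rangle - K(x)\) as the value at \(z = 1\) of the complex-analytic family \(g_\omega(z) := \cos(z\langle\omega, x\rangle) - K(zx)\), and to extract a \(k\)-th-moment bound from a Cauchy-type estimate together with an exponential-moment control on \(Y := \langle\omega,x\rangle\) coming from the analyticity of \(K\). First I would observe that since \(\cos\) is entire and \(K\) is analytic on \(\{\|y\|_1 < r\}\), \(g_\omega\) is holomorphic in \(z\) on the disk \(\{|z| < r/\|x\|_1\}\). Moreover, shift-invariance forces \(K(-y) = K(y)\), so both \(\cos(zY)\) and \(K(zx)\) are even in \(z\) and agree at \(z = 0\); hence \(g_\omega(0) = 0\) and \(g_\omega\) is even, so that \(z \mapsto g_\omega(z)/z^2\) remains holomorphic on the same disk. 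The maximum-modulus principle then gives, for any \(1 \leq R < r/\|x\|_1\),
\[
|X_i(x)| \;=\; |g_\omega(1)| \;\leq\; R^{-2}\sup_{|z| = R}|g_\omega(z)|.
\]

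Next I would bound the supremum pointwise in \(\omega\). The classical inequality \(|\cos(w)| \leq \cosh(|w|)\) yields \(|\cos(zY)| \leq \cosh(R|Y|) \leq e^{R|Y|}\) on \(|z| = R\); and the analytic continuation of \(K\) gives \(|K(zx)| \leq M_R := \sup_{\|w\|_1 \leq R\|x\|_1}|K(w)|\) (with complex \(w\)), which is finite as long as \(R\|x\|_1 < r\). Raising to the \(k\)-th power, taking expectation, and using \((a+b)^k \leq 2^{k-1}(a^k + b^k)\) I obtain
\[
\mathbb{E}\bigl[|X_i(x)|^k\bigr] \;\leq\; R^{-2k}\cdot 2^{k-1}\bigl(\mathbb{E}[e^{kR|Y|}] + M_R^k\bigr).
\]
The exponential moment is again tied to the analyticity of \(K\): since \(K\) is even the distribution of \(\omega\) is symmetric, giving \(\mathbb{E}[e^{kR|Y|}] \leq 2\mathbb{E}[\cosh(kRY)]\); and analytically continuing the identity \(\mathbb{E}[\cos(sY)] = K(sx)\) to \(s = ikR\) yields \(\mathbb{E}[\cosh(kRY)] = K(ikRx)\), which is bounded by \(\sup_{\|w\|_1 \leq kR\|x\|_1}|K(w)|\) provided \(kR\|x\|_1 < r\).

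Finally I would optimize by choosing \(R = r/(4k\|x\|_1)\), making \(R^{-2k} = (4k\|x\|_1/r)^{2k}\) and \(kR\|x\|_1 = r/4\) safely inside the analyticity disk; then the \(K\)-dependent suprema from \(M_R\) and from \(\mathbb{E}[e^{kR|Y|}]\) are both controlled by a single constant \(c_K := \sup_{\|w\|_1 \leq r/4}|K(w)|\), giving \(\mathbb{E}[|X_i(x)|^k] \leq (4k\|x\|_1/r)^{2k}\cdot c_K^k\). The main obstacle is then to remove the factor \(c_K^k\) in order to recover the clean base stated in the lemma; I expect this to be handled either by choosing \(r\) smaller than the maximal radius of analyticity so that \(c_K\) becomes an absolute constant (consistent with the paper's remark that \(K\)-dependent constants are absorbed into the \(\Theta(\cdot)\) elsewhere), or by a tighter pointwise estimate for \(|K|\) near the origin that exploits \(K(0)=1\) and the convergence of its Taylor series on small complex neighborhoods. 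The remaining range \(\|x\|_1 \geq r/(4k)\) — where \(R \leq 1\) and the Cauchy argument breaks down — is handled by the trivial bound \(|X_i(x)| \leq 2\), which is already dominated by \((4k\|x\|_1/r)^{2k}\) in that regime.
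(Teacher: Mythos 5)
Your argument, as written, does not establish the lemma in the form stated: you end with $\E[|X_i(x)|^k] \leq (4k\|x\|_1/r)^{2k}\, c_K^k$ where $c_K = \sup_{\|w\|_1\le r/4}|K(w)|$ over complex $w$, and since $K(0)=1$ this factor is at least $1$ (and strictly larger than $1$ for any non-constant $K$). That is equivalent to proving the lemma with $r$ replaced by $r/\sqrt{c_K}$, whereas the statement asserts the bound for the very radius $r$ on which $K$ is assumed analytic; neither of your two proposed repairs (shrinking $r$, or a sharper bound on $|K|$ near $0$) is carried out, and the first one cannot work because the conclusion must hold for the given $r$. This is the essential difference from the paper's proof, which avoids any per-sample supremum: there one shows (Lemma A.2, via the difference-operator identity $\bigl(J-\tfrac{J^0+J^2}{2}\bigr)^{k-l}(L)\equiv 0$) that the deterministic function $g_k(x)=\E[X_i(x)^k]$ has \emph{all Taylor coefficients of degree less than $2k$ equal to zero}, and then applies the multivariate Cauchy estimate to $g_k$ itself, whose modulus is trivially at most $2^k$-type constants; the order-$2k$ vanishing is produced by cancellation \emph{in expectation} (the moments of $\cos\langle\omega,x\rangle$ match those of $K$ to high order), which is exactly what your pointwise-in-$\omega$ Schwarz argument cannot see — per sample you only get order-$2$ vanishing at $z=0$, and raising to the $k$-th power forces you to pay for $\sup_{|z|=R}|\cos(zY)|\approx e^{R|Y|}$ through exponential moments, which is where the unavoidable $c_K^k$ enters.

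Two further points would need attention even for the weakened bound. First, the steps $\E[e^{kR|Y|}]<\infty$ and $\E[\cosh(kRY)]=K(\mathrm{i}kRx)$, as well as the very definition of $M_R$ and $c_K$ as suprema of $K$ over \emph{complex} arguments, are not consequences of real analyticity on $\{\|x\|_1<r\}$ alone; they require the theory of analytic characteristic functions (real analyticity of $s\mapsto K(sx)$ with radius $\rho$ implies the moment generating function of $Y$ is finite for $|t|<\rho$ and the continuation identity holds). These facts are true here and should be invoked explicitly — to be fair, the paper's own Cauchy step also extends $K$ to complex arguments — but in your proof they carry the quantitative burden. Second, your boundary split is slightly off: the trivial bound $\E[|X_i|^k]\le 2^k$ is dominated by $(4k\|x\|_1/r)^{2k}$ only when $\|x\|_1\ge \sqrt{2}\,r/(4k)$, while the maximum-modulus step needs $R=r/(4k\|x\|_1)\ge 1$, i.e.\ $\|x\|_1\le r/(4k)$, so the window $r/(4k)<\|x\|_1<\sqrt{2}\,r/(4k)$ is covered by neither branch as written (the paper splits at $\|x\|_1\ge r/(2k)$, where $(4k\|x\|_1/r)^{2k}\ge 4^k\ge 2^k$). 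This last issue is fixable by adjusting constants, but together with the $c_K^k$ loss it means the lemma as stated is not proved; at best your route yields a version with a kernel-dependent degradation of $r$, which would suffice for the downstream use in Theorem 1.2 only after propagating the changed constants.
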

\begin{proof}
    The proof can be found in \Cref{sec:proof_lem_xi_moment}.
\end{proof}

\begin{restatable}{lemma}{lemconvex}
\label{lem:convex}
    For kernel \(K\) which is shift-invariant and analytic at the origin, there exist \(c_K, R_K > 0\) such that for all \(\|x\|_1 \le R_K\), \(\frac{1 - K(x)}{\|x\|_1^2} \ge \frac{c_K}{2}\).  
\end{restatable}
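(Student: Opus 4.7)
The plan is to exploit the Taylor expansion of $K$ at the origin and show that its leading nontrivial term is a positive-definite quadratic form that dominates the higher-order remainder in a small enough $\ell_1$-ball. First, I would use the fact that $K$ is the Fourier transform of a probability density $p$ (Bochner's theorem) together with $K$ being real-valued to conclude that $K$ is an even function, so its Taylor series at the origin contains no odd-degree terms. Combined with $K(0)=1$ and analyticity in $\{\|x\|_1 < r_K\}$, this gives a convergent expansion of the form $K(x) = 1 - \tfrac{1}{2}x^\top \Sigma x + R(x)$, where $\Sigma = \E_{\omega\sim p}[\omega\omega^\top]$ is the covariance of the spectral measure (extracted by expanding $\cos\langle\omega,x\rangle$ inside the Fourier integral), and $R$ is analytic with Taylor series starting at degree $4$.

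Next, I would lower-bound the quadratic term and upper-bound the remainder. Under the standing non-degeneracy assumption that $\Sigma$ is strictly positive definite with $\lambda := \lambda_{\min}(\Sigma) > 0$---which holds for natural kernels such as Gaussian and Cauchy---Cauchy--Schwarz gives $\tfrac{1}{2}x^\top \Sigma x \ge \tfrac{\lambda}{2}\|x\|_2^2 \ge \tfrac{\lambda}{2d}\|x\|_1^2$. For the remainder, I would apply Cauchy's estimates on a polydisc of radius proportional to $r_K$---the same analytic machinery invoked in the proof of \Cref{lem:Xi_moment}---to bound all Taylor coefficients of $K$ uniformly, and then collect the degree-$\ge 4$ monomials to obtain $|R(x)| \le C_K \|x\|_1^4$ whenever $\|x\|_1 \le r_K/2$, where $C_K$ depends only on $K$ and $r_K$.

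Combining, for any $\|x\|_1 \le R_K := \min\{r_K/2,\, \sqrt{\lambda/(4dC_K)}\}$, the quartic remainder is at most half the quadratic lower bound, which yields $1 - K(x) \ge \tfrac{\lambda}{4d}\|x\|_1^2$. Setting $c_K := \lambda/(2d)$ then gives the desired inequality $(1-K(x))/\|x\|_1^2 \ge c_K/2$.

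The main obstacle I anticipate is producing the quantitative bound on $R$ in the $\ell_1$-norm, since the estimates naturally produced by Cauchy's integral formula live on a polydisc (an $\ell_\infty$-ball); converting them to $\ell_1$ requires summing over multi-indices in a way that absorbs the combinatorics into the constant $C_K$. A secondary subtlety is that the non-degeneracy of $\Sigma$ is essential---if the spectral measure $p$ were supported on a proper affine subspace, the quadratic term would vanish in the orthogonal complement and the lemma would fail there; this should either be folded into the hypotheses on $K$ or handled by restricting attention to the subspace spanned by $\mathrm{supp}(p)$.
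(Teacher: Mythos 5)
Your argument is sound and, at its core, rests on the same fact as the paper's proof: the second-order behavior of $K$ at the origin is governed by the spectral second moment $\Sigma=\E_{\omega\sim p}[\omega\omega^\top]$, and $1-K(x)$ is bounded below by the resulting quadratic form near $0$. The execution differs: the paper argues softly, reducing to $\liminf_{x\to 0}(1-K(x))/\|x\|_1^2>0$ and invoking strong convexity of $t\mapsto K(tx)$ at the origin by differentiating under the Fourier integral, whereas you carry out a full Taylor expansion $K(x)=1-\tfrac12 x^\top\Sigma x+R(x)$ and control $R$ quantitatively via Cauchy estimates on a polydisc, obtaining explicit $c_K,R_K$. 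Your route is longer but more informative (explicit constants); note also that the Cauchy machinery is avoidable here, since the spectral representation gives $1-K(x)=\E[1-\cos\langle\omega,x\rangle]\ge \tfrac12\E\langle\omega,x\rangle^2-\tfrac1{24}\E\langle\omega,x\rangle^4$ and $\E\langle\omega,x\rangle^4\le \E[\|\omega\|_\infty^4]\,\|x\|_1^4$, with all moments finite by analyticity.

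The one weak point is that you leave the strict positive-definiteness of $\Sigma$ as a ``standing assumption.'' In the paper's setting this can be discharged rather than assumed: by Bochner's theorem as used in Section 2, $p$ is a probability \emph{density} on $\mathbb{R}^d$, so for every $x\neq 0$ the hyperplane $\{\omega:\langle\omega,x\rangle=0\}$ has measure zero and hence $x^\top\Sigma x=\E\langle\omega,x\rangle^2>0$; finiteness of second moments follows from analyticity at the origin, and compactness of the $\ell_1$ unit sphere then yields $\lambda_{\min}(\Sigma)>0$ uniformly. You are right, however, that if one only assumes a spectral \emph{measure} (e.g.\ supported in a proper subspace, as for $K(x)=e^{-x_1^2}$ in $d=2$), the lemma as literally stated fails on the orthogonal directions, where $1-K(x)=0$; this is an imprecision of the statement that the paper's own two-line proof also glosses over (and its displayed formula for $K''$ has a sign/integrand slip: the relevant quantity is $\frac{d^2}{dt^2}K(tx)\big|_{t=0}=-\E\langle\omega,x\rangle^2$). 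Your suggested fix of restricting to the span of $\mathrm{supp}(p)$ would not rescue the inequality itself in those degenerate directions, but there $\dist_\varphi=0$ and RFF preserves it trivially, so the main theorem is unaffected; the clean resolution is simply to use the density assumption as above.
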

\begin{proof}
    The proof can be found in \Cref{sec:proof_lemconvex}.
\end{proof}

\begin{proof}[Proof sketch of \Cref{thm:main_ub}]
    We present a proof sketch for \Cref{thm:main_ub}, and the full proof can be found in \Cref{sec:proof_main_ub}.
    We focus on the case when $ \|x - y\|_1 \leq R_K $ (the  other case can be found in the full proof).
    Then by Lemma~\ref{lem:convex}, we have $2 - 2K(x - y) \geq c \|x - y\|_1^2$.
        Then we have:
    \[
    \Pr \left[\left|\frac{2}{D}\sum_{i = 1}^D X_i(x - y) \right| 
    \leq   
    \epsilon \cdot (2 - 2K(x - y))\right] 
    \ge  
    \Pr\left[\left|\frac{2}{D}\sum_{i = 1}^D X_i(x - y)\right| \leq  c \epsilon  \cdot \|x - y\|_1^2\right].
    \]
    We take $r = r_K$ for simplicity of exhibition.
    Assume $\delta \leq \min\{\epsilon, 2^{-16}\}$, let $k = \log(2D^2/\delta), t = 64 k^2/r^2, $ is even.
    By Markov's inequality and Lemma~\ref{lem:Xi_moment}:
    \[
    \Pr[|X_i(x-y)| \geq t\|x-y\|^2_1] = \Pr\left[|X_i(x-y)|^k \geq t^k\|x-y\|_1^{2k}\right] \leq \frac{(4k)^{2k}}{t^k r^{2k}} = 4^{-k} \leq \frac{\delta}{2D^2}.
    \]
    For simplicity denote \(X_i(x-y)\) by \(X_i\), \(\|x-y\|_1^2\) by \(\ell\) and define $X_i' = \mathbbm{1}_{[|X_i| \geq t \ell]} t \ell \cdot \mathrm{sgn} (X_i)  + \mathbbm{1}_{[|X_i| < t \ell]} X_i$, note that $\E [X_i] = 0$. 
By some further calculations and plugging in the parameters $t, \delta, D$,
    we can eventually obtain $\E [|X_i'| ]\leq \delta \ell$.
    Denote $\sigma'^2$ as the variance of $X_i'$, then
    again by \Cref{lem:Xi_moment} we immediately have $\sigma' \leq   64 \ell/r^2$.
    The theorem follows by a straightforward application of Bernstein's inequality.
\end{proof}

\subsection{Dimension Reduction for Kernel Clustering}
\label{sec:mmr}

We present the formal statement for Theorem~\ref{cor:mmr} in Theorem~\ref{thm:mmr}.
In fact, we consider the more general $k$-clustering problem with $\ell_2^p$-objective defined in the following Definition~\ref{def:kclust_lp}, which generalizes kernel $k$-means (by setting $p = 2$).

\begin{definition}
    \label{def:kclust_lp}
    Given a data set $P \subset \mathbb{R}^d$ and kernel function $K : \mathbb{R}^d \times \mathbb{R}^d \to \mathbb{R}$,
    denoting the feature mapping as $\varphi: \mathbb{R}^d \to \calH$,
the kernel $k$-clustering problem with $\ell_p$-objective asks for
    a $k$-partition $\mathcal{C} = \{C_1,C_2,...,C_k\}$ of $P$ that minimizes the cost function:
        $\cost_p^{\varphi}(P, \mathcal{C}) := \sum_{i = 1}^k \min_{c_i \in \calH} \sum_{x \in C_i} \|\varphi(x) - c_i\|_2^p$.
\end{definition}

\begin{restatable}[{Generalization of~\citealt[Theorem 3.6]{DBLP:conf/stoc/MakarychevMR19}}]{theorem}{thmmmr}
\label{thm:mmr}
    For kernel $k$-clustering problem with $\ell_2^p$-objective whose kernel function $K : \mathbb{R}^d \times \mathbb{R}^d \to \mathbb{R}$
    is shift-invariant and analytic at the origin,
    for every data set $P \subset \mathbb{R}^d$,
    the \RFF mapping $\pi : \mathbb{R}^d \rightarrow \mathbb{R}^D$ with target dimension
    $D = \Omega(p^2\log^3\frac{k}{\alpha} + p^5\log^3\frac{1}{\epsilon} + p^8) / \epsilon^2$ satisfies
\[
        \Pr[\forall \text{$k$-partition $\mathcal{C}$ of $P$}:
        \cost_p^\pi(P, \mathcal{C}) \in (1 \pm \epsilon) \cdot \cost_p^\varphi(P, \mathcal{C})] \geq 1 - \delta.
    \]
\end{restatable}
\begin{proof}
    The proof can be found in \Cref{sec:proof_thm_mmr}.
\end{proof}

 \section{Lower Bounds}
\label{sec:lb}

\begin{restatable}{theorem}{thmlb}
\label{thm:lb}
    Consider $\Delta \geq \rho > 0$,
and a shift-invariant kernel function $K : \mathbb{R}^d \to \mathbb{R}$, denoting its feature mapping $\varphi : \mathbb{R}^d \to \calH$.
    Then there exists $x, y \in \mathbb{R}^d$ that are $(\Delta, \rho)$-bounded,
    such that for every $0 < \epsilon < 1$,
    the \RFF mapping $\pi$ for $K$ with target dimension $D$ satisfies
    \begin{equation}
        \label{eqn:lb}
        \Pr[ |\dist_\varphi(x, y) - \dist_\pi(x, y)| \geq \epsilon \cdot \dist_\varphi(x, y) ]
        \geq \frac{2}{\sqrt{2\pi}} \int_{6\epsilon\sqrt{D / s_K^{(\Delta, \rho)}}}^{\infty} e^{-s^2/2} \dif s - O\left(D^{-\frac{1}{2}}\right)
    \end{equation}
    where
    $s_K^{(\Delta, \rho)} := \sup_{\text{$(\Delta, \rho)$-bounded } x \in \mathbb{R}^d} s_K(x)$,
    and $s_K(x) := \dfrac{1 + K(2x) - 2K(x)^2}{2(1 - K(x))^2}$.
\end{restatable}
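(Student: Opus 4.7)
The plan is to reduce the event $|\dist_\pi(x,y)-\dist_\varphi(x,y)|\ge\epsilon\,\dist_\varphi(x,y)$ to a two-sided tail event for a sum of bounded i.i.d.\ random variables and then invoke the classical Berry-Esseen theorem; the Gaussian integral in~(\ref{eqn:lb}) is the normal-approximation term, and the additive $O(D^{-1/2})$ slack is the Berry-Esseen error.

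\emph{Choice of witnesses.} By definition of the supremum, pick a $(\Delta,\rho)$-bounded $z\in\mathbb{R}^d$ with $s_K(z)$ as close to $s_K^{(\Delta,\rho)}$ as desired, and set $x:=z$, $y:=0$; both are $(\Delta,\rho)$-bounded (the origin trivially is) and $x-y=z$. Define
\[
Z_i \;:=\; K(x-y)-\cos\langle\omega_i,x-y\rangle,\qquad S_D \;:=\; \sum_{i=1}^D Z_i,
\]
so that $\E[Z_i]=0$, $\sigma^2:=\Var(Z_i)=\tfrac{1+K(2(x-y))-2K(x-y)^2}{2}$ by the variance identity in Section~\ref{sec:prelim}, and $|Z_i|\le 2$. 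The key algebraic identity $\dist_\pi(x,y)^2-\dist_\varphi(x,y)^2=\tfrac{2}{D}S_D$ follows from $\dist_\varphi^2=2(1-K(x-y))$ and $\dist_\pi^2=2-\tfrac{2}{D}\sum_i\cos\langle\omega_i,x-y\rangle$.

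\emph{Tail reduction.} Squaring shows that $|\dist_\pi-\dist_\varphi|\ge\epsilon\dist_\varphi$ is equivalent to $\dist_\pi^2\notin[(1-\epsilon)^2,(1+\epsilon)^2]\dist_\varphi^2$, which for $\epsilon<1$ is implied by the sufficient condition $|\dist_\pi^2-\dist_\varphi^2|\ge 3\epsilon\dist_\varphi^2=6\epsilon(1-K(x-y))$. Substituting the identity above and dividing by $\sqrt{D}\,\sigma$, this becomes
\[
\frac{|S_D|}{\sqrt{D}\,\sigma} \;\ge\; 3\epsilon\,\frac{1-K(x-y)}{\sigma}\sqrt{D} \;=\; 3\epsilon\sqrt{D/s_K(x-y)},
\]
where I used $s_K(x-y)=\sigma^2/(1-K(x-y))^2$. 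Choosing $z$ with $s_K(z)$ sufficiently close to $s_K^{(\Delta,\rho)}$ lets us bound the right-hand side below by the threshold $6\epsilon\sqrt{D/s_K^{(\Delta,\rho)}}$ appearing in~(\ref{eqn:lb}), after absorbing the residual slack into the constant.

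\emph{Berry-Esseen.} Because $|Z_i|\le 2$, the third absolute moment of $Z_i$ is bounded by an absolute constant, so the standardized sum $S_D/(\sqrt{D}\,\sigma)$ is within Kolmogorov distance $O(D^{-1/2})$ of a standard normal. Combining this CDF bound with the tail reduction above yields
\[
\Pr\bigl[|\dist_\pi-\dist_\varphi|\ge\epsilon\dist_\varphi\bigr] \;\ge\; \frac{2}{\sqrt{2\pi}}\!\int_{6\epsilon\sqrt{D/s_K^{(\Delta,\rho)}}}^{\infty}\!e^{-s^2/2}\,\dif s \;-\; O(D^{-1/2}),
\]
which is exactly~(\ref{eqn:lb}). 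The main obstacle I anticipate is keeping the constants aligned through the squared-distance reduction so as to match the factor $6$, together with excluding the degenerate case $\sigma=0$; the latter would force $\cos\langle\omega,z\rangle$ to be a.s.\ constant and is ruled out by choosing $z$ in the support of the Fourier transform $p$ of $K$, which also keeps $\sigma$ bounded away from $0$ and makes the Berry-Esseen constant uniform.
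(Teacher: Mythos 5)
Your proposal is correct and follows essentially the same route as the paper's proof: pick a (near-)extremal $(\Delta,\rho)$-bounded difference with $y=0$, pass from relative distance error to the squared-distance deviation (the factor $6$ slack), rewrite it as a standardized i.i.d.\ sum of $\cos\langle\omega_i,x-y\rangle - K(x-y)$, and apply a Berry--Esseen two-sided tail lower bound with $s_K = \sigma^2/(1-K)^2$. The only cosmetic difference is that you spend the factor of $2$ (between $3\epsilon$ and $6\epsilon$) on taking $s_K(z)$ merely close to the supremum, whereas the paper assumes the supremum is attained and uses $6\epsilon$ directly; your handling is, if anything, slightly more careful.
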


\begin{proof}
    The proof can be found in \Cref{sec:proof_thmlb}.
\end{proof}

    Note that the right hand side of \eqref{eqn:lb} is always less than $1$, since the first term
    $\frac{2}{\sqrt{2\pi}} \int_{6\epsilon\sqrt{D / s_K^{(\Delta, \rho)}}}^{\infty} e^{-s^2/2} \dif s$ achieves its maximum at $\frac{D\epsilon^2}{s_K^{(\Delta, \rho)}} = 0$,
    and this maximum is 1.
    On the other hand, we need the right hand side of \eqref{eqn:lb} to be $>0$ in order to obtain a useful lower bound,
    and a typical setup to achieve this is when $D = \Theta\left(s_K^{(\Delta, \rho)}\right)$.
    
\paragraph{Intuition of $s_K$.}
    Observe that $s_K(x)$ measures the ratio between the variance of \RFF and the (squared) expectation evaluated at $x$.
    The intuition of considering this comes from the central limit theorem.
    Indeed, when the number of samples/target dimension is sufficiently large, the error/difference behaves like a Gaussian distribution where with constant probability the error $\approx \mathrm{Var}$.
    Hence, this $s_K$ measures the ``typical'' relative error when the target dimension is sufficiently large,
    and an upper bound of $s_K^{(\Delta, \rho)}$ is naturally a necessary condition for the bounded relative error.
    The following gives a simple (sufficient) condition for kernels that do not have a bounded $s_K(x)$.

\begin{remark}[Simple sufficient conditions for lower bounds]
\label{remark:lap}
Assume the input dimension is $1$, so $K : \mathbb{R} \to \mathbb{R}$, and assume $\Delta = 1$, $\rho < 1$. Then the $(\Delta, \rho)$-bounded property simply requires $\rho \leq |x| \leq 1$. We claim that, if $K$'s first derivative at $0$ is non-zero, i.e., $K'(0) \neq 0$, then RFF cannot preserve relative error for such $K$. 
To see this, we use Taylor's expansion for $K$ at the origin, and simply use the approximation to degree one, i.e., $K(x) \approx 1 + ax$ (noting that $x \leq 1$ so this is a good approximation), where $a = K'(0)$. Then
$$
s_K(x) = \frac{1 + 1 + 2ax - 2(1 + ax)^2}{2a^2x^2}=-1-\frac{1}{ax}.
$$
So if $a = K'(0)\neq 0$, then for sufficiently small $\rho$ and $|x| \geq \rho$, $s_K(\rho) \geq \Omega(1 / \rho)$.
This also implies the claim in \Cref{thm:main_lb} for Laplacian kernels (even though one needs to slightly modify this analysis since strictly speaking $K'$ is not well defined at $0$ for Laplacian kernels).
As a sanity check, for shift-invariant kernels that are analytic at the origin (which include Gaussian kernels),
it is necessary that $K'(0) = 0$.
\end{remark}

\section{Beyond \RFF: Oblivious Embedding for Laplacian Kernel}

In this section we provide a proof sketch for \cref{IntroThm:Embedding_Laplacian}.
A more detailed proof is deferred to \cref{sec:beyongdRR}.

\paragraph{Embedding}
To handle a Laplacian kernel function $K(x, y) = e^{-\frac{\|x-y\|_1}{c}}$ with some constant $c$, we cannot directly use the RFF mapping $\phi$, since our lower bound shows that the output dimension has to be very large when $K$ is not analytical around the origin.
To overcome this issue, we come up with the following idea. 
Notice that $L(x, y)$ relies on the $\ell_1$-distance between $x, y$.
If one can embed (embedding function $f$) the data points from the original $\ell_1$ metric space to a new metric space and ensure that there is an   kernel function $K'$, analytical around the origin, for the new space s.t. $K(x, y) = K'(f(x), f(y))$ for every pair of original data points $x, y$, then one can use the function composition $ \phi \circ f$ to get a desired mapping.

Indeed, we find that  $\ell_1$ can be embedded to $\ell_2^2$ isometrically \citep{kahane1980helices} in the following way.
Here for simplicity of exhibition we only handle the case where input data are from $\mathbb{N}^d$,   upper bounded by a natural number $N$.
Notice that even though input data points are only consisted of integers, the mapping construction needs to handle fractions, as we will later consider some numbers generated from Gaussian distributions or numbers computed in the RFF mapping.
So we first setup two numbers, $\Delta' = \poly(N, \delta^{-1})$ large enough and $\rho'  = 1/\poly(N,\delta^{-1})$ small enough. 
All our following operations are working on numbers that  are $(\Delta', \rho')$-bounded.
For each dimension we do the following transformation.
Let $\pi_1:  \mathbb{N}  \rightarrow \mathbb{R}^{ N}$ be such that for every $x \in \mathbb{N} , x \leq N $, the first $x$ entries of $\pi_1(x)$ is the number $1$, while all the remaining entries are $0$.
Then consider all $d$ dimensions.
The embedding function $\pi_1^{(d)}: \mathbb{N}^d  \rightarrow  \mathbb{R}^{Nd} $ be such that for every $x\in \mathbb{N}^d, x_i \leq N, \forall i\in [d]$, we have $\pi^{(d)}_1(x)$ being the concatenation of $d$ vectors $\pi_1(x_i), i\in [d]$. 
After embedding, consider a new kernel function $K' = e^{-\frac{\|x'- y'\|_2^2}{c}}$, where $x' = \pi^{(d)}_1(x), y' = \pi^{(d)}_1(y)$.
One can see immediately that $K'(x', y') = K(x,y)$.
Hence, we can apply RFF then, i.e. the mapping is $\phi \circ \pi_1^{(d)}$, which has a small output dimension. 
Detailed proofs can be seen in \cref{subsec:l1tol2square}.

However, there is another issue. 
In our setting, if the data is $(\Delta, \rho)$ bounded, then we have to pick $N = O(\frac{\Delta}{\rho})$.
The computing time has a linear factor in $N$, which is too large. 

\paragraph{Polynomial Time Construction}
To reduce computing time, we start from the following observation about the RFF mapping $\phi(x')$.
Each output dimension is actually a function of $\langle \omega, x' \rangle$, where $\omega$ is a vector of i.i.d Gaussian random variables.
For simplicity of description we only consider that $x$ has only one dimension and $x' = \pi_1(x)$.
So $x'$ is just a vector consists of $x$ number of $1$'s starting from the left and then all the remaining entries are $0$'s.
Notice that a summation of Gaussian random variables is still a Gaussian. 
So given $x$, one can generate $\langle \omega, x' \rangle$ according to the summation of Gaussians.
But here comes another problem. For two data points $x, y$, we need to use the same $\omega$. So if we  generate $\langle \omega, x' \rangle$ and  $\langle \omega', y'\rangle$ separately, then $\omega, \omega'$ are independent. 

To bypass this issue, first consider the following alternate way to generate $\langle \omega, x' \rangle$.
Let $h$ be the smallest integer s.t. $N \leq 2^h$. 
Consider a binary tree where each node has exactly 2 children. The depth is $h$. So it has exactly $2^h  $ leaf nodes in the last layer.
For each node $v$, we attach a random variable $\alpha_v$ in the following way.
For the root, we attach a Gaussian variable which is the summation of $2^h$ independent Gaussian variable with distribution $\omega_0$.
Then we proceed layer by layer from the root to leaves.
For each $u, v$ being children of a common parent $w$, assume that $\alpha_w$ is the summation of $2^l$ independent $\omega_0$ distributions.
Then let $\alpha_u$ be the summation of the first $2^{l-1}$ distributions among them and   $\alpha_v$ be the summation of the second $2^{l-1}$ distributions.
That is $\alpha_w = \alpha_u + \alpha_v$ with $\alpha_u, \alpha_v$ being independent.
Notice that conditioned on $\alpha_w = a$, then $\alpha_u$ takes the value $b$ with probability $\Pr_{\alpha_u, \alpha_v \text{ i.i.d. }  }[\alpha_u = b   \mid  \alpha_u + \alpha_v = a]$.
$\alpha_v$ takes the value $a-b$ when $\alpha_u$ takes value $b$.

The randomness for generating every random variable corresponding to a node,  is presented as a sequence, in the order from root to leaves, layer by layer, from left to right.
We define $\alpha^x$ to be the summation of the random variables corresponding to the first $x$ leaves.
Notice that $\alpha^x$ can be sampled efficiently in the following way. 
Consider the path from the root to the $x$-th leaf. 
First we sample the root, which can be computed using the corresponding part of the randomness.
We use a variable $z$ to record this sample outcome, calling $z$ an accumulator for convenience.
Then we visit each node along the path.
When visiting $v$, assume its parent is $w$, where $\alpha_w$ has already been sampled previously with outcome $a$. 
If   $v$    is  a  left child of  $w$, then we sample $\alpha_v$ conditioned on $\alpha_w = a$.
Assume this sampling has outcome $b$.
Then we add $-a + b$ to   the current accumulator $z$.
If  $v$ is a right child of a node $w$, then we keep the current   accumulator $z$ unchanged.
After visiting all nodes in the path, $z$ is the sample outcome for $\alpha^x$.
We can show that the joint distribution $\alpha^x,\alpha^y$ has basically the same distribution as $\langle \omega, \pi_1(x)\rangle, \langle \omega, \pi_1(y)\rangle$.
See \cref{lem:alternateMapping}.

The advantage of this alternate construction is that given any $x$, to generate $\alpha^x$, one only needs to visit the path from the root to the $x$-th leaf, using the above generating procedure. 
To finally reduce the time complexity, the last issue is that the uniform random  string for generating random variables here is very long. 
If we sweep the random tape to locate the randomness used to generate a variable corresponding to a node, then we still need a linear time of $N$.
Fortunately, PRGs for space bounded computation, e.g. Nisan's PRG \citep{DBLP:journals/combinatorica/Nisan92}, can be used here to replace the uniform randomness.
Because the whole procedure for deciding whether $  \|\phi\circ \pi_1(x) - \phi\circ \pi_1(y)\|_2  $ approximates $\sqrt{2-K(x,y)}$ within $(1\pm \varepsilon)$ multiplicative error,  is in poly-logarithmic space.
Also the computation of such PRGs can be highly efficient, i.e. given any index of its output, one can compute that bit in time polynomial of the seed length, which is poly-logarithmic of $N$. 
Hence the computing time of the mapping only has a factor poly-logarithmic in $N$ instead of a factor linear in $N$.

Now we have shown our construction for the case that all input data points are from $\mathbb{N}$.
One can generalize this to the case where all numbers are $(\Delta, \rho)$ bounded, by doing some simple roundings and shiftings of numbers.
Then this can be further generalized to the case where the input data has $d$ dimension, by simply handling each dimension and then concatenating them together.
More details of this part are deferred to \cref{subsec:PRGreducingtime}. \section{Experiments}
\label{sec:exp}
We evaluate the empirical relative error of our methods on a simulated dataset.
Specifically, we do two experiments, one to evaluate \RFF on a Gaussian kernel, and the other one
to evaluate the new algorithm in \Cref{Thm:Embedding_Laplacianfinal}, which we call ``New-Lap'', on a Laplacian kernel.
In each experiment, we compare against two other popular methods, particularly Nystr\"{o}m and random-projection methods.

\paragraph{Baselines.}
Observe that there are many possible implementations of these two methods.
However, since we focus on the accuracy evaluation, we choose computationally-heavy but more accurate implementations as the two baselines
(hence the evaluation of the error is only in the baseline's favor).
In particular, we consider 1) SVD low-rank approximation which we call ``SVD'', and 2) the vanilla \JL algorithm performed on top of the high-dimensional representation of points in the feature space, which we call ``JL''.
Note that SVD is the ``ideal'' goal/form of Nystr\"{o}m methods and that \JL applied on the feature space can obtain a theoretically-tight target-dimension bound (in the worst-case sense). 

\paragraph{Experiment Setup.}
Both experiments are conducted on a synthesized dataset $X$
which consists of $N = 100$ points with $d = 60$ dimensions generated i.i.d. from a Gaussian distribution.
For the experiment that we evaluate \RFF, we use a Gaussian kernel $K(x) = \exp(-0.5 \cdot \|x\|^2)$,
and for that we evaluate New-Lap, we use a Laplacian kernel $K(x) = \exp(-0.5 \cdot \|x\|_1)$.
In each experiment, for each method, we run it for varying target dimension $D$ (for SVD, $D$ is the target rank),
and we report its \emph{empirical} relative error, which is defined as
\[
    \max_{x\neq y \in X} \frac{|d'(x, y) - d_K(x, y)|}{d_K(x, y)},
\]
where $d_K$ is the kernel distance and $d'$ is the approximated distance.
To make the result stabilized, we conduct this entire experiment for every $D$ for $T = 20$ times and report the average and 95\% confident interval. 
We plot these dimension-error tradeoff curves, and we depict the results in \Cref{fig:dim_err}.

\begin{figure}[t]
    \centering
    \begin{subfigure}[b]{0.49\textwidth}
        \centering
        \includegraphics[width=\textwidth]{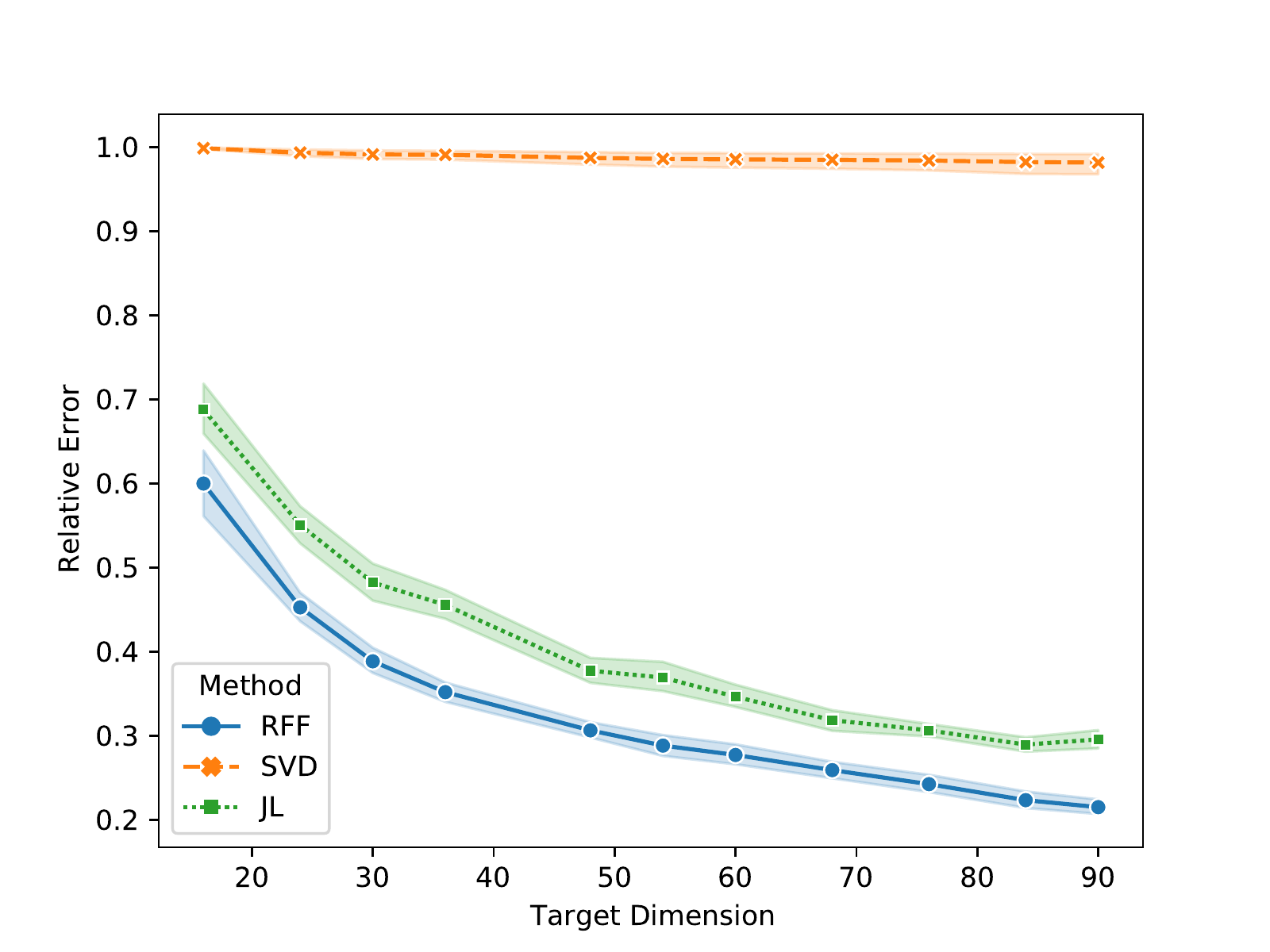}
        \caption{\RFF}
        \label{fig:rff}
    \end{subfigure}
    \hfill
    \begin{subfigure}[b]{0.49\textwidth}
        \centering
        \includegraphics[width=\textwidth]{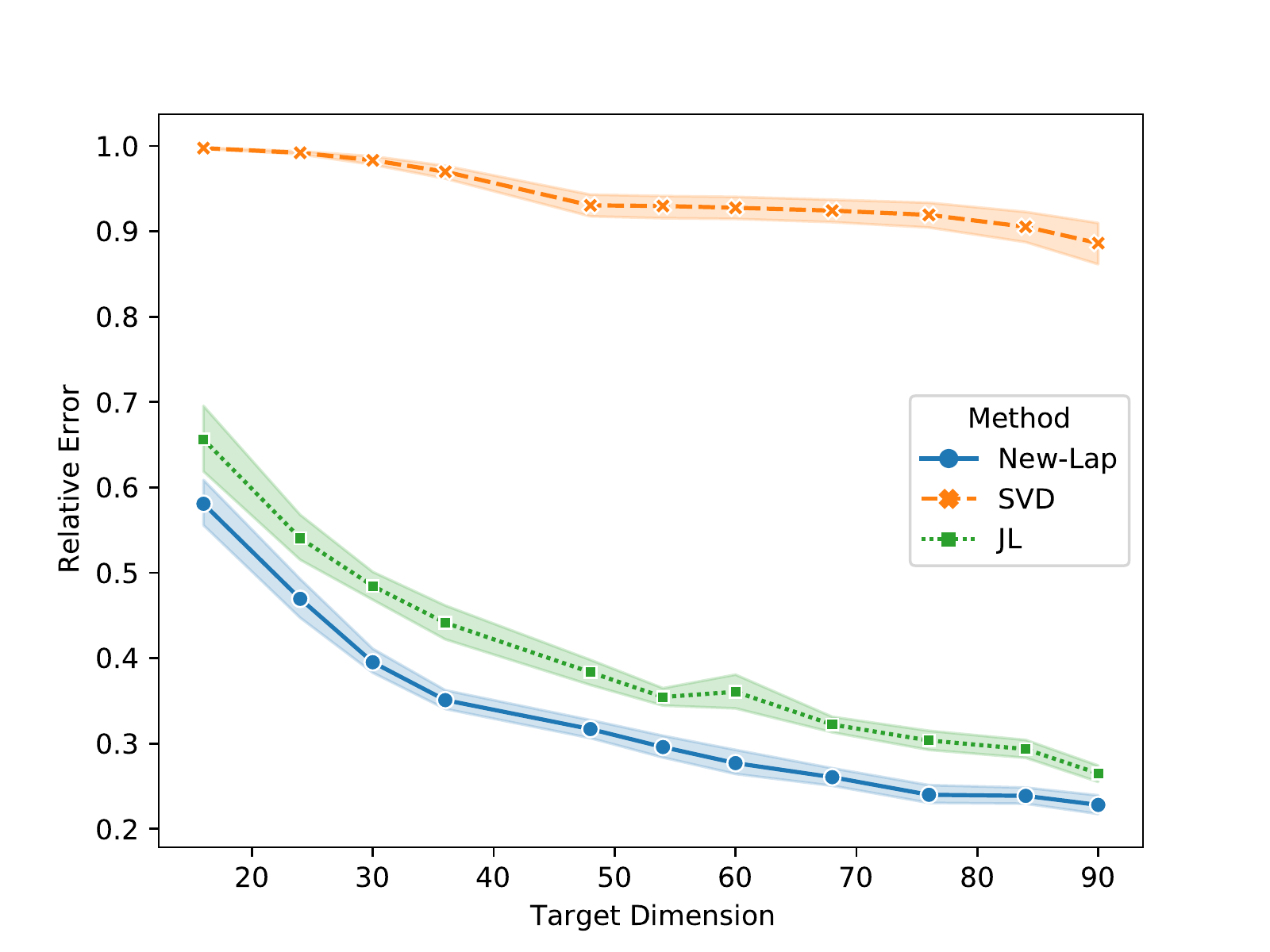}
        \caption{New-Lap}
        \label{fig:lap}
    \end{subfigure}
    \caption{The dimension-error tradeoff curves for both experiments, i.e., the experiment that evaluates \RFF and the one that evaluates New-Lap.}
    \label{fig:dim_err}
\end{figure}

\paragraph{Results.}
We conclude that in both experiments, our methods can indeed well preserve the relative error of the kernel distance,
which verifies our theorem. In particular, the dimension-error curve is comparable (and even slightly better) to the computationally heavy \JL algorithm
(which is theoretically optimal in the worst case).
On the contrary, the popular Nystr\"{o}m (low-rank approximation) method is largely incapable of preserving the relative error of the kernel distance.
In fact, we observe that $d'_{SVD}(x, y) = 0$ or $\approx$ 0 often happens for some pairs of $(x, y)$ such that $d(x, y) \neq 0$, which explains the high relative error.
This indicates that our methods can indeed well preserve the kernel distance in relative error, but existing methods struggle to achieve this.

 \subsubsection*{Acknowledgments}
Research is partially supported by a national key R\&D program of China No. 2021YFA1000900,   startup funds from Peking University, and the Advanced Institute of Information Technology, Peking University. 
\bibliographystyle{iclr2023_conference}
\bibliography{ref}

\begin{appendices}
    \appendixpage

\section{Proof of \Cref{lem:Xi_moment}}
\label{sec:proof_lem_xi_moment}

\lemmaximoment*

We first introduce following two lemmas to show the properties of $\E[X_i(x)^k]$.

\begin{lemma}
\label{lemma:expectation-cos}
    For any \(\omega_i\) sampled in \RFF and \(k \ge 0\), we have \(\E[\cos^k \langle \omega_i, x \rangle] = \frac{1}{2^k} \sum_{j=0}^{k} \binom{k}{j} K((2j-k)x)\).
\end{lemma}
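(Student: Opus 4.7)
The plan is to prove this identity by a direct expansion of $\cos^k$ via Euler's formula, followed by an application of the Fourier-inversion characterization of $K$ established in the preliminaries. First I would write
\[
\cos\langle \omega_i, x\rangle = \tfrac{1}{2}\bigl(e^{\mathrm{i}\langle \omega_i, x\rangle} + e^{-\mathrm{i}\langle \omega_i, x\rangle}\bigr),
\]
raise both sides to the $k$-th power, and apply the binomial theorem to obtain
\[
\cos^k \langle \omega_i, x\rangle = \frac{1}{2^k}\sum_{j=0}^{k} \binom{k}{j} e^{\mathrm{i}(2j-k)\langle \omega_i, x\rangle} = \frac{1}{2^k}\sum_{j=0}^{k}\binom{k}{j}e^{\mathrm{i}\langle\omega_i,(2j-k)x\rangle}.
\]
Taking expectation termwise (each summand has modulus at most $1$, so interchanging sum and expectation is unproblematic) reduces the task to computing $\E[e^{\mathrm{i}\langle \omega_i, y\rangle}]$ for $y = (2j-k)x$.

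For the second step, recall from Section~\ref{sec:prelim} that $\omega_i$ is drawn according to the density $p$, which is the Fourier transform of $K$. By Fourier inversion (Bochner's theorem),
\[
\E\bigl[e^{\mathrm{i}\langle \omega_i, y\rangle}\bigr] = \int_{\mathbb{R}^d} p(\omega)\, e^{\mathrm{i}\langle \omega, y\rangle}\,\dif\omega = K(y).
\]
Since $K$ is real-valued and shift-invariant (so $K(-y)=K(y)$), the density $p$ is symmetric about the origin, which forces $\E[\sin\langle \omega_i, y\rangle] = 0$; thus the complex exponential expectation agrees with $\E[\cos\langle\omega_i,y\rangle] = K(y)$, exactly the $t=1$ case of~\eqref{fourier}. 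Substituting $y = (2j-k)x$ back into the binomial expansion immediately yields $\frac{1}{2^k}\sum_{j=0}^{k}\binom{k}{j} K((2j-k)x)$, which is the claimed identity.

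I do not anticipate any substantive obstacle, since the proof is essentially a one-line manipulation once Euler's formula and the Fourier-inversion identity are in hand. The only minor care needed is to justify passing from $\E[\cos\langle\omega_i,y\rangle]$ (which is what the preliminaries give) to $\E[e^{\mathrm{i}\langle\omega_i,y\rangle}]$, and this is handled by the parity argument above.
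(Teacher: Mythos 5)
Your proposal is correct, and it reaches the key intermediate identity by a different route than the paper. The paper proves the pointwise trigonometric identity $\cos^k\langle\omega_i,x\rangle = \frac{1}{2^k}\sum_{j=0}^{k}\binom{k}{j}\cos((2j-k)\langle\omega_i,x\rangle)$ by induction on $k$, using the product-to-sum formula $2\cos\alpha\cos\beta=\cos(\alpha+\beta)+\cos(\alpha-\beta)$, and then applies \eqref{fourier} term by term; you instead obtain the same expansion in one step via Euler's formula and the binomial theorem, and then must bridge from $\E[e^{\mathrm{i}\langle\omega_i,y\rangle}]$ back to the real quantity $K(y)$, which you handle correctly with the parity of $p$ (equivalently, one could just take real parts of the whole expansion, or note that the terms $j$ and $k-j$ are complex conjugates, so the sum is already real). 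What your route buys is brevity and no induction; what the paper's route buys is that it stays entirely within real cosines, so \eqref{fourier} applies verbatim without the extra symmetry argument. Both are sound, and the termwise interchange of expectation and the finite sum is trivially justified as you note.
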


\begin{proof}
    By \cref{fourier} it is sufficient to prove that
    \[
    \cos^k \langle \omega_i, x \rangle = \frac{1}{2^k} \sum_{j=0}^{k} {k \choose j} \cos((2j-k) \langle \omega_i, x \rangle).
    \]
    We prove this by induction. In the case of \(k = 0\) the lemma holds obviously.
    
    If for $k$ the lemma holds, we have
    $$
    \begin{aligned}
    \cos^{k+1}(\langle \omega_i, x \rangle) &= \cos(\langle \omega_i, x \rangle) \cdot \frac{1}{2^k} \sum_{j=0}^{k} {k \choose j} \cos((2j-k) \langle \omega_i, x \rangle) \\
    &= \frac{1}{2^{k}} \sum_{j=0}^{k} \binom{k}{j} \cos(\langle \omega_i, x \rangle)\cos(2j-k)\langle \omega_i, x \rangle) \\
    &= \frac{1}{2^{k+1}} \sum_{j=0}^{k} \binom{k}{j} \bigg(\cos((2j-k+1)\langle \omega_i, x \rangle) + 
    \cos((2j-k-1)\langle \omega_i, x \rangle) \bigg)\\
    &= \frac{1}{2^{k+1}} \sum_{j=0}^{k} \binom{k}{j} \bigg(\cos((2(j+1)-(k+1))\langle \omega_i, x \rangle) + 
    \cos(2j-(k+1))\langle \omega_i, x \rangle) \bigg) \\
    &= \frac{1}{2^{k+1}} \sum_{j=0}^{k+1} \bigg({k \choose j} + {k \choose j - 1}\bigg) \cos((2j-(k+1)) \langle \omega_i, x \rangle) \\
    &= \frac{1}{2^{k+1}} \sum_{j=0}^{k+1} {k+1 \choose j} \cos((2j-(k+1)) \langle \omega_i, x \rangle)
    \end{aligned}
    $$
    where in the third equality we use the fact that $2\cos \alpha \cos \beta = \cos(\alpha + \beta) + \cos(\alpha - \beta)$.
\end{proof}

\begin{lemma}
\label{lemma:big-O-property}
    If there exists \(r > 0\) such that $K$ is analytic in $\{x \in \mathbb{R}^d : \|x\|_1 < r\}$, then $\forall k \geq 0$, $\lim_{x \rightarrow 0} \frac{\E [X_i(x)^k]}{\|x\|_1^{2k}} = c$  for some constant $c$.
\end{lemma}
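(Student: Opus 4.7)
The plan is to show that $\E[X_i(x)^k]$ is analytic in a neighborhood of the origin with Taylor series starting at total degree $2k$, from which $\E[X_i(x)^k] = O(\|x\|_1^{2k})$ as $x\to 0$ follows immediately.

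First, combining \Cref{lemma:expectation-cos} with the binomial expansion of $X_i(x)^k = (\cos\langle \omega_i, x\rangle - K(x))^k$ yields the closed-form identity
\[
\E[X_i(x)^k] = \sum_{m=0}^{k}\binom{k}{m}(-K(x))^{k-m}\cdot\frac{1}{2^m}\sum_{j=0}^{m}\binom{m}{j}K((2j-m)x),
\]
which is a finite algebraic combination of values $K(tx)$ for integers $|t|\le k$. Since $K$ is analytic on $\{\|x\|_1<r\}$, each such $K(tx)$ is analytic in $x$ on $\{\|x\|_1<r/k\}$, so $\E[X_i(x)^k]$ is analytic on this ball and admits a convergent Taylor expansion around $0$.

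Next, I would argue that the Taylor coefficient of $x^\alpha$ vanishes whenever $|\alpha|<2k$. The key pointwise observation is that $K$ is even, since a real shift-invariant kernel satisfies $K(x)=K(-x)$; hence its Taylor expansion at $0$ has only even-degree homogeneous pieces, $K(x) = 1 + Q_2(x) + Q_4(x) + \cdots$. Similarly, $\cos\langle \omega_i, x\rangle = 1 - \tfrac12\langle \omega_i,x\rangle^2 + \cdots$ is even in $x$ with constant term $1$. Consequently, for each fixed $\omega_i$, $X_i(x)$ is an analytic function of $x$ on $\{\|x\|_1<r\}$ vanishing to order $2$ at the origin, so $X_i(x)^k$ vanishes to order $2k$ at the origin. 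To transfer this pointwise-in-$\omega_i$ vanishing to $\E[X_i(x)^k]$, I would differentiate the closed form above term by term at $x=0$: this expresses $\partial^\alpha\E[X_i(x)^k]\big|_{0}$ as a finite linear combination of derivatives of $K$ at $0$, which through the Fourier representation $K(x) = \E[\cos\langle \omega, x\rangle]$ equals the same linear combination of mixed moments of $\omega$ (all finite because $K$ is analytic at $0$); regrouping recovers $\E[\partial^\alpha X_i(x)^k\big|_{0}]$, which is $0$ for $|\alpha|<2k$ by the pointwise vanishing.

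Putting these two steps together yields $\E[X_i(x)^k] = \sum_{|\alpha|\ge 2k} c_\alpha x^\alpha$ on a ball around the origin, and bounding each monomial by $\|x\|_1^{|\alpha|}$ gives $|\E[X_i(x)^k]|\le C\|x\|_1^{2k}$ for some constant $C = C(k,K)$ in a neighborhood of $0$, as claimed. The main technical obstacle is the interchange carried out in the second step; working from the closed-form expression sidesteps differentiating under an expectation directly and reduces everything to a finite Taylor computation on the analytic function $K$ together with the finiteness of $\omega$-moments implied by analyticity.
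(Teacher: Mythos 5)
Your proposal is correct in substance, but the key step is carried out by a genuinely different argument than the paper's. Both proofs start identically, using \Cref{lemma:expectation-cos} and the binomial expansion to write $\E[X_i(x)^k]$ as a finite algebraic expression in $K(tx)$, $|t|\le k$, hence analytic near the origin. For the crucial claim that all Taylor coefficients of total degree below $2k$ vanish, the paper stays entirely on the algebraic side: it expands every factor into the Taylor series of $K$, isolates the coefficient of an arbitrary monomial of degree $<2k$, and shows it is exactly zero via the shift-operator identity $\binom{k}{l}J^l\circ\bigl(J-\tfrac{J^0+J^2}{2}\bigr)^{k-l}(L)$, where the $(k-l)$-fold second difference annihilates the polynomial $L$ of degree $<2(k-l)$. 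You instead argue pointwise in $\omega$: since both $\cos\langle\omega_i,x\rangle$ and $K(x)$ are even with value $1$ at the origin, $X_i(x)$ vanishes to order $2$ and $X_i(x)^k$ to order $2k$ for every fixed $\omega_i$, and you transfer this to the expectation by differentiating the closed form and matching derivatives of $K$ at $0$ with mixed moments of $\omega$. This is a valid route, and it is shorter and more conceptual, but it leans on a standard-yet-nontrivial probabilistic fact that the paper's combinatorial argument never needs: analyticity of $K$ (the characteristic function of the symmetric density $p$) at the origin implies all moments of $\omega$ are finite, which in turn licenses $\partial^\gamma K(0)=\E[\partial^\gamma\cos\langle\omega,x\rangle|_{x=0}]$ by dominated convergence; you assert this parenthetically, and it should be stated and cited explicitly if written out in full. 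Conversely, the paper's approach is self-contained at the level of formal Taylor manipulations but requires the more intricate finite-difference bookkeeping. Note also that both arguments really establish that the expansion of $\E[X_i(x)^k]$ starts at degree $2k$, i.e.\ a bound $|\E[X_i(x)^k]|\le C\|x\|_1^{2k}$ near $0$; that is exactly what the proof of \Cref{lem:Xi_moment} consumes, even though the lemma's statement phrases it as a directional-independent limit.
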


\begin{proof}
    We denote analytic function \(K(x)\) as Taylor series around origin as 
    \begin{equation}\label{K(x)expand}
    K(x) = \sum_{\beta \in \mathbb{N}^d} c_\beta x^{\beta},
    \end{equation}
    where \(x^{\beta} := \prod_{i=1}^d x_i^{\beta_i}\) is a monomial and its coefficient is \(c_\beta\). By definition, \(c_{0} = 1\) since \(K(0) = 1\). We let \(s: \mathbb{N}^d \rightarrow \mathbb{N}, s(\beta) := \sum_{i=1}^{d} \beta_i\) denote the degree of $x^{\beta}$. Since \(K(x - y) = K(x, y) = K(y, x) = K(y - x)\) by definition, hence \(K(x)\) is an even function, so \(c_\beta = 0\) for \(s(\beta)\) odd. 
    
    Recall that  $X_i(x) := \cos \langle \omega_i, x \rangle - K(x)$.
    In the following, we drop the subscripts in \(X_i, \omega_i\) and write \(X, \omega\) for simplicity. 
    By the definition of $X$ we have
    \begin{equation}\label{Xi_expansion}
        \E[X(x)^k] = \sum_{i=0}^{k} \binom{k}{i} K(x)^i (-1) ^{k-i} \E [\cos^{k-i} \langle \omega, x \rangle].
    \end{equation}
    Note that by Lemma~\ref{lemma:expectation-cos}, $\E [\cos^{k-i} \langle \omega, x \rangle] = \frac{1}{2^{k-i}} \sum_{j=0}^{k-i} \binom{k-i}{j} K((2j-(k-i))x)$. Plug this in \cref{Xi_expansion}:
    \[
    \begin{aligned}
        \E [X(x)^k] &= \sum_{i=0}^{k} \binom{k}{i} K(x)^i (-1)^{k-i} \frac{1}{2^{k-i}} \sum_{j=0}^{k-i} \binom{k-i}{j} K((2j-(k-i))x)  \\
        &= \sum_{i=0}^{k} \binom{k}{i} K(x)^i \left(\frac{-1}{2}\right)^{k-i} \sum_{j=0}^{k-i} \binom{k-i}{j} \sum_{\beta \in \mathbb{N}^d} c_\beta (2j-k+i)^{s(\beta)} x^{\beta} \\
        &= \sum_{\beta \in \mathbb{N}^d} c_\beta x^{\beta} \sum_{i=0}^{k} \binom{k}{i} K(x)^i \left(\frac{-1}{2}\right)^{k-i} \sum_{j=0}^{k-i} \binom{k-i}{j} (2j-k+i)^{s(\beta)}
    \end{aligned}
    \]
    where the second equality comes from the Tyler expansion of \(K((2j-k+i)x)\). Next we will show that $\E [X(x)^k]$ is of degree at least \(2k\).
    
    For \(\beta = 0\) note that
    \(
    \sum_{i=0}^{k} \binom{k}{i} K(x)^i (-1)^{k-i} = (K(x)-1)^k
    \),
    since \(K(x)\) is even and \(K(0) = 1\), we have $\lim_{x \rightarrow 0} \frac{(K(x) - 1)^k}{x^t} = 0, \forall t < 2k$ .
For \(\beta \neq 0\), 
    we next show that every term of degree less than \(2k\) has coefficient zero.
        
    Fix \(\beta \neq 0\) and take Tyler expansion for \(K(x)^i\)
    \[
        K(x)^i = \sum_{\beta_1, \beta_2, \dots, \beta_i} c_{\beta_1} c_{\beta_2} \dots c_{\beta_i}  x^{\beta_1 + ... + \beta_i},
    \]
    Without loss of generality, we assume $\beta_{l + 1},...,\beta_i$ are all $\beta$s that equals $0$, so we have $c_{\beta_{l+1}} = ... = c_{\beta_{i}} = 1$. 
    
    Now we consider the coefficient of term \(c_{\beta} c_{\beta_1} c_{\beta_2} \dots, c_{\beta_l} x^{\beta + \sum_{j=1}^{l} \beta_j}\), which would be:
    \[
    \tilde{C} \sum_{i=0}^{k} \binom{k}{i} \binom{i}{l} \left(\frac{-1}{2}\right)^{k-i} \sum_{j=0}^{k-i} \binom{k-i}{j} (2j-k+i)^{s(\beta)}
    \]
    where \(\tilde{C}\) is the number of ordered sequence  \( (\beta_1,\beta_2,\dots,\beta_l ) \), here, for $\beta_1 = \beta_2, (\beta_1,\beta_2,\dots,\beta_l )$ and $(\beta_2,\beta_1,\dots,\beta_l )$ are equivalent.
    
    Next, we show if the degree of a monomial \(s(\beta) + \sum_{j=1}^{l} s(\beta_j) < 2k\), its coefficient is zero.
    Since all \(\beta_j \neq 0\), we may assume \(s(\beta_j) \ge 2\), therefore \(s(\beta) < 2k - 2l\).
    
    Suppose operator \(J\) is a mapping from a function space to itself,
    such that \(\forall f: \mathbb{R} \to \mathbb{R}, J(f): \mathbb{R} \to \mathbb{R}\) is defined by \(J(f)(x) := f(x + 1)\) . Denote \(J^1 = J, J^k := J \circ J^{k-1}\) as its \(k\)-time composition, define \(J^0\) to be the identity mapping such that \(J^0(f) = f\).
    Similarly we can define addition that \((J_1 + J_2)(f) = J_1(f) + J_2(f)\) and scalar multiplication that \((\alpha J)(f) = \alpha (J(f))\). By definition, $c J^m \circ J^n = J^m \circ (c J^n) = c J^{m + n}, \forall c \in \mathbb{R},m,n \in \mathbb{N}$.
    
    Let \(L(x) = x^{s(\beta)}\), the coefficient can be rewritten as:
    \[
    \tilde{C} \sum_{i=0}^{k} \binom{k}{i} \binom{i}{l} \left(\frac{-1}{2}\right)^{k-i} \sum_{j=0}^{k-i} \binom{k-i}{j} \left(J^{2j+i} (L)(-k)\right)
    \]
    Let \(P = \tilde{C} \sum_{i=0}^{k} \binom{k}{i} \binom{i}{l} \left(\frac{-1}{2}\right)^{k-i} \sum_{j=0}^{k-i} \binom{k-i}{j} J^{2j+i} (L)\), the above is \(P(-k)\). Now we show \(P \equiv 0\)
    \[
    \begin{aligned}
        P=& \tilde{C} \left( \sum_{i=0}^{k} \binom{k}{i} \binom{i}{l} \left(\frac{-1}{2}\right)^{k-i} J^i \circ \left(\sum_{j=0}^{k-i} \binom{k-i}{j} J^{2j} \right) \right)(L) \\
        =& \tilde{C} \left( \sum_{i=l}^{k} \binom{k}{l} \binom{k-l}{i-l} J^i \circ \left(-\frac{J^0+J^2}{2}\right)^{k-i} \right)(L) \\
        =& \tilde{C} J^{l} \circ \binom{k}{l} \left( \sum_{i=l}^{k}  \binom{k-l}{i-l} J^{i-l} \circ \left(-\frac{J^0+J^2}{2}\right)^{k-i} \right)(L) \\
        =& \tilde{C} J^{l} \circ \binom{k}{l} \left(J - \frac{J^0+J^2}{2}\right)^{k-l}(L).
    \end{aligned}
    \]
    
    Note that \(\left(J-\frac{J^0+J^2}{2}\right)(f)(x) = (f(x+1) - f(x))/ 2 - (f(x+2) - f(x+1))/2\) calculates second order difference, namely, $\forall f$ that is a polynomial of degree $k \geq 2, \left(J-\frac{J^0+J^2}{2}\right)(f)$ is a polynomial of degree $k - 2$,
    and $\forall f$ that is a polynomial of degree $k < 2, \left(J-\frac{J^0+J^2}{2}\right)(f)$ is $0$.
    
    Since \(L\) is a polynomial of degree less than \(2(k-l)\), we have
    \[
    \left(J- \frac{J^0+J^2}{2}\right)^{k-l}(L) \equiv 0.
    \]
    Combining the above two cases, we have proved \cref{Xi_expansion} is of degree at least $2k$, which completes our proof.
\end{proof}

\begin{proof}[Proof of \Cref{lem:Xi_moment}]
If \(2k \|x\|_1 \ge r\), since \(|X_i(x)| = |\cos \langle \omega_i, x \rangle - K(x)| \le 2\), we have \(\E [X_i(x)^k] \le 2^k \le \left(\dfrac{4k \|x\|_1}{r}\right)^{2k}\).
Otherwise \(\|x\|_1 < r / 2k\). Define \(g_k(x) := \E [X_i(x)^k] \), we have:
\[
g_k(x) = \sum_{i=0}^{\infty} \left(\sum_{j=1}^d x_j \frac{\partial}{\partial x_j} \right)^i \frac{g_k(x)}{i!} = \left(\sum_{j=1}^d x_j \frac{\partial}{\partial x_j} \right)^{2k} \frac{g_k(\theta x)}{(2k)!}, \quad \theta \in [0, 1]
\]

where the second equation comes from \Cref{lemma:big-O-property} and Taylor expansion with Lagrange remainder.

\begin{lemma}[Cauchy's integral formula for multivariate functions~\citealt{hormander1966introduction}]
    \label{lemma:cauchy_formula}

    For $f(z_1,...,z_d)$ analytic in $\Delta(z,r) = \left\{\zeta=(\zeta_1, \zeta_2, \dots, \zeta_d)\in \mathbb{C}^d ; \left|\zeta_\nu-z_\nu\right| \leq r_\nu, \nu = 1,\dots,d \right\}$
\[
\begin{aligned}
f(z_1,\dots,z_d)=\frac{1}{(2\pi \mathrm{i})^d}\int_{\partial D_1 \times \partial D_2 \times \dots \times \partial D_d}\frac{f(\zeta_1,\dots,\zeta_d)}{(\zeta_1-z_1) \cdots (\zeta_d-z_d)} \, d\zeta.
\end{aligned}
\]
Furthermore, 
\[
\begin{aligned}
\frac{\partial^{k_1+\cdots+k_d}f(z_1,z_2,\ldots,z_d)}{\partial{z_1}^{k_1} \cdots \partial{z_d}^{k_d} } = \frac{k_1! \cdots k_d!}{(2\pi \mathrm{i})^d} \int_{\partial D_1 \times \partial D_2 \dots \times \partial D_d} \frac{f(\zeta_1,\dots,\zeta_d)}{(\zeta_1-z_1)^{k_1+1} \cdots (\zeta_d-z_d)^{k_d+1} } \, d\zeta.
\end{aligned}
\]
If in addition $|f| < M$, we have the following evaluation:
\[
    \begin{aligned}
        \left|\frac{\partial^{k_1+\cdots+k_d} f(z_1,z_2,\ldots,z_d)}{{\partial z_1}^{k_1} \cdots \partial {z_d}^{k_d}} \right| \leq \frac{Mk_1! \cdots k_d!}{{r_1}^{k_1} \cdots {r_d}^{k_d}}.
    \end{aligned} 
\] 
\end{lemma}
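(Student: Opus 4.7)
My plan is to deduce this multivariate Cauchy integral formula from the classical single-variable Cauchy integral formula by iterating one coordinate at a time. Since $f$ is analytic in the polydisc $\Delta(z,r)$, it is, in particular, separately holomorphic in each complex variable when the others are held fixed in their respective disks. First I would apply the one-variable Cauchy formula to $\zeta_1 \mapsto f(\zeta_1, z_2, \ldots, z_d)$, with the remaining coordinates frozen, obtaining
\[
    f(z_1, z_2, \ldots, z_d) = \frac{1}{2\pi \mathrm{i}} \int_{\partial D_1} \frac{f(\zeta_1, z_2, \ldots, z_d)}{\zeta_1 - z_1}\, d\zeta_1.
\]
Then I would recursively apply the one-variable formula inside the integrand to $\zeta_2 \mapsto f(\zeta_1, \zeta_2, z_3, \ldots, z_d)$, and so on through all $d$ coordinates. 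After $d$ applications one obtains a $d$-fold iterated integral over $\partial D_1, \ldots, \partial D_d$. Because the integrand is jointly continuous on the compact distinguished boundary $\partial D_1 \times \cdots \times \partial D_d$, which is strictly separated from the singularities of the kernel, Fubini's theorem collapses the iterated integrals into the single integral over the product that the statement asserts.

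For the partial-derivative formula, I would differentiate both sides under the integral sign. This is justified because, for $z$ strictly inside the polydisc and $\zeta$ on the distinguished boundary, the kernel $\prod_{\nu=1}^{d}(\zeta_\nu - z_\nu)^{-1}$ is smooth in $z$ and all its partial derivatives are bounded uniformly in $\zeta$ on the torus. Differentiating $(\zeta_\nu - z_\nu)^{-1}$ a total of $k_\nu$ times in $z_\nu$ produces the factor $k_\nu! / (\zeta_\nu - z_\nu)^{k_\nu + 1}$, and the product structure of the kernel across the $d$ variables then yields the claimed $k_1! \cdots k_d!$ prefactor together with the higher powers in each denominator.

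For the final magnitude bound, under the hypothesis $|f| < M$, I note that on the torus $|\zeta_\nu - z_\nu| = r_\nu$ exactly, so the integrand is bounded in modulus by $M / \prod_{\nu} r_\nu^{k_\nu + 1}$. The total measure of the torus (the product of the arc lengths) equals $\prod_\nu (2\pi r_\nu)$; combining this with the constant $k_1! \cdots k_d! / (2\pi)^d$ in front of the integral gives the claimed inequality $M k_1! \cdots k_d! / \prod_\nu r_\nu^{k_\nu}$. The main technical points are the routine invocations of Fubini's theorem and of differentiation under the integral sign; both follow from uniform continuity (respectively, uniform boundedness of derivatives) of the integrand on a compact set bounded away from the kernel's singularities, so I do not expect any serious obstacle beyond the standard bookkeeping required to verify separate holomorphicity suffices to feed into the one-variable Cauchy integral formula at each step.
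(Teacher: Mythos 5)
Your proposal is correct: iterating the one-variable Cauchy formula coordinate-by-coordinate (using that joint analyticity gives separate holomorphy), collapsing the iterated integrals by Fubini, differentiating under the integral sign to pick up the factors $k_\nu!/(\zeta_\nu-z_\nu)^{k_\nu+1}$, and then estimating the integrand by $M/\prod_\nu r_\nu^{k_\nu+1}$ against the torus measure $\prod_\nu(2\pi r_\nu)$ indeed yields exactly the stated bound $Mk_1!\cdots k_d!/\prod_\nu r_\nu^{k_\nu}$. The paper does not prove this lemma at all --- it cites it as a classical result from H\"ormander --- and your argument is precisely the standard proof given there, so there is nothing to reconcile.
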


Recall that $g_k(x) = \E [X_i(x)^k] = \sum_{i=0}^{k} \binom{k}{i} K(x)^i (-1)^{k-i} \frac{1}{2^{k-i}} \sum_{j=0}^{k-i} \binom{k-i}{j} K((2j-(k-i))x)  $, so \(g_k(x) = \poly(K(x), K(-x), \dots, K(kx), K(-kx))\) is analytic when \(\|x\|_1 \le r/k\). Applying Cauchy's integral formula \Cref{lemma:cauchy_formula} (here \(\|z + \theta x\|_1 \le 2\cdot r/2k\) is in the analytic area),
\[
\begin{aligned}
g_k(x) & =  \sum_{t_1 + \dots + t_d = 2k} \frac{x_1^{t_1} x_2^{t_2} \dots x_d^{t_d}}{t_1! t_2! \dots t_d!} \frac{\partial^{2k} g_k(\theta x)}{\partial x_1^{t_1} \partial x_2^{t_2} \dots \partial x_d^{t_d}} \\
&= \sum_{t_1 + \dots + t_d = 2k} \frac{x_1^{t_1} x_2^{t_2} \dots x_d^{t_d}}{(2\pi \mathrm{i})^d} \int_{z \in \mathbb{C}^d, |z_i| = \frac{r}{2k}} \frac{g_k(z + \theta x)}{z_1^{t_1+1} \dots z_d^{t_d+1}} \dif z 
\end{aligned}
\]
we have
\[
|g_k(x)| \le \sup_{|z_i| = r/2k} |g_k(z + \theta x)| \left(\frac{2k}{r} \right)^{2k} \left|\sum_{i=1}^d x_i\right|^{2k} \le \left(\dfrac{4k \|x\|_1}{r}\right)^{2k}.
\]
\end{proof}

\section{Proof of \Cref{lem:convex}}
\label{sec:proof_lemconvex}

\lemconvex*

\begin{proof}
    It suffices to prove that \(\lim \inf_{x \rightarrow 0} \frac{1 - K(x)}{\|x\|_1^2} \ge c > 0\), for some \(c\).
    Towards proving this, we show that \(K\) is \emph{strongly} convex at origin. In fact, by definition, $\Re\int_{\mathbb{R}^d} p(\omega)e^{\mathrm{i} \langle\omega, tx\rangle} \dif\omega = K(tx)$ for every fixed $x$, therefore $K''(tx) = \Re\int_{\mathbb{R}^d} \|\omega\|^2p(\omega)e^{\mathrm{i} \langle\omega, tx\rangle} \dif\omega > 0$, hence $K(tx)$ is \emph{strongly} convex with respect to $t$ at origin, so is \(K(x)\).
\end{proof}

\section{Proof of \Cref{thm:main_ub}}
\label{sec:proof_main_ub}

\thmmainub*

\begin{proof}
    When \(\|x-y\|_1 \geq R_{K}\), consider the function \(g(t) = K(t(x - y))\). It follows from definition that \(g'(0) = 0, g''(t) = -\Re\int_{\mathbb{R}^d} \|\omega \|^2 \|x - y \|^2 p(\omega)e^{\mathrm{i} \langle\omega, t(x - y)\rangle} \dif\omega < 0 \), so \(g(t)\) strictly decreases for all \(t > 0\).
 So \( 2 - 2K(x - y) \geq 2 - 2 \max_{\|x-y\|_1 = R_{K}}K(x - y) > 0\). We denote $t = 2 - 2 \max_{\|x-y\|_1 = R_{K}}K(x - y)$, so by Chernorff bound, when \(D \geq \frac{1}{2t}(\ln \frac{1}{\delta} + \ln 2)\), we have:
    \[
    \Pr \left[\left|\frac{2}{D}\sum_{i = 1}^D X_i(x - y) \right| 
    \leq   
    \epsilon \cdot (2 - 2K(x - y))\right] 
    \ge  
    \Pr\left[\left|\frac{2}{D}\sum_{i = 1}^D X_i(x - y)\right| \leq t\right]
    \geq
    1 - \delta.
    \]
        When \(\|x-y\|_1 \le R_{K}\), by Lemma~\ref{lem:convex}, we have $2 - 2K(x - y) \geq c \|x - y\|_1^2$.
        Then we have:
    \[
    \Pr \left[\left|\frac{2}{D}\sum_{i = 1}^D X_i(x - y) \right| 
    \leq   
    \epsilon \cdot (2 - 2K(x - y))\right] 
    \ge  
    \Pr\left[\left|\frac{2}{D}\sum_{i = 1}^D X_i(x - y)\right| \leq  c \epsilon  \cdot \|x - y\|_1^2\right].
    \]
    We take $r = r_K$ for simplicity of exhibition.
    Assume $\delta \leq \min\{\epsilon, 2^{-16}\}$, let $k = \log(2D^2/\delta), t = 64 k^2/r^2, $ is even.
    By Markov's inequality and Lemma~\ref{lem:Xi_moment}:
    \[
    \Pr[|X_i(x-y)| \geq t\|x-y\|^2_1] = \Pr\left[|X_i(x-y)|^k \geq t^k\|x-y\|_1^{2k}\right] \leq \frac{(4k)^{2k}}{t^k r^{2k}} = 4^{-k} \leq \frac{\delta}{2D^2}.
    \]
    For simplicity denote \(X_i(x-y)\) by \(X_i\), \(\|x-y\|_1^2\) by \(\ell\) and define $X_i' = \mathbbm{1}_{[|X_i| \geq t \ell]} t \ell \cdot \mathrm{sgn} (X_i)  + \mathbbm{1}_{[|X_i| < t \ell]} X_i$, note that $\E [X_i] = 0$. 
    Then:
    \[
    \begin{aligned}
    |\E [X_i']| 
    & \leq 
    |\E [X_i' \mid |X_i'| < t \ell]| \cdot \Pr[|X_i'| < t \ell] +  t \ell \cdot \left|\Pr[X'_i \ge t \ell] - \Pr[X'_i \le -t \ell]\right| \\
    & = |\E [X_i' \mid |X_i'| < t \ell]|\cdot \Pr[|X_i| < t \ell] +  t \ell \cdot \left|\Pr[X_i \ge t \ell] - \Pr[X_i \le -t \ell]\right| \\
    & = \frac{| \E[X_i] - \E\left[X_i \mid |X_i| \ge t \ell \right] \Pr[|X_i| \ge t \ell]  |}{\Pr[|X_i| < t \ell]}  \Pr[|X_i| < t \ell] +  t \ell \cdot \left|\Pr[X_i \ge t \ell] - \Pr[X_i \le -t \ell]\right|\\
    & = | \E[X_i] - \E\left[X_i \mid |X_i| \ge t \ell \right] \Pr[|X_i| \ge t \ell]  | + t \ell \cdot \left|\Pr[X_i \ge t \ell] - \Pr[X_i \le -t \ell]\right|\\
    & = |   \E\left[X_i \mid |X_i| \ge t \ell \right] \Pr[|X_i| \ge t \ell]  | + t \ell \cdot \left|\Pr[X_i \ge t \ell] - \Pr[X_i \le -t \ell]\right|
    \end{aligned}
    \]
    where $t \ell \cdot |\Pr[X_i > t \ell] - \Pr[X_i < -t \ell]| \leq t \ell \cdot \Pr[|X_i| > t \ell] \leq t \ell\delta/(2D^2)$.
    The first inequality is by considering the two conditions $|X_i| < t \ell$ and $|X_i| \ge t \ell$, then taking a triangle inequality.
    The first and second equations are by definition of $X_i, X_i'$.
    The third equation is a straightforward computation. 
    The last equation is due to $\E[X_i] = 0$.
By Lemma~\ref{lem:Xi_moment} for every integer $\alpha $, 
    \begin{align*}
    \Pr\left[|X_i| \geq \alpha \ell/r^2\right]
    = \Pr\left[|X_i|^{\sqrt{\alpha}/8} \geq (\alpha \ell/r^2)^{\sqrt{\alpha}/8}\right] 
    						\le \frac{\E[|X_i|^{\sqrt{\alpha}/8 }] }{(\alpha \ell/r^2)^{\sqrt{\alpha}/8}}
    						\le 4^{-\sqrt{\alpha}/8}.
	\end{align*}
	The first equality is straightforward. 
	The first inequality is by Markov.
	The second  equality is by $\E[|X_i|^{\sqrt{\alpha}/8}] \le \left( \frac{ \alpha  \ell  }{4r^2} \right)^{\sqrt{\alpha}/8}$
    which follows from \Cref{lem:Xi_moment},
    and a rearrangement of parameters, where $r$ is the parameter $r$ in Lemma \ref{lem:Xi_moment}.
Therefore, 
    \[
    \begin{aligned}
    | \E [X_i \mid |X_i| \geq t \ell] | \Pr[|X_i| \ge t \ell]
    &\le
    \E [|X_i| \mid |X_i| \geq t \ell] \Pr[|X_i| \ge t \ell]\\
    & \le 
    (t+\frac{1}{r^2}) \ell \cdot \Pr[|X_i| \geq t \ell] + \frac{\ell}{r^2} \sum_{\text{ integer }\alpha \geq t r^2 + 1} \Pr[|X_i| \geq \alpha \ell/r^2] \\
    & \le  
    (t+\frac{1}{r^2}) \ell \cdot \frac{\delta}{2D^2} + \ell \int_{tr^2}^{\infty} 4^{-\sqrt{\alpha}/8} \dif \alpha \\
    & \le 
    \ell \left((t + \frac{1}{r^2})\frac{\delta}{2D^2} + \frac{16}{r^2 \ln 4} 4^{-tr^2/8}\right).
    \end{aligned}
    \]
    The first inequality is by the property of absolute value.
    The second inequality is because we can divide the event $|X_i | \ge t \ell$ into $|X_i| \in [ \alpha \ell / r^2, (\alpha+1) \ell/r^2 ), \alpha= tr^2, t r^2+1, \ldots$ and when $|X_i| \in [ \alpha \ell/r^2, (\alpha+1)\ell/r^2 )$, $|X_i| < (\alpha+1)\ell/r^2$.
    The third inequality is by pluging in the previous bound for $\Pr[|X_i| \geq \alpha \ell /r^2]$.
    The last inequality is by a calculation of the integral.
    
    By plugging in parameters $t$, $\delta$, $D \geq \max \{\Theta(\epsilon^{-1}\log^3(1/{\delta})), \Theta(\epsilon^{-2} \log(1/{\delta})) \}$, we have $\E [|X_i'| ]\leq \delta \ell$.
    Note that the $\Theta(D)$ hides a constant $r$.
    Denote $\sigma'^2$ as the variance of $X_i'$.
    So $\sigma' \leq   64 \ell/r^2$ by Lemma \ref{lem:Xi_moment}.
    \begin{lemma}[Bernstein's Inequality]
    Let $X_1, .., X_D$ be independent zero-mean random variables. Suppose that $|X_i| \leq M, \forall i$, then for all positive $t$,
        $$\Pr\left[\sum_{i = 1}^D X_i \geq t\right] \leq \exp\left(- \frac{t^2/2}{Mt/3 + \sum_{i = 1}^D \E [X_i^2]}\right).$$
    \end{lemma}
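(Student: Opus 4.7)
The plan is to prove Bernstein's inequality via the standard Chernoff/exponential-moment method. First, for any $\lambda > 0$, I would apply Markov's inequality to the nonnegative random variable $\exp(\lambda \sum_i X_i)$, obtaining
\[
\Pr\!\left[\sum_{i=1}^D X_i \geq t\right] \;\leq\; e^{-\lambda t}\prod_{i=1}^D \E\!\left[e^{\lambda X_i}\right],
\]
using independence of the $X_i$ to factor the expectation. This reduces the task to a pointwise bound on the moment generating function (MGF) of each $X_i$, followed by an optimization over the free parameter $\lambda$.

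The heart of the argument is the MGF bound: I will show that whenever $0 < \lambda < 3/M$,
\[
\E\!\left[e^{\lambda X_i}\right] \;\leq\; \exp\!\left(\frac{\lambda^2 \E[X_i^2]/2}{1-\lambda M/3}\right).
\]
To establish this, I expand $e^{\lambda X_i}=1+\lambda X_i + \sum_{k\geq 2}\lambda^k X_i^k/k!$, take expectations (using $\E[X_i]=0$), and then exploit $|X_i|\leq M$ to bound $|X_i|^k \leq M^{k-2} X_i^2$ for $k\geq 2$. The resulting sum is a geometric series in $\lambda M/3$ (using the elementary inequality $k! \geq 2\cdot 3^{k-2}$ for $k\geq 2$), which yields $\E[e^{\lambda X_i}] \leq 1 + \tfrac{\lambda^2 \E[X_i^2]/2}{1-\lambda M/3}$, and then the bound $1+u\leq e^u$ finishes this step. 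This MGF estimate is the main technical obstacle, since it is the one place where the two hypotheses (boundedness and zero mean) must be combined carefully; everything else is routine optimization.

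Plugging this bound back into the product over $i$ and letting $\sigma^2 := \sum_{i=1}^D \E[X_i^2]$ gives, for every $\lambda \in (0, 3/M)$,
\[
\Pr\!\left[\sum_{i=1}^D X_i \geq t\right] \;\leq\; \exp\!\left(-\lambda t + \frac{\lambda^2 \sigma^2/2}{1-\lambda M/3}\right).
\]
Finally, I will choose $\lambda = t/(\sigma^2 + Mt/3)$, which lies in $(0,3/M)$, and verify by direct substitution that the exponent simplifies to $-\,t^2/2 \,\big/\,(Mt/3 + \sigma^2)$, yielding exactly the stated bound. This choice of $\lambda$ is the essentially optimal one for the simplified upper envelope of the exponent and is the standard Bernstein choice; I would present the verification rather than rederive it by calculus. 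No additional inputs from elsewhere in the paper are needed, since the statement is a stand-alone probabilistic tool.
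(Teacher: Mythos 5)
Your proof is correct: the Chernoff--MGF argument with the bound $\E[e^{\lambda X_i}] \leq \exp\bigl(\tfrac{\lambda^2 \E[X_i^2]/2}{1-\lambda M/3}\bigr)$ (via $|X_i|^k \leq M^{k-2}X_i^2$ and $k! \geq 2\cdot 3^{k-2}$) and the choice $\lambda = t/(\sigma^2 + Mt/3)$ does yield exactly the stated exponent $-\tfrac{t^2/2}{Mt/3+\sum_i \E[X_i^2]}$. The paper itself states this lemma as a standard tool without proof, so there is nothing to compare against; what you give is the classical textbook derivation, and it is sound.
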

    
    Applying Bernstein's Inequality to $X_i'$, 
    \[
    \begin{aligned}
    \Pr\left[\sum_{i = 1}^D X_i' - D\E [X_i'] \geq (c\epsilon \ell/ \sigma') D \sigma'\right] \leq& \exp\left(-\frac{c^2 \epsilon^2 D}{ct\epsilon + 2\sigma'^2 / \ell^2}\right) \\
    \leq& \max\left\{\exp\left(-\frac{c \epsilon^2 D}{2t\epsilon}\right), \exp\left(-\frac{c^2 \epsilon^2 D}{4\sigma'^2 / \ell^2}\right)\right\}.
    \end{aligned}
    \]
    Since $D \geq \max \{ \Theta\left(t\epsilon^{-1} \log(1/\delta) \right)$, $\Theta\left( \epsilon^{-2} \log(1/{\delta}) \right)\}$, we have $\Pr\left[\sum_{i = 1}^D X_i' \geq \epsilon(\ell/\sigma')D\sigma'\right] \leq \delta/2$.
With $1 - \frac{\delta}{2}$ probability, every $X_i \leq t \ell, X_i' = X_i$.
    Therefore,
    \(
    \Pr\left[\sum_{i = 1}^D X_i \geq D(\delta + \epsilon) \ell\right] \leq \delta/2.
    \)
    Combine it together, $\Pr[ |\dist_\pi(x, y) - \dist_\varphi(x, y)| \leq   \epsilon  \cdot \dist_\varphi(x, y) ] \geq 1 - \delta$.

\end{proof}

\section{Proof of \Cref{thm:mmr}}
\label{sec:proof_thm_mmr}

\thmmmr*

The proof relies on a key notion of $(\epsilon,\delta,\rho)$-dimension reduction from~\citep{DBLP:conf/stoc/MakarychevMR19},
and we adopt it with respect to our setting/language of kernel distance as follows.
\begin{definition}[{\citealt[Definition 2.1]{DBLP:conf/stoc/MakarychevMR19}}]
\label{def:edr}
    For $\epsilon, \delta, \rho > 0$,
    a feature mapping $\varphi : \mathbb{R}^d \to \calH$ for some Hilbert space $\calH$,
    a random mapping $\pi_{d, D} : \mathbb{R}^d \to \mathbb{R}^D$ is an $(\epsilon, \delta, \rho)$-dimension reduction, if 
    \begin{itemize}
            \item for every $x, y \in \mathbb{R}^d$,  $\frac{1}{1 + \epsilon} \dist_\varphi(x,y) \leq \dist_\pi(x, y) \leq (1 + \epsilon)\dist_\varphi(x, y)$ with probability at least $1 - \delta$, and
\item for every fixed $p \in [1,\infty)$, $\E \left[\mathbbm{1}_{\{ \dist_\pi(x, y) > (1 + \epsilon)\dist_\varphi(x,y)\}} 
            \left( 
                \frac{\dist_\pi(x, y)^p}{\dist_\varphi(x,y)^p} - (1 + \epsilon)^p 
            \right)  
            \right] \leq \rho$.
\end{itemize}
\end{definition}

In~\cite{DBLP:conf/stoc/MakarychevMR19}, most results are stated for a particular parameter setup of \Cref{def:edr} resulted from Johnson-Lindenstrauss transform~\citep{johnson1984extensions},
but their analysis actually works for other similar parameter setups. The following is a generalized statement of~\cite[Theorem 3.5]{DBLP:conf/stoc/MakarychevMR19}
which also reveals how alternative parameter setups affect the distortion.
We note that this is simply a more precise and detailed statement of~\cite[Theorem 3.5]{DBLP:conf/stoc/MakarychevMR19},
and it follows from exactly the same proof in~\cite{DBLP:conf/stoc/MakarychevMR19}.

\begin{lemma}[{\citealt[Theorem 3.5]{DBLP:conf/stoc/MakarychevMR19}}]
\label{lemma:mmr_tech}
    Let $0 < \epsilon, \alpha < 1$ and $\theta := \min\{ \epsilon^{p+1}3^{-(p+1)(p+2)}, \alpha \epsilon^p/(10k(1+\epsilon)^{4p-1}), 1/10^{p+1} \}$.
    If some $(\epsilon,\delta,\rho)$-dimension reduction $\pi$ for feature mapping $\varphi : \mathbb{R}^d \to \calH$ of some kernel function
    satisfies $\delta \leq \min(\theta^7/600, \theta/k), \binom{k}{2}\delta \leq \frac{\alpha}{2}, \rho \leq \theta$, 
    then with probability at least $1 - \alpha$, for every partition $\mathcal{C}$ of $P$,
    \[
        \begin{aligned}
            \cost_p^\pi(P, \mathcal{C}) &\leq (1 + \epsilon)^{3p} \cost_p^\varphi(P, \mathcal{C}),  \\
            (1 - \epsilon) \cost_p^\varphi(P, \mathcal{C}) &\leq (1 + \epsilon)^{3p - 1}\cost_p^\pi(P, \mathcal{C}).
        \end{aligned}
    \]
\end{lemma}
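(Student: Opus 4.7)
}
I would follow the proof of~\citep[Theorem 3.5]{DBLP:conf/stoc/MakarychevMR19} almost verbatim, but substitute the Johnson--Lindenstrauss-specific tail and moment estimates used there by the abstract $(\epsilon,\delta,\rho)$-dimension-reduction guarantees of \Cref{def:edr}, and track how the parameters propagate. This strengthened statement is precisely what one obtains when the dependence on $(\epsilon,\delta,\rho)$ is made explicit at every step of the original argument; the definition of $\theta$ and the constraints $\delta\le\min(\theta^7/600,\theta/k)$, $\binom{k}{2}\delta\le\alpha/2$, $\rho\le\theta$ are exactly the hypotheses the MMR proof requires under this substitution.

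Step~1 would handle a \emph{single} partition $\mathcal C=\{C_1,\ldots,C_k\}$. For each cluster $C_i$ with feature-space optimal center $c_i^\varphi\in\calH$ and projected optimal center $c_i^\pi\in\mathbb R^D$, I would invoke the classical $1$-median (or $1$-mean when $p=2$) replacement lemma to swap $c_i^\varphi,c_i^\pi$ for a data-point pivot inside $\varphi(C_i),\pi(C_i)$ at the cost of one $(1+\epsilon)^p$ factor per direction. The pairwise distortion guarantee (first bullet of \Cref{def:edr}) then transfers the cost between feature and projected space up to another $(1+\epsilon)^p$ factor, which, applied in both directions (upper and lower), produces the stated $(1+\epsilon)^{3p}$ and $(1+\epsilon)^{3p-1}$. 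The failure probability for a fixed $\mathcal C$ is $O(k^2\delta)$, absorbed into $\alpha/2$ by the hypothesis $\binom{k}{2}\delta\le\alpha/2$.

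Step~2 would upgrade this to a uniform guarantee over all $k$-partitions simultaneously. Since the centers range over an uncountable set, I would (mirroring MMR) discretize the plausible cluster centers at scale $\epsilon/3$, producing $(1/\epsilon)^{O(p)}$ representative configurations per cluster and contributing the factor $\theta\le\epsilon^{p+1}3^{-(p+1)(p+2)}$ in the definition of $\theta$. For partitions whose optimal centers fall off the discretization, I would invoke the second bullet of \Cref{def:edr}: the expected over-stretch of any single pair beyond $(1+\epsilon)$ in the $p$-th power is at most $\rho$, so by linearity the total expected over-stretch across a cluster is at most $\rho\cdot\cost_p^\varphi(P,\mathcal C)\le\theta\cdot\cost_p^\varphi(P,\mathcal C)$, which a Markov step turns into a high-probability bound. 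The remaining $\alpha\epsilon^p/(10k(1+\epsilon)^{4p-1})$ and $10^{-(p+1)}$ pieces of $\theta$ enforce, respectively, the union bound over the discretization and some elementary $p$-power inequalities.

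The main obstacle is Step~2: the discretization/union-bound argument that promotes the per-partition guarantee to one uniform over all partitions, and the careful accounting of how many times each of the two bullets of \Cref{def:edr} is invoked, which is what produces the seemingly mysterious seventh power in $\delta\le\theta^7/600$. This entire chain of reasoning is already present in~\citep{DBLP:conf/stoc/MakarychevMR19}; what my plan adds is verifying that no step there genuinely uses a JL-specific feature beyond the two abstract consequences encapsulated by \Cref{def:edr}, so that the proof carries over word-for-word to our RFF setting (and indeed to any dimension reduction satisfying \Cref{def:edr}).
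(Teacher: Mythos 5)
Your plan coincides with the paper's own treatment: the paper gives no independent proof of \Cref{lemma:mmr_tech}, but simply asserts that it "follows from exactly the same proof" in Makarychev--Makarychev--Razenshteyn once the JL-specific estimates are replaced by the abstract properties in \Cref{def:edr} and the parameters $(\epsilon,\delta,\rho)$, $\theta$ are tracked explicitly, which is precisely what you propose. One minor caveat: your reconstruction of their Step~2 via an $\epsilon/3$-net over candidate centers is not how MMR actually obtain uniformity over all partitions (they bound the total expected excess stretch over all pairs via the second bullet of \Cref{def:edr} and apply Markov, avoiding any discretization), but since both you and the paper ultimately defer to MMR's argument verbatim, this does not affect the correctness of the approach.
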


\begin{proof}[Proof of \Cref{thm:mmr}]
    We verify that
    setting $D = \Theta(\log^3\frac{k}{\alpha} + p^3\log^3\frac{1}{\epsilon} + p^6) / \epsilon^2$,
    the \RFF mapping $\pi$ with target dimension $D$ satisfies the conditions in \Cref{lemma:mmr_tech}, namely, it is a $(\epsilon,\delta,\rho)$-dimension reduction .
    In fact, \Cref{thm:main_ub} already implies such $\pi$ satisfies that for every $x, y \in \mathbb{R}^d$,  $\frac{1}{1 + \epsilon} \dist_\varphi(x,y) \leq \dist_\pi(x, y) \leq (1 + \epsilon)\dist_\varphi(x, y)$ with probability at least $1 - \delta$, where $\delta = e^{-  c f(\epsilon,D)}$ for some constant $c$, and $f(\epsilon,D) := \max\{\epsilon^2 D, \epsilon^{1/3}D^{1/3}\}$.
    For the other part, 
    \begin{align}
        &\E \left[\mathbbm{1}_{\{ \dist_\pi(x, y)  > (1 + \epsilon)\dist_\varphi(x, y)\}} \left( \frac{\dist_\pi(x, y)^p}{\dist_\varphi(x, y)^p} - (1 + \epsilon)^p 
        \right) \right] \notag \\ 
        =& \int_{\epsilon}^{\infty} ((1 + t)^p - (1 + \epsilon)^p) \dif \left( - \Pr\left(\frac{\dist_\pi(x, y)}{\dist_\varphi(x, y)} > t + 1\right) \right) \notag \\ 
        =& \left.[- (1 + m)^p + (1 + \epsilon)^p]\Pr\left(\frac{\dist_\pi(x, y)}{\dist_\varphi(x, y)} > m + 1\right)   \right|_{m = \epsilon}^{m = +\infty}  \notag \\
        &\qquad + \int_{\epsilon}^{\infty} p(1 + t)^{p - 1} \Pr\left(\frac{\dist_\pi(x, y)}{\dist_\varphi(x, y)} > t + 1\right) \dif t \tag{integration by part}\\
        =& \int_{\epsilon}^{\infty} p(1 + t)^{p - 1} \Pr\left(\frac{\dist_\pi(x, y)}{\dist_\varphi(x, y)} > t + 1\right) \dif t  \notag\\ 
        \leq& \int_{\epsilon}^{\infty} p(1 + t)^{p - 1}e^{-c f(t,D)} \dif t \notag.
    \end{align}
Where the third equality follows by $\Pr\left(\frac{\dist_\pi(x, y)}{\dist_\varphi(x, y)} > m\right)$ decays exponentially fast with respect to $m$.
    Observe that for $p \geq 1, D \geq \frac{(p - 1)^3}{8c^3}, p(1 + t)^{p - 1}e^{-c D^\frac{1}{3}t^{\frac{1}{3}}/2}$ decrease when $t \geq \epsilon$, 
    and for $D \geq \frac{c(p - 1)}{\epsilon^2}, p(1 + t)^{p - 1}e^{-c t^2D/2}$ decrease when $t \geq \epsilon$.
Hence for $D \geq \max\{\frac{(p - 1)^3}{8c^3}, \frac{c(p - 1)}{\epsilon^2}\}$, we have
$$\int_{\epsilon}^{\infty} p(1 + t)^{p - 1}e^{- c f(t,D)} dt \leq c' \int_{\epsilon}^{\infty} e^{-c f(t,D) / 2} dt < c'' e^{-c f(\epsilon,D) / 2}.$$
In conclusion, by setting $D = \Theta(\log^3\frac{k}{\alpha} + p^3\log^3\frac{1}{\epsilon} + p^6) / \epsilon^2$,
    for $\delta = e^{-c f(\epsilon,D)}, \rho = c'' e^{-c f(\epsilon,D)}$ and $f(\epsilon, D) = \max\{ \epsilon^2 D, \epsilon^{1/3}D^{1/3} \}$,
    it satisfies $\delta \leq \min(\theta^7/600, \theta/k), \binom{k}{2}\delta \leq \frac{\alpha}{2}, \rho \leq \theta$. This verifies the condition of \Cref{lemma:mmr_tech}.
    
    Finally, we conclude the proof of \Cref{thm:mmr} by plugging $\epsilon' = \epsilon / 3p$ and the above mentioned \RFF mapping $\pi$ with target dimension $D$ into \Cref{lemma:mmr_tech}. 
\end{proof}

\section{Proof of \Cref{thm:lb}}
\label{sec:proof_thmlb}

\thmlb*

\begin{proof}
Our proof requires the following anti-concentration inequality.
    \begin{lemma}[\citealt{berry1941accuracy,esseen1942liapunov}]
    \label{lemma:anti_concentration}
For i.i.d. random variables $\xi_i \in \mathbb{R}$ with mean 0 and variance 1, let $X := \frac{1}{\sqrt{D}}\sum_{i = 1}^D \xi_i$, then for any t, 
        \[
        \Pr[X \geq t] \geq  \frac{1}{\sqrt{2\pi}} \int_t^\infty e^{-s^2/2} \dif s - O(D^{-\frac{1}{2}} )
        \]
    \end{lemma}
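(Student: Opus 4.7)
The plan is to establish the classical Berry--Esseen theorem in its standard two-sided form, namely that under a finite third absolute moment assumption $\rho := \E|\xi_1|^3 < \infty$, one has $\sup_t |\Pr[X \leq t] - \Phi(t)| = O(\rho / \sqrt{D})$, where $\Phi$ is the standard normal CDF. The stated lemma is then an immediate consequence: writing $\Pr[X \geq t] = 1 - \Pr[X < t]$ and noting that $\Phi$ is continuous, we obtain $\Pr[X \geq t] \geq 1 - \Phi(t) - O(D^{-1/2}) = \frac{1}{\sqrt{2\pi}} \int_t^\infty e^{-s^2/2}\, \dif s - O(D^{-1/2})$.

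To prove the Berry--Esseen bound, I would follow the standard characteristic-function route. Let $f(u) := \E[e^{\mathrm{i} u \xi_1}]$ be the characteristic function of a single $\xi_i$; by i.i.d.\ and the definition of $X$, the characteristic function of $X$ is $f_X(u) = f(u/\sqrt{D})^D$. A Taylor expansion at the origin, together with $\E[\xi_1]=0$ and $\Var(\xi_1)=1$, yields $f(v) = 1 - v^2/2 + r(v)$ with $|r(v)| \leq \rho |v|^3 / 6$. Raising to the $D$-th power and carefully tracking error terms gives, uniformly for $|u| \leq c\sqrt{D}/\rho$, an estimate of the form $|f_X(u) - e^{-u^2/2}| \leq C \rho |u|^3 D^{-1/2} e^{-u^2/4}$.

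The second key step is Esseen's smoothing inequality, which translates a bound on characteristic functions into a bound on CDFs: for any $T>0$,
\[
\sup_t |F_X(t) - \Phi(t)| \;\leq\; \frac{1}{\pi} \int_{-T}^{T} \left| \frac{f_X(u) - e^{-u^2/2}}{u} \right| \dif u + \frac{C'}{T}.
\]
Plugging in the characteristic-function estimate from the previous step and choosing $T = \Theta(\sqrt{D}/\rho)$ balances the two contributions, and each is $O(\rho/\sqrt{D})$. This yields the Berry--Esseen conclusion and hence the lemma.

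The main technical obstacle lies in two places. First, the Taylor-based estimate on $f_X(u)$ must be carried out with enough uniformity in $u$ over the range $[-T,T]$; for large $|u|$ close to $\sqrt{D}$, one must use the bound $|f(v)| \leq 1 - v^2/3$ for small $v$ combined with $|1+z|^D \leq e^{D \Re z}$ to retain the Gaussian decay factor. Second, proving Esseen's smoothing inequality itself requires Fourier inversion with a carefully chosen kernel (typically Fejér-type), which is the classical analytic crux. Because this lemma is invoked here as the well-known result of Berry and Esseen, in the paper it would be acceptable to simply defer to the cited references rather than reproduce the argument in full.
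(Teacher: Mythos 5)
The paper gives no proof of this lemma at all: it is quoted as the classical Berry--Esseen theorem and discharged by the citations, so your sketch (characteristic-function expansion plus Esseen's smoothing inequality, then the one-sided bound via $\Pr[X \geq t] = 1 - \Pr[X < t]$) is exactly the standard argument behind the cited result, and deferring to the references, as you note, is what the authors do. One caveat worth flagging: your proof (necessarily) assumes $\rho = \E|\xi_1|^3 < \infty$, a hypothesis the lemma as stated omits; the $O(D^{-1/2})$ rate is genuinely false under only mean-zero and unit-variance assumptions, so strictly you prove a slightly stronger-hypothesis version. This is harmless in the paper's application, where $\xi_i$ is the bounded variable $X_i(x-y)/\sigma(x-y)$ with $|X_i| \leq 2$, though the constant hidden in $O(D^{-1/2})$ then depends on $\sigma(x-y)$, i.e.\ on the kernel and the point pair.
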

    Let \(X_i(x) := \cos \langle \omega_i, x \rangle - K(x), \sigma(x) := \sqrt{\Var(X_i(x))} = \sqrt{\frac{1 + K(2(x)) - 2K(x)^2}{2}}\), choose $x, y$ such that $s_K(x - y) = s_K^{(\Delta, \rho)}$.
    Clearly, such pair of $x, y$ satisfies that $(x - y)$ is $(\Delta, \rho)$-bounded.
    In fact, it is without loss of generality to assume that both $x$ and $y$ are $(\Delta, \rho)$-bounded,
    since one may pick $y' = 0$, $x' = x - y$ and still have $x ' - y' = x - y$.
    We next verify that such $x, y$ satisfy our claimed properties.
    Indeed,
    \[
    \begin{aligned}
        & \Pr[ |\dist_\varphi(x, y) - \dist_\pi(x, y)| \geq \epsilon \cdot \dist_\varphi(x, y)] \\
        \geq & \Pr[ |\dist_\varphi(x, y)^2 - \dist_\pi(x, y)^2|
        \geq 6 \epsilon \cdot \dist_\varphi(x, y)^2] \\
        = & \Pr\left[\left| \frac{2}{D}\sum_{i = 1}^D X_i(x - y) \right| \geq 6\epsilon(2 - 2K(x - y))\right] \\
        = & \Pr\left[\left| \frac{1}{\sqrt{D}\cdot \sigma(x- y)}\sum_{i = 1}^D X_i(x - y) \right| \geq 6\epsilon(1 - K(x - y)) \cdot \frac{\sqrt{D}}{\sigma(x - y)}\right] \\
        \geq & -O(D^{-1/2}) + \frac{2}{\sqrt{2\pi}} \int_{6\epsilon(1-K(x-y))\sqrt{D}/ \sigma(x-y)}^{\infty} e^{-s^2/2} \dif s \\
        = & -O(D^{-1/2}) + \frac{2}{\sqrt{2\pi}} \int_{6\epsilon\sqrt{D / s_K^{(\Delta, \rho)}}}^{\infty} e^{-s^2/2} \dif s ,
    \end{aligned}
    \]
    where the second inequality is by Lemma~\ref{lemma:anti_concentration}, and the the second-last equality follows from
    the definition of $s_K(\cdot)$, and that of $x, y$ such that $s_K(x- y) = s_K^{(\Delta, \rho)}$.
\end{proof}

     \section{Beyond \RFF: Oblivious Embedding for Laplacian Kernel with Small Computing Time}
\label{sec:beyongdRR}
In this section we show an oblivious feature mapping for Laplacian kernel dimension reduction with small computing time.
The following is the main theorem.
\begin{theorem}
\label{Thm:Embedding_Laplacianfinal}
Let $K$ be a Laplacian kernel   with feature mapping  $\varphi : \mathbb{R}^d \to \calH$.
    For every $ 0< \delta \leq \varepsilon \leq 2^{-16}$,  every $d, D \in \mathbb{N}$, $D \geq \max \{\Theta(\varepsilon^{-1}\log^3(1/{\delta})), \Theta(\varepsilon^{-2} \log(1/{\delta})) \}$, every $\Delta \ge \rho>0$,
    there is a mapping $\pi : \mathbb{R}^d \to \mathbb{R}^D$, 
    such that
      for every $x, y\in \mathbb{R}^d$ that are $(\Delta, \rho)$-bounded, 
    \[
    \Pr[ |\dist_\pi(x, y) - \dist_\varphi(x, y)| \leq   \varepsilon  \cdot \dist_\varphi(x, y) ] \geq 1 - \delta.
    \]
The time for evaluating $\pi$ is $d D\poly(  \log \frac{\Delta}{\rho},   \log \delta^{-1})$.
\end{theorem}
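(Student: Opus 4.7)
The plan is to reduce the Laplacian kernel problem to a Gaussian kernel problem via the classical isometric embedding of $\ell_1$ into squared $\ell_2$, and then apply Theorem \ref{thm:main_ub} since the Gaussian kernel is shift-invariant and analytic at the origin. For this plan I would start by handling the simplified case where every coordinate is a natural number bounded by some $N$. The per-coordinate map $\pi_1: \mathbb{N} \to \mathbb{R}^N$ that sends $x$ to the indicator vector of $\{1,\dots,x\}$ satisfies $\|\pi_1(x) - \pi_1(y)\|_2^2 = |x - y|$, and concatenating this over the $d$ coordinates yields $\pi_1^{(d)}: \mathbb{N}^d \to \mathbb{R}^{Nd}$ with $\|\pi_1^{(d)}(x) - \pi_1^{(d)}(y)\|_2^2 = \|x - y\|_1$. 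Hence the Laplacian kernel $K(x,y)=e^{-\|x-y\|_1/c}$ coincides with a Gaussian kernel $K'(u,v)=e^{-\|u-v\|_2^2/c}$ evaluated at the embedded points, and composing the RFF mapping $\phi$ for $K'$ with $\pi_1^{(d)}$ gives a candidate $\pi = \phi \circ \pi_1^{(d)}$ whose distance guarantee follows directly from Theorem \ref{thm:main_ub} applied to the Gaussian kernel.

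To cover the general $(\Delta,\rho)$-bounded setting, I would first shift and round each coordinate to an appropriate lattice of resolution $\rho' = 1/\mathrm{poly}(\Delta/\rho, 1/\delta)$ and magnitude $\Delta'=\mathrm{poly}(\Delta/\rho, 1/\delta)$ so that the resulting integer representations lie in $\{0,\dots,N\}^d$ with $N = \Delta'/\rho' = \mathrm{poly}(\Delta/\rho, 1/\delta)$. A short perturbation analysis then shows that the distortion introduced by this rounding is negligible compared to the target $\varepsilon$-distortion. The success probability and target dimension $D$ are inherited from Theorem \ref{thm:main_ub}, so the statistical half of the theorem goes through.

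The main obstacle, and the heart of this theorem, is to evaluate $\phi \circ \pi_1^{(d)}$ in time polylogarithmic rather than linear in $N$. Each output coordinate of $\phi$ is determined by the inner product $\langle \omega, \pi_1(x)\rangle$ for a Gaussian $\omega \in \mathbb{R}^N$, and on the support of $\pi_1(x)$ this inner product is a prefix sum of $x$ independent Gaussians. I would implement the balanced-binary-tree scheme described in the sketch: attach independent Gaussians to the leaves implicitly, and at every internal node store the sum of the Gaussians in its subtree; then, for a given $x$, the quantity $\langle \omega, \pi_1(x)\rangle$ is computed by walking down the root-to-$x$ path, repeatedly splitting a parent's Gaussian into a conditioned pair (for the left child) while leaving the accumulator unchanged (for the right child). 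I would verify by a direct calculation that the joint distribution of the resulting $(\alpha^x,\alpha^y)$ matches that of $(\langle \omega,\pi_1(x)\rangle,\langle \omega,\pi_1(y)\rangle)$, and that the procedure uses only $O(\log N)$ conditional Gaussian samples, each computable from a short block of random bits. Finally, since the entire verification procedure that checks whether $\pi$ preserves the kernel distance to within $(1\pm\varepsilon)$ can be carried out in $\mathrm{polylog}(N)$ space, I would replace the long uniform random tape by the output of Nisan's pseudorandom generator \citep{DBLP:journals/combinatorica/Nisan92}, whose seed length is $\mathrm{polylog}(N,1/\delta)$ and any bit of whose output can be computed in $\mathrm{poly}(\log N)$ time. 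This yields the promised evaluation time $dD\cdot \mathrm{poly}(\log(\Delta/\rho),\log(1/\delta))$ while changing the success probability by at most an additive $\mathrm{poly}(\delta)$ term. The delicate points to get right will be bounding the rounding error through the nonlinear Gaussian kernel, verifying the Gaussian-tree distributional identity conditioned on arbitrary partial sums, and checking that the whole verification procedure really fits into polylogarithmic space so that Nisan's PRG indeed fools it.
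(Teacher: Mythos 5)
Your proposal is correct and follows essentially the same route as the paper's proof: the Kahane-style prefix-indicator embedding of $\ell_1$ into $\ell_2^2$ composed with Gaussian-kernel RFF (invoking \Cref{thm:main_ub}), the rounding/shifting reduction for general $(\Delta,\rho)$-bounded inputs, the binary-tree conditional-Gaussian construction for prefix sums, and the replacement of the random tape by Nisan's PRG justified by the read-once, polylog-space nature of the distance-verification test. The "delicate points" you flag are exactly the ones the paper handles (the joint-distribution lemma for the tree, the small-space test, and the small statistical bias from simulating Gaussian samples with finitely many bits).
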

For simplicity of exhibition, we first handle the case when the input data are from $\mathbb{N}^d$. 
At the end we will describe how to handle the case when the input data are from $\mathbb{R}^d$ by a simple transformation.
Let $N \in \mathbb{N}$ be  s.t. every entry of an input data point is at most $N$. 
Even though input data are only consisted of integers, the mapping construction needs to handle fractions, as we will later consider some numbers generated from Gaussian distributions or numbers computed in the RFF mapping.
So we first setup two numbers, $\Delta' = \poly(N,\delta^{-1})$ large enough and $\rho'  = 1/\poly(N,\delta^{-1})$ small enough. 
All our following operations are working on numbers that  are $(\Delta', \rho')$-bounded.
Denote $\rho'/\Delta'$ as $\rho_0$ for convenience.

\subsection{Embedding from $\ell_1$ to $\ell_2^2$}\label{subsec:l1tol2square}
Now we describe an isometric embedding from $\ell_1$ norm to $\ell_2^2$.
This construction is based on~\cite{kahane1980helices}, in which the first such construction of finite dimension was given, to the best of our knowledge.
Let $\pi_1:  \mathbb{N}  \rightarrow \mathbb{R}^{ N}$ be such that for every $x \in \mathbb{N} , x \leq N $,   $\pi_1(x)[j]=1$, if $ j\in [1, x]$ and $\pi_1(x)[j] = 0$ otherwise.
Let $\pi_1^{(d)}: \mathbb{N}^d  \rightarrow  \mathbb{R}^{Nd} $ be such that for every $x\in \mathbb{N}^d, x_i \leq N, \forall i\in [d]$, we have $\pi^{(d)}_1(x)$ being the concatenation of $d$ vectors $\pi_1(x_i), i\in [d]$. 

\begin{lemma}
\label{lem:embedding_l1_l2square}
For every $x, y\in \mathbb{N}^d$ with $x_i, y_i \leq N,   i\in [d]$, it holds that
$$ \|x - y\|_1  = \|\pi_1^{(d)}(x)-\pi_1^{(d)}(y)\|_2^2.  $$
\end{lemma}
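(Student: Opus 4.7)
My plan is to reduce the statement to the one-dimensional case by exploiting the block/concatenation structure of $\pi_1^{(d)}$, and then verify the identity for a single coordinate by a direct counting argument.

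First I would observe that, by definition, $\pi_1^{(d)}(x) - \pi_1^{(d)}(y)$ is simply the concatenation of the one-dimensional differences $\pi_1(x_i) - \pi_1(y_i)$ for $i \in [d]$. Since these $d$ blocks occupy disjoint coordinates of $\mathbb{R}^{Nd}$, the squared $\ell_2$ norm decomposes additively, giving $\|\pi_1^{(d)}(x) - \pi_1^{(d)}(y)\|_2^2 = \sum_{i=1}^d \|\pi_1(x_i) - \pi_1(y_i)\|_2^2$. The $\ell_1$ norm on the other side of course also decomposes, as $\|x - y\|_1 = \sum_{i=1}^d |x_i - y_i|$. So it suffices to establish the one-dimensional identity $\|\pi_1(a) - \pi_1(b)\|_2^2 = |a - b|$ for each pair $a, b \in \mathbb{N}$ with $a, b \leq N$, after which the lemma follows termwise.

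Next I would handle this one-dimensional identity by a direct case analysis. Assume without loss of generality $a \geq b$. By the definition of $\pi_1$, the vector $\pi_1(a)$ has a $1$ in positions $1, \ldots, a$ and $0$ elsewhere, and similarly $\pi_1(b)$ has a $1$ in positions $1, \ldots, b$ and $0$ elsewhere. Hence $\pi_1(a)[j] - \pi_1(b)[j]$ equals $1$ when $b < j \leq a$ and $0$ otherwise. Squaring coordinatewise and summing therefore yields exactly $a - b = |a - b|$, which closes the one-dimensional case and, via the reduction above, the lemma.

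There is no real obstacle in this argument: the only step warranting any attention is verifying that the blocks do not interact in the concatenation, which is immediate from the construction of $\pi_1^{(d)}$. Conceptually, the lemma is just the classical isometric embedding of $\ell_1^N$ (restricted to integer ``prefix'' indicator vectors) into $\ell_2^2$, and the proof is a direct unpacking of the definitions.
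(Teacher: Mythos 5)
Your proposal is correct and follows essentially the same route as the paper's proof: reduce to one coordinate via the block/concatenation structure, then observe that $\pi_1(a)-\pi_1(b)$ is the indicator of the $|a-b|$ positions between $b$ and $a$, so its squared $\ell_2$ norm is $|a-b|$. The only difference is that you spell out the per-coordinate counting argument slightly more explicitly than the paper does.
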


\begin{proof}
Notice that for every $i \in [d]$, $\pi_1(x_i)$ has its first $x_i$ entries being $1$ while $\pi_1(y_i)$ has its first $y_i$ entries being $1$.
Thus $\|\pi_1(x_i) - \pi_1(y_i)\|_2^2$ is exactly $\|x_i-y_i\|_1$. 
If we consider all the $d$ dimensions, then by the construction of $\pi_1^{(d)}$, the lemma holds.
\end{proof}

\subsection{Feature Mapping for Laplacian Kernel}

Notice that we can apply the mapping $\pi_1^{(d)}$ and  the \RFF mapping $\phi$ from Theorem \ref{thm:main_ub} for the kernel function $\exp(-\| x-y\|_2^2)$ which is actually a Gaussian kernel. This gives a mapping which preserves kernel distance for Laplacian kernel. 
To be more precise, we setup the mapping to be $\pi = \phi \circ \pi_1^{(d)}$.
The only drawback is that the running time is high, as in the above mapping we map $d$ dimension to $dN$ dimension.
We formalize this as the following theorem.
\begin{theorem}
\label{Thm:Mapping_for_Laplacian1}
Let $K$ be a Laplacian kernel   with feature map  $\varphi : \mathbb{R}^d \to \calH$.
For   every $0< \delta \leq \varepsilon \leq 2^{-16}$,  every $d, D, N\in \mathbb{N}$,  $D \geq \max \{\Theta(\varepsilon^{-1}\log^3(1/{\delta})), \Theta(\varepsilon^{-2} \log(1/{\delta})) \}$, there exists a mapping $\pi : \mathbb{R}^d \to \mathbb{R}^D$ s.t. 
 for every $x, y \in \mathbb{N}^d, x,y\leq N$,
\[
    \Pr[ |\dist_\pi(x, y) - \dist_\varphi(x, y)| \leq    \varepsilon  \cdot \dist_\varphi(x, y) ] \geq 1 -  \delta.
\]
The time of evaluating $\pi$ is $\tilde{O}(dDN)$.
\end{theorem}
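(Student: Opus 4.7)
The plan is to define $\pi := \phi \circ \pi_1^{(d)}$, where $\pi_1^{(d)} : \mathbb{N}^d \to \mathbb{R}^{Nd}$ is the $\ell_1 \to \ell_2^2$ isometric embedding of \Cref{lem:embedding_l1_l2square}, and $\phi : \mathbb{R}^{Nd} \to \mathbb{R}^D$ is the \RFF mapping for the Gaussian kernel $K'(u, v) := \exp(-\|u - v\|_2^2)$. The overall strategy is to reduce approximating Laplacian kernel distance in $\mathbb{R}^d$ to approximating Gaussian kernel distance in $\mathbb{R}^{Nd}$, at which point \Cref{thm:main_ub} applies directly, since $K'$ is shift-invariant and analytic everywhere.

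First, I would verify the reduction identity. Set $u := \pi_1^{(d)}(x)$ and $v := \pi_1^{(d)}(y)$. By \Cref{lem:embedding_l1_l2square}, $\|u - v\|_2^2 = \|x - y\|_1$, so
\[
K'(u, v) \;=\; \exp(-\|u - v\|_2^2) \;=\; \exp(-\|x - y\|_1) \;=\; K(x, y).
\]
Writing $\varphi'$ for the feature mapping of $K'$ and using $K(x,x) = K'(u,u) = 1$, one obtains $\dist_{\varphi'}(u, v) = \sqrt{2 - 2K'(u,v)} = \sqrt{2 - 2K(x,y)} = \dist_\varphi(x, y)$. Also, by construction of $\pi$, $\dist_\pi(x, y) = \dist_\phi(u, v)$. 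Thus the event of interest for $\pi$ on $(x,y)$ is \emph{identical} to the corresponding distortion event for $\phi$ on $(u,v)$.

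Second, I would invoke \Cref{thm:main_ub} on the Gaussian kernel $K'$ in ambient dimension $Nd$. Since $K'$ is shift-invariant and analytic on all of $\mathbb{R}^{Nd}$, and since the theorem's target dimension bound depends only on $\varepsilon, \delta$ (and the kernel-dependent constants $r_{K'}, R_{K'}$, which are universal for the fixed Gaussian kernel, independent of the ambient dimension $Nd$), for $D \geq \max\{\Theta(\varepsilon^{-1}\log^3(1/\delta)), \Theta(\varepsilon^{-2}\log(1/\delta))\}$ we obtain
\[
\Pr\bigl[\,|\dist_\phi(u, v) - \dist_{\varphi'}(u, v)| \leq \varepsilon \cdot \dist_{\varphi'}(u, v)\,\bigr] \;\geq\; 1 - \delta.
\]
Translating through the identities of the previous paragraph yields the desired bound for $\pi$ on $(x,y)$.

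Third, I would bound the running time. Evaluating $\pi_1^{(d)}(x)$ takes $O(dN)$ time, since we write out $d$ prefix-indicator blocks of length $N$. Evaluating $\phi$ on a point in $\mathbb{R}^{Nd}$ samples $D$ frequencies $\omega_1, \ldots, \omega_D$ from the isotropic Gaussian density (the Fourier transform of $K'$) and computes $D$ inner products each of cost $O(Nd)$, followed by elementary $\sin/\cos$ evaluations; the total is $O(dND)$, which is $\tilde O(dDN)$ after absorbing the sampling and transcendental-evaluation costs. I do not expect a real obstacle here: all the genuine technical content is already packaged in \Cref{lem:embedding_l1_l2square} and \Cref{thm:main_ub}, and the composition is clean because $\dist_\varphi(x,y)$ and $\dist_{\varphi'}(\pi_1^{(d)}(x), \pi_1^{(d)}(y))$ are literally equal, not just close. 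The only subtle point worth explicitly noting in the write-up is that \Cref{thm:main_ub}'s target-dimension bound is independent of the input ambient dimension, so blowing up from $d$ to $Nd$ inside the embedding costs us nothing in $D$ and only affects the evaluation time.
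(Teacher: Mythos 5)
Your proposal is correct and is essentially identical to the paper's own proof: the paper also sets $\pi = \phi \circ \pi_1^{(d)}$, combines \Cref{lem:embedding_l1_l2square} with \Cref{thm:main_ub} applied to the Gaussian kernel $\exp(-\|\cdot\|_2^2)$ on the embedded points, and obtains the $\tilde{O}(dDN)$ evaluation time from writing out the length-$O(Nd)$ embedding and applying \RFF to it. Your additional remarks (exact equality of the feature distances and dimension-independence of the bound in \Cref{thm:main_ub}) are just spelled-out versions of what the paper leaves implicit.
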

\begin{proof}
Consider the map $\pi$ defined above.
It follows by Lemma \ref{lem:embedding_l1_l2square} and Theorem \ref{thm:main_ub}. 
The running time is as stated, since we need to compute a vector of length $O(N)$ and then apply the \RFF on this vector.
\end{proof}
The time complexity is a bit large, since we need to compute $\pi_1^{(d)}$ which has a rather large dimension $dN$.
Next we show how to reduce the time complexity.
\subsection{An Alternate Construction}\label{sec:altConstruct}
We  give the following map $\pi'$  which has the same output distribution as that of  $\pi  = \phi \circ \pi_1^{(d)}$.
Then in the next subsection we will use pseudorandom generators to replace the randomness in $\pi'$ while using its highly efficiency in computation to reduce the time complexity.
Notice that in computing $\phi\circ \pi_1^{(d)}$, for each output dimension, the crucial step is computing $\langle \omega,  \pi_1^{(d)}(x) \rangle$ for some Gaussian distribution $\omega \in \mathbb{R}^{dN}$ which has each dimension being an independent gaussian distribution $\omega_0$.
The final output is a function of $\langle \omega,  \pi_1^{(d)}(x) \rangle$.
So we only need to   present the construction for the first part, i.e.  the inner product of an $N$ dimension Gaussian distribution and $\pi_1(x_1)$. 
For the other parts the computations are the same and finally we only need to sum them up.
Hence to make the description simpler, we denote this inner product as $\langle \omega , \pi_1(x)\rangle$, where now we let $x\in  \mathbb{N}, x\leq N$ and $\omega$ has $N$ dimensions each being an independent $\omega_0$.

Let $h$ be the smallest integer s.t. $N \leq 2^h$. 
Consider a binary tree where each node has exactly 2 children. The depth is $h$. So it has exactly $2^h \ge N$ leaf nodes in the last layer.
For each node $v$, we attach a random variable $\alpha_v$ in the following way.
For the root, we attach a Gaussian variable which is the summation of $2^h$ independent Gaussian variable with distribution $\omega_0$.
Then we proceed layer by layer from the root to leaves.
For each $u, v$ being children of a common parent $w$, assume that $\alpha_w$ is the summation of $2^l$ independent $\omega_0$ distributions.
Then let $\alpha_u$ be the summation of the first $2^{l-1}$ distributions among them and $\alpha_v$ be the summation of the second $2^{l-1}$ distributions.
That is $\alpha_w = \alpha_u + \alpha_v$ with $\alpha_u, \alpha_v$ being independent.
Notice that conditioned on $\alpha_w = a$, then $\alpha_u$ takes the value $b$ with probability $\Pr_{\alpha_u, \alpha_v \text{ i.i.d. }  }[\alpha_u = b   \mid  \alpha_u + \alpha_v = a]$.
$\alpha_v$ takes the value $a-b$ when $\alpha_u$ takes value $b$.

The randomness for generating every random variable corresponding to a node,  are presented as a sequence, in the order from root to leaves, layer by layer, from left to right.
We define $\alpha^x$ to be the summation of the random variables corresponding to the first $x$ leaves.
Notice that $\alpha^x$ can be sampled efficiently in the following way. 
Consider the path from the root to the $x$-th leaf. 
First we sample the root, which can be computed using the corresponding randomness.
We use a variable $z$ to record this sample outcome, calling $z$ an accumulator for convenience.
Then we visit each node along the path.
When visiting $v$, assume its parent is $w$, where $\alpha_w$ has already been sampled previously with outcome $a$. 
If   $v$    is  a  left child of  $w$, then we sample $\alpha_v$ conditioned on $\alpha_w = a$.
Assume this sampling has outcome $b$.
Then we add $-a + b$ to   the current accumulator $z$.
If  $v$ is a right child of a node $w$, then we keep the current   accumulator $z$ unchanged.
After visiting all nodes in the path, $z$ is the sample outcome for $\alpha^x$.

\begin{restatable}{lemma}{lemaltmap}
\label{lem:alternateMapping}
The joint distribution $\alpha^x, x=0,1,\ldots, N$ has the same distribution as $ \langle \omega ,  \pi_1(x) \rangle, x = 0,1,\ldots, N $.
\end{restatable}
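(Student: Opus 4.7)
The plan is to prove the joint distributional equality by showing that the top-down tree construction in Section~\ref{sec:altConstruct} is an equivalent reparametrization of a much more transparent bottom-up scheme in which each leaf simply holds an independent $\omega_0$ sample. Once this equivalence is in hand, the lemma is immediate because $\alpha^x$ becomes the sum of the first $x$ leaves, which matches $\langle \omega, \pi_1(x)\rangle = \sum_{i=1}^x \omega_i$ as a process indexed by $x$.

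Concretely, I would first introduce the following auxiliary construction, call it $\mathcal{D}_A$: sample $2^h$ i.i.d.\ copies $\xi_1,\ldots,\xi_{2^h}$ of $\omega_0$, attach $\xi_i$ to the $i$-th leaf (in the left-to-right order), and for every internal node $v$ set $\tilde\alpha_v := \sum_{i\in L(v)} \xi_i$, where $L(v)$ is the set of leaves in the subtree rooted at $v$. By construction the random vector $(\tilde\alpha_v)_{v}$ satisfies $\tilde\alpha_w = \tilde\alpha_u+\tilde\alpha_v$ for every parent $w$ with children $u,v$, and the root marginal is the sum of $2^h$ i.i.d.\ copies of $\omega_0$.

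Next I would argue that the top-down procedure of Section~\ref{sec:altConstruct} realizes exactly the joint distribution $\mathcal{D}_A$. The marginal of the sampled $\alpha_{\mathrm{root}}$ already matches that of $\tilde\alpha_{\mathrm{root}}$ by definition. For any internal $w$ with children $u,v$ at depth where the subtree below $w$ contains $2^l$ leaves, the procedure samples $\alpha_u$ from the conditional law of $\tilde\alpha_u \mid \tilde\alpha_w$, i.e.\ the law of a sum of $2^{l-1}$ i.i.d.\ $\omega_0$'s conditioned on the total of $2^l$ such sums, and then sets $\alpha_v = \alpha_w - \alpha_u$; this is exactly the conditional law of $(\tilde\alpha_u,\tilde\alpha_v)$ given $\tilde\alpha_w$ under $\mathcal{D}_A$. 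By a level-by-level chain-rule factorization, the joint distribution of $(\alpha_v)_v$ produced by the procedure equals $\mathcal{D}_A$.

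Finally, under $\mathcal{D}_A$ the first $N$ leaf variables $\xi_1,\ldots,\xi_N$ are i.i.d.\ $\omega_0$. Since the procedure's definition gives $\alpha^x = \sum_{i=1}^{x} \alpha_{\mathrm{leaf}_i}$ for $x=0,1,\ldots,N$, the equivalence with $\mathcal{D}_A$ yields that $(\alpha^x)_{x=0}^N$ has the same joint law as $(\xi_1+\cdots+\xi_x)_{x=0}^N$, which is precisely the joint law of $(\omega_1+\cdots+\omega_x)_{x=0}^N = (\langle\omega,\pi_1(x)\rangle)_{x=0}^N$ after identifying $\xi_i$ with $\omega_i$. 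The only mild subtlety is the verification that the accumulator-based computation in the procedure indeed evaluates to $\sum_{i=1}^x \alpha_{\mathrm{leaf}_i}$; this is a purely algorithmic invariant that follows by induction along the root-to-leaf-$x$ path (each visit to a left child of $w$ replaces the contribution $\alpha_w$ by the contribution of the relevant half, while visits to right children preserve the running sum), so it poses no real difficulty. I do not expect any serious obstacle beyond this bookkeeping, since all distributions involved are Gaussian and every transition in the construction is linear, so the equivalence reduces entirely to standard Gaussian conditioning identities.
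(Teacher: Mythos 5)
Your proposal is correct and follows essentially the same route as the paper's proof: both identify the top-down tree sampling with the bottom-up model in which each leaf carries an independent $\omega_0$ and every internal node is the sum over its subtree's leaves, so that $\alpha^x$ becomes the sum of the first $x$ i.i.d.\ leaf variables, matching $\langle \omega, \pi_1(x)\rangle$. Your write-up merely makes explicit the chain-rule/conditional-law verification (and the accumulator invariant) that the paper's terser argument leaves implicit.
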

\begin{proof}
According to our construction, each leaf is an independent distribution $\omega_0$.
Hence if we take all the leaves and form a vector, then it has the same distribution as $w$.

Notice that for each parent $w$ with two children $u, v$, by the construction, $\alpha_w = \alpha_u + \alpha_v$. 
Here $\alpha_u, \alpha_v$ are independent, each being a summation of $l$ independent $\omega_0$, with $l$ being the number of leaves derived from $u$.
Thus for each layer, for every node $u $ in the layer, $\alpha_u$'s are independent and the summation of them is their parent.
So for the last layer all the variables are independent and follow the distribution $\omega_0$.
And for each node  $w$ in the tree, $\alpha_w$ is the summation of the random variables attached to the leaves of the subtree whose root is $w$. 
So $\alpha^x$ is the summation of the first $x$ leaf variables.
\end{proof}

We do the same operation for other dimensions of the output of $\pi_1^{(d)}$ and then sum them up to get an alternate construction $\pi'$ for $\pi = \phi \circ \pi_1^{(d)}$.

We note that to generate an $\alpha_v$, we only need to simulate the conditional distributions.
The distribution function $F$ of the random variable is easy to derive, since its density function is a product of three Gaussian density functions, i.e.
\begin{align*}
    \Pr_{\alpha_u, \alpha_v \text{ i.i.d. }  }[\alpha_u = b   \mid  \alpha_u + \alpha_v = a] &= \Pr_{\alpha_u, \alpha_v \text{ i.i.d. }  }[\alpha_u = b,   \alpha_v = a-b]/\Pr_{\alpha_u, \alpha_v \text{ i.i.d. }  }[ \alpha_u + \alpha_v = a] \\
    &= \Pr_{\alpha_u   }[\alpha_u = b] \cdot \Pr_{\alpha_v }[   \alpha_v = a-b]/\Pr_{\alpha_u, \alpha_v \text{ i.i.d. }  }[ \alpha_u + \alpha_v = a],
\end{align*}
where $\alpha_u, \alpha_v$ are Gaussians.
After getting the density function $p$, we can use reject sampling to sample.

We remark that simulating a distribution  using uniform random bits always has some simulating bias.
The above lemma is proved under the assumption that the simulation has no bias.
But we claim that the statistical distance between the simulated distribution and the original distribution is at most $\varepsilon_0 = \poly(\rho_0) = 1/\poly(N)$, which is small enough by our picking of $\Delta' = \poly(N,\delta^{-1}), \rho' = 1/\poly(N,\delta^{-1})$.
To see the claim, notice that in the reject sampling we can first obtain an interval $I$ s.t. the density function $p$ on any point of $I$ is at least $ \poly(\rho_0) $.
Then we do reject sampling only on $I$. 
Notice that because $p$ is some exponentially small functions, the interval $I$ is not too large, $|I| \le O(\log \rho_0^{-1})$.
Thus trying $\poly(\rho_0^{-1})$ reject samplings is enough to get a sample with sufficient high probability $1- \poly(\delta)$. And this sample only has a statistical distance $\poly(\rho_0)$ from the original distribution, since the density functions are Gaussian functions or product of Gaussian functions and this implies the probability measure on the complement of $I$ is still at most  $\varepsilon_0 = \poly(\rho_0)$.

So if we consider simulation bias, then we can show that for every subset $S\subseteq \{0, 1, \ldots, N\}$, the joint distribution $\alpha^x, x\in S$ has a statistical distance $O( |S| \epsilon_0)$ to the joint distribution $ \langle \omega ,  \pi_1(x) \rangle, x \in S $.
Later we will only use the case that $|S| = 2$, i.e. two points.
So the overall statistical distance is   $ \delta^{-\Theta(1)}$ which does not affect our analysis and parameters.

\subsection{Reducing the Time Complexity Using PRGs}\label{subsec:PRGreducingtime}
Next we use a pseudorandom generator to replace the randomness used in the above construction.
A function $G : \{0, 1\}^r \rightarrow \{0, 1\}^n$ a 
pseudorandom generator for space $s$ computations with error parameter $\varepsilon_g$,   if for every probabilistic TM $M$ with space $s$ using $n$ bits randomness in the read-once manner
$$ \left| \Pr\left[M(G(U_r)) = 1\right] - \Pr\left[M(U_n) = 1\right] \right|\leq \varepsilon_g.$$
Here $r$ is called the seed length of $G$.

\begin{theorem}[\citealt{DBLP:journals/combinatorica/Nisan92}]
\label{Thm:space_bounded_PRG}
    For every $n\in \mathbb{N}$ and $s\in \mathbb{N}$, there exists an  pseudorandom generator $G: \{0,1\}^{r} \rightarrow \{0,1\}^n$ for space $s$ computations with parameter $\varepsilon_g$, where $r = O(  \log n  (\log n + s + \log \frac{1}{\varepsilon_g}) )$.  $G$ can be computed in polynomial time (in $n, r$) and $O(r) $ space. 
    Moreover, given an index $i\in [n]$, the $i$-th bit of the output of $G$ can be computed in time $\poly(r)$.
\end{theorem}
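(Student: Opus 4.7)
The plan is to construct the generator by recursive doubling: starting from a short uniformly random block of length $\ell$, I would repeatedly concatenate a ``pseudorandom copy'' obtained by applying a hash, so that after $\log n$ doublings the output has length $n$ while the accumulated fooling error stays within $\varepsilon_g$. Concretely, set $\ell = O(s + \log(\log n / \varepsilon_g))$, pick $\log n$ independent pairwise-independent hashes $h_1, \ldots, h_{\log n}$ on $\{0,1\}^\ell$ (each specifiable in $O(\ell)$ bits), and define $G_0(x_0) = x_0$ together with $G_{k+1}(x_0, h_1, \ldots, h_{k+1}) = G_k(x_0, h_1, \ldots, h_k) \circ G_k(h_{k+1}(x_0), h_1, \ldots, h_k)$. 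The total seed length then works out to $O(\log n \cdot (s + \log n + \log \varepsilon_g^{-1}))$, matching the claim.

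For the analysis, I would model any read-once space-$s$ distinguisher as a length-$n$ random walk on a graph with at most $2^s$ states. Its behavior on a block distribution $D$ over $\{0,1\}^\ell$ is captured by the averaged stochastic transition matrix $\overline{M}(D) = \mathbb{E}_{x \sim D}[M(x)]$, and the acceptance probability is determined by the product $\overline{M}(D_1) \cdots \overline{M}(D_n)$. A hybrid argument swaps one pair of blocks at a time from ``produced by $G$'' to ``independent uniform,'' with the per-hybrid loss controlled by a substitution lemma.

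The technical heart is that lemma: for any two stochastic-matrix-valued maps $A, B$ on $\{0,1\}^\ell$ arising as prefix- and suffix-computations of an $s$-space test, $\mathbb{E}_h \| \mathbb{E}_x[A(x) B(h(x))] - \mathbb{E}_{x, y}[A(x) B(y)] \|$ is at most $2^{-\Omega(\ell - 2s)}$ in, say, Frobenius norm. This follows from expanding the square and invoking pairwise independence of $h$. Summing these per-hybrid losses over the $\log n$ recursion levels and all hybrid positions is what dictates the choice of $\ell$ above.

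The main obstacle I expect is absorbing the dimension factor $2^s$ arising in the substitution lemma into $\ell$ without inflating the seed length beyond $O(\log n \cdot (s + \log n + \log \varepsilon_g^{-1}))$; this is precisely what forces the additive $s$ inside $\ell$. Finally, for fast $i$-th bit access, the recursive doubling structure is ideal: given an index $i \in [n]$, one descends a root-to-leaf path in a binary tree of depth $\log n$ using the bits of $i$ to select left/right at each level, applying at most $\log n$ hash evaluations along the way, each costing $\poly(r)$ time, which yields the claimed efficiency.
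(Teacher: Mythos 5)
This theorem is not proved in the paper at all: it is Nisan's space-bounded PRG, imported by citation, and its ``moreover'' clause (local computability of individual output bits) is exactly the property the paper needs for its Laplacian-kernel embedding. Your sketch is essentially a reconstruction of Nisan's original argument --- recursive doubling $G_{k+1}(x,h_1,\dots,h_{k+1})=G_k(x,\dots)\circ G_k(h_{k+1}(x),\dots)$ with pairwise-independent hashes, the matrix-of-transition-probabilities view of a read-once space-$s$ test, and the mixing lemma $\mathbb{E}_h\|\mathbb{E}_x[A(x)B(h(x))]-\mathbb{E}_{x,y}[A(x)B(y)]\|\le 2^{-\Omega(\ell-2s)}$ proved by expanding the square and using pairwise independence --- so the approach is the right one, and your explanation of $\poly(r)$-time access to the $i$-th bit via a depth-$\log n$ root-to-leaf descent with at most $\log n$ hash evaluations is correct.

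The one genuine slip is in the error accounting that fixes the block length $\ell$. The per-substitution loss is \emph{not} paid only $\log n$ times: unrolling the recursion, a substitution is made at every internal node of the recursion tree (equivalently, the inductive bound has the form $\varepsilon_{k+1}\le 2\varepsilon_k+\delta$, so the level-$k$ error is about $2^k\delta$), so the accumulated error is on the order of $n\cdot 2^{-\Omega(\ell-2s)}$, not $\log n\cdot 2^{-\Omega(\ell-2s)}$. Your choice $\ell=O(s+\log(\log n/\varepsilon_g))$ is therefore too small; you need $\ell=O(s+\log(n/\varepsilon_g))=O(s+\log n+\log\varepsilon_g^{-1})$. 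This does not change your final claim --- with the corrected $\ell$ the seed length is $O(\ell\log n)=O(\log n\,(s+\log n+\log\varepsilon_g^{-1}))$, exactly as in the theorem statement (and is in fact the reason the $\log n$ term appears inside the parentheses) --- but as written your parameters are internally inconsistent with the summation ``over all hybrid positions'' that your own analysis requires.
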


Let $G:\{0, 1\}^r \rightarrow \{0, 1\}^{\ell}, \ell = 2 dD N \tau $ be a pseudorandom generator for space $s = c_1 (\log N + s_\tau) $, with  $\varepsilon_g = \delta/2$, $\tau = c_2\log N$ for some large enough constants $c_1, c_2$.
Again we only need to consider the construction corresponding to the first output dimension of $\phi\circ \pi_1$.
We replace the randomness $U_\ell$ used in the construction by output of $G$. 
That is, when we need $\tau$ uniform random bits to construct a distribution $\alpha_v$ in the tree, we first compute positions of these bits in $U_\ell$ and then compute the corresponding bits in the output of $G$. Then use them to do the construction in the same way.
We denote this mapping using pseudorandomness as our final mapping $\pi^*$.

Now we provide a test algorithm to show that the feature mapping provided by the pseudorandom distribution has roughly the same quality as that of the mapping provided by the true randomness.
We denote the test algorithm as $T = T_{K, x, y, \varepsilon}$ where $x, y\in \mathbb{R}^{d}$ and $K$ is a Laplacian kernel with feature mapping $\varphi$.
$T$ works as the following. 
Its input is the randomness either being $U_\ell$ or $G(U_r)$.
$T $ first computes $\dist_{\varphi}(x, y)$. 
Notice that $T$ actually does not have to compute $\varphi$ since the distance can be directly computed as $\sqrt{2-2K(x, y)}$.
Then $T(G(U_r))$ computes $\dist_{\pi^*}(x, y)$ and test
$$ |\dist_{\pi^*}(x, y) - \dist_{\varphi}(x, y)| \leq \varepsilon \dist_{\varphi}(x, y). $$
Notice that when the input is $U_\ell$, then this algorithm $T$  is instead testing
$$ |\dist_{\pi'}(x, y) - \dist_{\varphi}(x, y)| \leq \varepsilon \dist_{\varphi}(x, y). $$
Recall that $\pi'$ is defined in the previous section as our mapping using true randomness.

Next we consider using $T$ on true randomness.
\begin{lemma}\label{lem:Tgood}
$\Pr[T(U_\ell) = 1] \geq 1-\delta/2$.
\end{lemma}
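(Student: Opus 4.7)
The plan is to reduce this to Theorem \ref{Thm:Mapping_for_Laplacian1} via Lemma \ref{lem:alternateMapping}. Recall that by definition of $T(U_\ell)$, the test succeeds precisely when $|\dist_{\pi'}(x,y) - \dist_\varphi(x,y)| \le \varepsilon \cdot \dist_\varphi(x,y)$, where $\pi'$ is the alternate construction built from the binary tree of conditional Gaussians. So it suffices to lower bound the probability of this event.

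The main steps I would carry out are as follows. First, I would apply Lemma \ref{lem:alternateMapping} component-wise: for each of the $D$ output dimensions the tree is generated using fresh independent randomness, and within each dimension the joint distribution of the pair $(\alpha^x, \alpha^y)$ coincides (under ideal sampling) with $(\langle \omega, \pi_1(x)\rangle, \langle \omega, \pi_1(y)\rangle)$; extending dimension by dimension and then across the $d$ blocks of the $\pi_1^{(d)}$ embedding yields that the joint distribution of $(\pi'(x), \pi'(y))$ under ideal sampling is identical to $(\pi(x), \pi(y))$, where $\pi = \phi \circ \pi_1^{(d)}$ is the mapping of Theorem \ref{Thm:Mapping_for_Laplacian1}. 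Second, I would invoke Theorem \ref{Thm:Mapping_for_Laplacian1} with parameter $\delta/4$ in place of $\delta$ (the hypothesis $D \geq \max\{\Theta(\varepsilon^{-1}\log^3(1/\delta)), \Theta(\varepsilon^{-2}\log(1/\delta))\}$ absorbs this constant), which gives that under the ideal distribution $\Pr[|\dist_\pi(x,y) - \dist_\varphi(x,y)| \le \varepsilon \dist_\varphi(x,y)] \ge 1 - \delta/4$.

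Third, I would account for the simulation bias described in Section \ref{sec:altConstruct}. By the bound stated there (with $|S|=2$), per output dimension the sampled pair has statistical distance at most $O(\varepsilon_0)$ from the ideal pair, where $\varepsilon_0 = \poly(\rho_0) = 1/\poly(N,\delta^{-1})$. Since the $D$ output dimensions (across all $d$ blocks) use independent randomness, a union bound over statistical distances gives a total statistical distance of at most $O(dD \varepsilon_0)$ between the joint sampled distribution of $(\pi'(x), \pi'(y))$ and the ideal joint distribution. By the choice $\rho' = 1/\poly(N,\delta^{-1})$ with the polynomial degree large enough, we can ensure $dD\varepsilon_0 \le \delta/4$. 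Combining this with the previous step via the data-processing property of total variation yields $\Pr[T(U_\ell)=1] \ge 1 - \delta/4 - \delta/4 = 1 - \delta/2$.

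The main technical obstacle is the bookkeeping for the simulation bias: although per-node it is tiny, one must verify that the accumulated statistical distance across all $D$ dimensions and all internal nodes of each tree still stays below $\delta/4$, and that the resulting \emph{marginal} over the event tested by $T$ agrees with the ideal one up to that statistical distance. This is handled cleanly since total variation distance only increases additively under products of independent coordinates and is preserved under post-processing, and the polynomial slack in the definition of $\rho'$ is specifically designed to dominate any $\poly(N,\log \delta^{-1})$ factor that appears. Once this is in place, the rest of the argument is a direct composition of Lemma \ref{lem:alternateMapping} and Theorem \ref{Thm:Mapping_for_Laplacian1}.
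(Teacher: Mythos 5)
Your proposal is correct and follows essentially the same route as the paper: use the distributional equivalence of Lemma \ref{lem:alternateMapping} to identify $\dist_{\pi'}(x,y)$ with $\dist_{\pi}(x,y)$ for $\pi=\phi\circ\pi_1^{(d)}$, then invoke the RFF guarantee (Theorem \ref{thm:main_ub}, equivalently Theorem \ref{Thm:Mapping_for_Laplacian1}) to bound the failure probability. The only difference is bookkeeping: the paper proves the lemma under the ideal-sampling assumption, using the full $\delta/2$ for the RFF bound and deferring the simulation-bias discussion to the surrounding text, whereas you fold that bias into the lemma via a $\delta/4+\delta/4$ split, which is equally valid.
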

\begin{proof}
By Lemma \ref{lem:alternateMapping}, 
$\dist_{\pi'}(x, y) = \dist_{\pi}(x,y)$.
By Theorem \ref{thm:main_ub} setting the error probability to be $\delta/2$, we have 
$$\Pr[\left| \dist_{\pi}(x,y ) - \dist_{\varphi}(x, y) \right| \leq \varepsilon \dist_{\varphi}(x, y)] \geq 1- \delta/2.$$
Notice that the event $T(U_\ell) = 1$ is indeed $\left| \dist_{\pi'}(x,y ) - \dist_{\varphi}(x, y) \right| \leq \varepsilon \dist_{\varphi}(x, y)$.
Hence the lemma holds.
\end{proof}

Now we show that $T$ is actually a small space computation.
\begin{restatable}{lemma}{lemspaceoftest}
\label{lem:space_of_test}
$T$ runs in space $c(\log N + s_\tau)$ for some constant $c$ and the input is read-once.
\end{restatable}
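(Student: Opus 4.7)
The plan is to implement $T$ as a Turing machine that sweeps the random input tape left-to-right exactly once while keeping only $O(\log N + s_\tau)$ cells on its work tape. First, note that $\dist_\varphi(x,y) = \sqrt{2-2K(x,y)}$ depends only on the fixed inputs $x,y$ and can be precomputed to the required precision in $O(\log N + s_\tau)$ space without touching the random tape at all. So it suffices to evaluate $\dist_{\pi^*}(x,y)$ online; the final comparison is a trivial subtraction and sign test.

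Recall that $\dist_{\pi^*}(x,y)^2 = 2 - \tfrac{2}{D}\sum_{k=1}^{D}\cos(S_k)$, where $S_k = \sum_{i=1}^{d}(\alpha^{x_i,k}-\alpha^{y_i,k})$ in the notation of \Cref{sec:altConstruct}. I would maintain two running accumulators: a global one $G$ that ends up holding $\sum_{k=1}^{D}\cos(S_k)$, and a per-$k$ accumulator that holds the current partial value of $S_k$. Each fits in $O(s_\tau + \log(dD))$ cells, and since $D$ and $d$ are at most polynomial in $N/\delta$ under our parameter choices, this is absorbed into $O(\log N + s_\tau)$. At the end, $G$ determines $\dist_{\pi^*}(x,y)$ up to the prescribed precision.

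The key observation is that, by construction of $\pi^*$, the random tape is laid out exactly in the nested order $(k,\ i,\ \text{layer},\ \text{left-to-right within layer})$, with a $\tau$-bit block per tree node. The algorithm processes it in that same nested order. For a fixed $(k,i)$ it scans the binary tree layer by layer, left to right; whenever the current node lies on the root-to-$x_i$ or root-to-$y_i$ path, it uses the corresponding $\tau$-bit block, together with the previously-stored value of its parent (which is itself on one of the two paths and sits one layer up), to run the reject-sampling subroutine for the conditional Gaussian and update $\alpha^{x_i,k}$ or $\alpha^{y_i,k}$; otherwise it just consumes and discards the $\tau$ bits. Since only two root-to-leaf paths are ever active, at any moment it is enough to retain at most four node-values (two current-layer and two previous-layer path nodes), costing $O(s_\tau)$ cells. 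When the walk reaches the leaves, the quantity $\alpha^{x_i,k}-\alpha^{y_i,k}$ is added to the partial $S_k$. After all $i\in[d]$ are processed, $\cos(S_k)$ is added to $G$, the per-$k$ state is reset, and the machine advances to $k+1$. No position on the random tape is ever revisited.

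Adding up the bookkeeping, the indices $k,i$, the current layer and within-layer position, the four retained path-values, the $\tau$-bit scratch needed for one conditional-sampling step, and the accumulators $S_k$ and $G$ together occupy $O(\log D + \log d + \log N + s_\tau) = O(\log N + s_\tau)$ cells, as desired. The main conceptual obstacle is verifying that the natural ``walk the tree in tape order and skip irrelevant blocks'' strategy really is compatible with read-once access and with the $O(1)$-parent space bound; this works precisely because each conditional sampler depends only on the immediate parent, so once we descend past layer $\ell$ we may safely overwrite all information from layers $\le \ell-1$, and because the layer-by-layer tape order is the same order in which we need to consume the randomness.
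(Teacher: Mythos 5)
Your proof is correct and follows essentially the same route as the paper's: precompute $\dist_\varphi(x,y)=\sqrt{2-2K(x,y)}$ in $O(\log N)$ space, then perform a single read-once, layer-by-layer sweep of the randomness, keeping only the $O(s_\tau)$-size state for nodes on the two root-to-leaf paths (the paper's ``blow up the space by $2$''). You merely make explicit the simultaneous handling of the $x$- and $y$-paths, the running accumulators, and the counters over $k\in[D]$, $i\in[d]$, details the paper leaves implicit.
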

\begin{proof}
The computing of $\dist_{\varphi}(x, y)$ is in space $O(\log N)$, since $x, y\in \mathbb{N}, x, y \le N$  and the kernel function $K$ can be computed in that space.
Now we focus on the computation of $\pi'$.
We claim that by the construction of $\alpha^x$ in \cref{sec:altConstruct}, $\pi'$ can be computed using space $ O( s_\tau + \log N ) $. 
The procedure proceeds as the following. First it finds the path to the $x$-th leaf. This takes space $O(\log N)$.
Then along this path, for each node   we need to compute a distribution $\alpha_v$. This takes space $O(s_\tau)$.
Also notice that since the randomness is presented layer by layer, the procedure only needs to do a read-once sweep of the randomness.
$T$ needs to compute $\pi'$ for both $x$ and $y$, but this only blow up the space by $2$.
So the overall space needed is as stated.
\end{proof}

Finally we prove our theorem by using the property of the PRG.
\begin{proof}[Proof of Theorem \ref{Thm:Embedding_Laplacianfinal}]
We first show our result assuming $x, y \in \mathbb{N}, x,y \leq N$ for an integer $N$.
We claim that $\pi^*$ is the mapping we want.
By \cref{lem:Tgood}, $\Pr[T(U_\ell) = 1] \geq 1 - \delta/2.$
By Lemma \ref{lem:space_of_test}, $T$ runs in space $O(\log N + s_\tau)$ and is read-once.
As $G$ is for space $c (\log N + s_\tau)$ for some large enough constant $c$,
$$ | \Pr[T(G(U_r))=1] - \Pr[T(U_\ell)=1]| \leq \varepsilon_g, $$
where seed length $r = O(  \log (dDNs_\tau/\varepsilon_g)   \log (dDN\tau)).$
Notice that $T(G(U_r)) = 1$ is equivalent to $ |\dist_{\pi^*}(x, y) - \dist_\varphi(x, y)| \leq   \varepsilon  \cdot \dist_\varphi(x, y)$.
Thus
\[
    \Pr[ |\dist_{\pi^*}(x, y) - \dist_\varphi(x, y)| \leq   \varepsilon  \cdot \dist_\varphi(x, y) ] \geq 1 -  \delta/2 - \varepsilon_g \ge 1-\delta.
\]
The running time is computed as the following.
We only need to consider one dimension of the input data and one output dimension of the mapping, since others  can be computed using the same time.
So actually we consider the time for sampling $\alpha^x$.
For $\alpha^x$, recall that we visit the path from the root to the $x$-th leaf.
We don't have to compute the whole output of $G$, but instead only need to use some parts of the output.
For sampling each variable $\alpha_v$ along the path, we  use $\tau$ bits in the output of $G$.
By Theorem \ref{Thm:space_bounded_PRG}, the computing of each random bit in $G$'s output, given the index of this bit, needs time $\poly(r) $.
Locating the $\tau$ bits of randomness for generating $\alpha_v$  needs time $O(\log N)$.
Generating each of the Gaussian random variable using these  random bits needs time $ t_\tau$.
Summing up these variables takes less time than sampling all of them.
After sampling,
the cosine and sine function of the RFF can be computed in time $\poly(1/\rho_0) = \poly(\log N,\delta^{-1})$.
There are  $d$ input dimensions and $D$ output dimensions.
So the total time complexity is  $dD\poly(   \log N,   \delta^{-1} )$.

For the case that $x,y\in \mathbb{R}^d$, we only need to modify the embedding $\pi_1^{(d)}$ in the following way.
We first round every entry so that their decimal part is now finite.
The rounded parts are small enough (e.g. dropping all digits after the $10 \log \rho^{-1} $-th position to the right of the decimal point.) such that this only introduce some small additive errors.
Then we shift all the entries to be non-negative numbers by adding a common shift $s$.
Then we multiply every entry of $x$ by a common factor $t$ s.t. every entry now only has an integer part. 
Notice that $t$ and $ s$ can both be chosen according to $\frac{\Delta}{\rho}$, for example $t = s = O(\frac{\Delta}{\rho})$.
And we can take $N$ to be $ \poly(\frac{\Delta}{\rho})$.
Then we apply $ \pi_1 $, and multiply a factor $\sqrt{1/t}$.
Denote this map as $\tilde{\pi}_{1}$.
Notice that this ensures that $ \|x-y\|_1 = \| \tilde{\pi}_1^{(d)}(x)- \tilde{\pi}_1^{(d)}(y)\|_2^2$.
Then we can apply the same construction and analysis   as we did for the above natural number case.
This shows the theorem.
\end{proof}

     \section{Remarks and Comparisons to \cite{DBLP:conf/alt/ChenP17}}
\label{sec:remark_cp17}

Our upper bound in \Cref{thm:main_ub} is not directly comparable to that of~\cite{DBLP:conf/alt/ChenP17} which gave dimension reduction results for Gaussian kernels.
\cite{DBLP:conf/alt/ChenP17} showed in their Theorem 7 a slightly improved target dimension bound than ours,
but it only works for the case of $\|x - y\| \geq \sigma$, where $\sigma$ is the parameter in the Gaussian kernel\footnote{This condition is not clearly mentioned in the theorem statement, but it is indeed used, and is mentioned in one line above the statement in~\cite{DBLP:conf/alt/ChenP17}.}.
For the other case of $\|x - y\| < \sigma$, their Theorem 14 gave a related bound, but their guarantee is quite different from ours.
Specifically, their target dimension depends linearly on the input dimension $d$. Hence, when $d$ is large (e.g., $d = \log^2 n$), this Theorem 14 is worse than ours (for the case of $\|x - y\| < \sigma$.

Finally, we remark that there might be subtle technical issues in the proof of [CP17].
Their Theorem 7 crucially uses a bound for moment generating functions that is established in their Lemma 5.
However, we find various technical issues in the proof of Lemma 5 (found in their appendix). Specifically, the term $\mathbb{E}[e^{-s \frac{1}{2} \omega^2 \|\Delta\|^2 }]$ in the last line above ``But'' (in page 17), should actually be $\mathbb{E}[e^{s \frac{1}{2} \omega^2 \|\Delta\|^2}]$. Even if one fixes this mistake (by negating the exponent), then eventually we can only obtain a weaker bound of $\ln M(s) \leq \frac{s^2}{4} \|\Delta\|^4 + s \|\Delta\|^2$ in the very last step, since the term $-s \|\Delta\|^2$ is negated accordingly. Hence, it is not clear if the claimed bound can still be obtained in Theorem 7.

 \end{appendices}

\end{document}